\documentclass[10pt,twocolumn,letterpaper]{article}

\usepackage{cvpr}              

\usepackage[accsupp]{axessibility}  

\usepackage{color}
\usepackage{amsmath}
\usepackage{amsthm}

\usepackage{multirow}
\usepackage{colortbl}

\usepackage{makecell}

\usepackage{titletoc}
\usepackage{tocloft}
\newcommand{\fref}[1]{Fig.~\ref{#1}}
\newcommand{\sref}[1]{Sec.~\ref{#1}}
\newcommand{\tref}[1]{Table~\ref{#1}}

\newtheorem{proposition}{Proposition}%

\newtheorem{definition}{Definition}%

\allowdisplaybreaks

\definecolor{cvprblue}{rgb}{0.21,0.49,0.74}
\usepackage[pagebackref,breaklinks,colorlinks,allcolors=cvprblue]{hyperref}

\def\paperID{} 
\def\confName{CVPR}
\def\confYear{2026}

\title{Boosting Vision-Language-Action Finetuning with\\
Feasible Action Neighborhood Prior
}

\author{
Haochen Niu$^{1,*}$ \quad
Kanyu Zhang$^{1,*}$ \quad
Shuyu Yin$^{1}$ \quad
Qinghai Guo$^{2}$ \quad
Peilin Liu$^{1}$ \quad
Fei Wen$^{1,\dagger}$\\
$^{1}$
Shanghai Jiao Tong University, China\\
$^{2}$Huawei Technologies, China\\
{\tt\small \{haochen\_niu,kane\_zhang2024,shuyu.yin,liupeilin,wenfei\}@sjtu.edu.cn}, 
{\tt\small guoqinghai@huawei.com}\\
{\small $^{*}$Equal contribution \hspace{1em} $^{\dagger}$Corresponding author}
}

\begin{document}
\maketitle
\begin{abstract}
In real-world robotic manipulation, states typically admit a neighborhood of near-equivalent actions. That is, for each state, there exist a feasible action neighborhood (FAN) rather than a single correct action, within which motions yield indistinguishable progress. However, prevalent VLA training methodologies are directly inherited from linguistic settings and do not exploit the FAN property, thus leading to poor generalization and low sample efficiency. To address this limitation, we introduce a FAN-guided regularizer that shapes the model's  output distribution to align with the geometry of FAN. Concretely, we introduce a Gaussian prior that promotes locally smooth and unimodal predictions around the preferred direction and magnitude. In extensive experiments across both reinforced finetuning (RFT) and supervised finetuning (SFT), our method achieves significant improvement in sample efficiency, and success rate in both in-distribution and out-of-distribution (OOD) scenarios. By aligning with the intrinsic action tolerance of physical manipulation, FAN-guided regularization provides a principled and practical method for sample-efficient, and generalizable VLA adaptation.
\end{abstract}
    
\addtocontents{toc}{\protect\setcounter{tocdepth}{0}}
\section{Introduction}
\label{sec:intro}

Benefiting from advances in large-scale foundation models, vision-language-action (VLA) models have recently emerged as a unifying paradigm for general robotic manipulation \cite{brohan2023RT2, belkhale2024RTH, kim2024OpenVLA, kim2025FineTuning, black2024$p_0$, intelligence2025$p_05$}. A VLA model integrates vision perception, language understanding, and low-level control within a single model, which enables an end-to-end mapping from ``seeing and understanding'' to ``acting''. A common design of VLA models leverages an autoregressive language-model backbone, by discretizing each action dimension into a set of bins and encoding the motion as ``action tokens''. 

In practice, deploying a VLA model typically involves two stages:
(i) Pretraining on large-scale heterogeneous VLA data to learn a general pretrained model. 
(ii) finetuning on a specific robot platform or environment to adapt to the system's embodiment, sensory configuration, and physical dynamics.
For the second stage, there are mainly two finetuning paradigms: supervised finetuning (SFT) and reinforced finetuning (RFT). 
SFT treats offline demonstration trajectories as labeled sequences and uses them for supervised finetuning \cite{zhao2023Learning, kim2024OpenVLA}. 
In contrast, RFT does not require demonstrations but optimizes a policy using environment rewards through interaction \cite{mark2024Policy, li2025SimpleVLARL, zang2025RLinfVLA}. 
While SFT is more stable than RFT, it has been recently shown that RFT can yield better generalization \cite{liu2025What}.

Despite the difference in supervision, most VLA training and finetuning methods directly inherit the learning recipes of language models, e.g., next-token prediction with one-hot cross-entropy or its reinforcement learning (RL) analogues such as PPO~\cite{schulman2017Proximal} and GRPO~\cite{shao2024DeepSeekMath}. However, unlike in linguistic settings, physical actions possess intrinsic \textit{tolerance} and \textit{near-equivalence}, where a neighborhood of an action can yield indistinguishable task progress.

Yet, this fundamental difference between the linguistic and robotic domains is often overlooked. This oversight causes significant challenges for both common finetuning paradigms. When using SFT with a small task-specific dataset, which is a common scenario in practice, the model learns to collapse its probability mass onto the single demonstrated action, a clear sign of severe overfitting. Meanwhile, while RFT can eventually produce a broader action distribution for better generalization, the process is highly sample-inefficient. Without an explicit regularization that exploits the intrinsic structure of the action space, the agent must spend much more training time to implicitly discover this beneficial property through exploration.

To address this limitation, we revisit VLA training from a geometric and physical-consistency perspective. 
Our approach begins by formalizing the concept of the Feasible Action Neighborhood (FAN): a set of neighboring actions for each state within which different choices yield nearly equivalent task progress. Building upon this concept, we introduce a novel regularization term designed to shape the model's output distribution to match the properties of the FAN. Specifically, we model the FAN as a locally unimodal and smooth region by encouraging the output distribution to conform to a target Gaussian. Crucially, this regularizer requires no alterations to the model architecture or auto-regressive decoding scheme, yet it effectively shifts the learning signal from a focus on ``exclusive correctness" to one of ``weighted tolerance".
This aligns the VLA training objective with the geometry of physical action spaces.

The main contributions of this work are as follows:

\begin{itemize}
\item 
We formalize the notion of FAN to characterize the local tolerance structure of physical actions, and reveal the intrinsic mismatch between standard language-style VLA training and the geometry of real-world physical actions.
\item 
We introduce a FAN-guided regularizer applicable to both SFT and RFT, which preserves the discreteness and autoregressive nature of VLA models while explicitly aligning the training objective with physical tolerance of real-world manipulation actions.
\item 
We conduct extensive evaluations across both SFT and RFT regimes, covering diverse VLA backbones and manipulation tasks under both in-distribution and out-of-distribution (OOD) settings. The results demonstrate significant improvement of the proposed method in sample-efficiency, task success rate, and OOD generalization.
\end{itemize}

Our results show that bridging the gap between language-style training objectives and physical tolerance of real-world manipulation actions benefits efficient VLA finetuning.
It is worth noting that our method is fundamentally distinct from entropy maximization prior in reinforcement learning. While entropy maximization also encourages distributional outputs, it is typically unstructured and used to promote exploration. In contrast, our method imposes a structured prior shaped by the FAN, which exploits the properties of physical actions to provide a more direct and sample-efficient path towards a generalizable policy.

\section{Related Work}

\subsection{Foundation Models for Robotics}
Early robotic systems rely on hand-crafted task specifications and fixed control policies, which have limited generalization \cite{shao2025Large}.
Recently, the progress of large foundation models \cite{awais2025Foundation} has motivated exploring lightweight adaptation of general-purpose backbones for scalable and generalizable embodied intelligence.
Initial efforts typically adopt hierarchical or decoupled paradigms. For example, VoxPoser \cite{huang2023VoxPoser} employs vision-language models (VLMs) to infer voxel-level affordances for planning, while SayCan \cite{ichter2022Can} and ViLA \cite{hu2024look} decompose natural-language goals into low-level skill executions. Moreover, Code-as-Policies \cite{liang2023Codea} uses large language models to generate robot control code.

A major paradigm shift came with RT-2 \cite{brohan2023RT2}, which unifies perception, reasoning, and control within a single VLA framework.
VLA architectures typically adopt an autoregressive language model to directly output discrete action tokens for robot execution \cite{driess2023PaLMEa, ghosh2024Octo, kim2024OpenVLA}.
Among them, OpenVLA \cite{kim2024OpenVLA} is a widely used baseline, which is pretrained at scale on the Open X-Embodiment dataset \cite{o2024open}. It fuses dual visual encoders, SigLIP \cite{zhai2023Sigmoid} and DINOv2 \cite{oquab2024dinov}, with Llama2-7B \cite{touvron2023Llama}, and autoregressively predicts 7-DoF action tokens over discretized bins.

More recent studies have refined VLA architectures to enhance reasoning, action representation, and efficiency. 
Architecturally, TriVLA \cite{liu2025TriVLA} 
leverages pretrained video prediction models for latent temporal cues, while WorldVLA \cite{cen2025WorldVLA} employs auxiliary image prediction for stronger supervision.
UniVLA \cite{bu2025UniVLA} and Villa‑X \cite{chen2025villaX} introduce additional training stages 
to reduce the influence of task‑irrelevant dynamics.
$\pi_0$ \cite{black2024$p_0$} and $\pi_{0.5}$ \cite{intelligence2025$p_05$} employ flow matching with continuous action representation, while HybridVLA \cite{liu2025HybridVLA} combines autoregressive and diffusion heads, and VQ‑VLA \cite{wang25vqvla} replaces the bin‑based tokenizer with a residual VQ‑VAE for higher precision.
To enhance reasoning, Fast‑in‑Slow \cite{chen2025FastinSlow} decouples fast manipulation from slow deliberation, while OneTwoVLA \cite{lin2025onetwovla} employs a control token to switch between reasoning and execution phases.
Efficiency‑oriented variants include SmolVLA \cite{shukor2025SmolVLA} and BitVLA \cite{wang2025BitVLA}, which employ smaller models, while EfficientVLA \cite{yang2025EfficientVLA} applies targeted pruning without retraining.
Moreover, Gr00t N1 \cite{nvidia2025GR00T} and SpatialVLA \cite{qu2025SpatialVLA} focus on sim‑to‑real transfer.

\subsection{Finetuning for VLA Models}

Despite recent growth in robot manipulation datasets, their scale remains limited compared to NLP corpora \cite{xiang2025Parallels}.
Evidences from various domains highlight the critical role of finetuning in aligning foundation models with downstream objectives, a process even more crucial for VLA models.
Most existing VLAs adopt a two-stage training paradigm comprising large-scale pretraining followed by SFT on task-specific demonstrations \cite{kim2025FineTuning}.
However, imitation-based SFT often suffers from significant degradation in OOD scenarios not covered by expert data \cite{zhou2025LIBEROPRO, fei2025LIBEROPlus}.

To overcome this limitation, recent research has drawn inspiration from reinforced finetuning successes in mathematical reasoning and code generation \cite{deepseek-ai2025DeepSeekR1, lightman2023Lets}, which applies RFT to pretrained VLA models.
Several works, such as \cite{hu2025FLaRe, liu2025What, li2025SimpleVLARL, zang2025RLinfVLA}, employ established policy optimization algorithms including PPO~\cite{schulman2017Proximal} and GRPO~\cite{shao2024DeepSeekMath}, and demonstrate their effectiveness in aligning VLA behaviors with task rewards.
ReinboT \cite{zhang2025ReinboT} considers dense reward design and reward-maximization objectives, while VLA‑RL \cite{lu2025VLARL} introduces a Robot Process Reward Model (RPRM) that replaces sparse binary rewards with pseudo-rewards derived from gripper actions and task progress, to achieve more stable PPO-based optimization.
Methods such as TGRPO \cite{chen2025TGRPO} and GRAPE \cite{zhang2025GRAPE} leverage powerful VLMs to generate rewards without training explicit reward models.
Further, Chunked RL \cite{huang2025CORFT} formulates a temporal-difference learning framework tailored to chunked action sequences, and Interactive RL \cite{tan2025Interactive} combines REINFORCE leave-one-out (RLOO) advantage estimation \cite{kool2018Attention} with PPO for more stable policy updates.
ConRFT \cite{chen2025ConRFT} combines offline Q‑learning with online human‑in‑the‑loop finetuning, while iRe‑VLA \cite{Guo2025ImprovingVM} alternates between online RL and behavior cloning in an iterative training process to balance exploration and stability.

\section{Preliminaries}

\begin{figure*}[!t]
    \centering
    \subfloat[SFT Warm-up (success rate 48.4\%)]{
        \label{subfig:radar_openvla_sft}
        \includegraphics[width=0.32\linewidth]{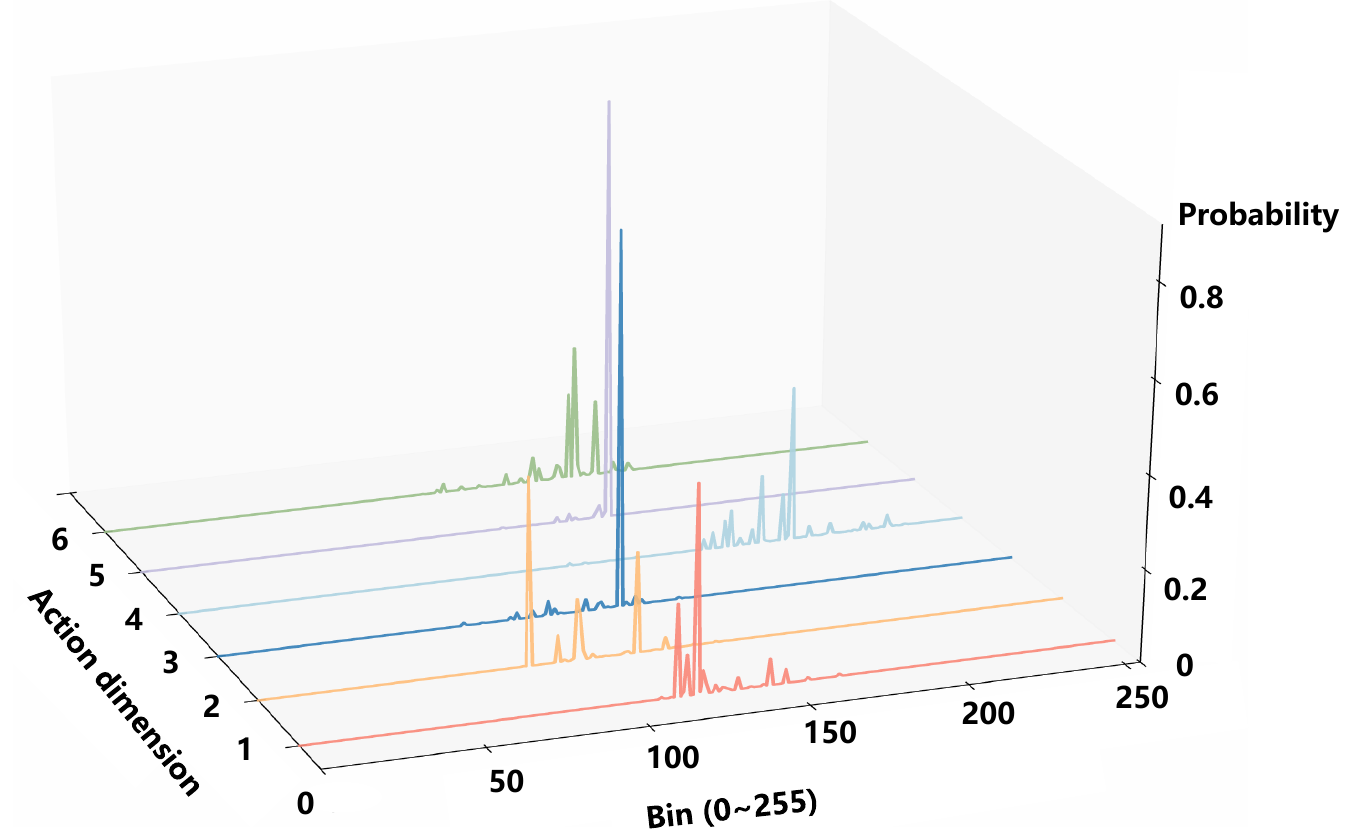}}
    \subfloat[SFT + PPO (success rate 93.8\%)]{
        \label{subfig:radar_oft_sft}
        \includegraphics[width=0.32\linewidth]{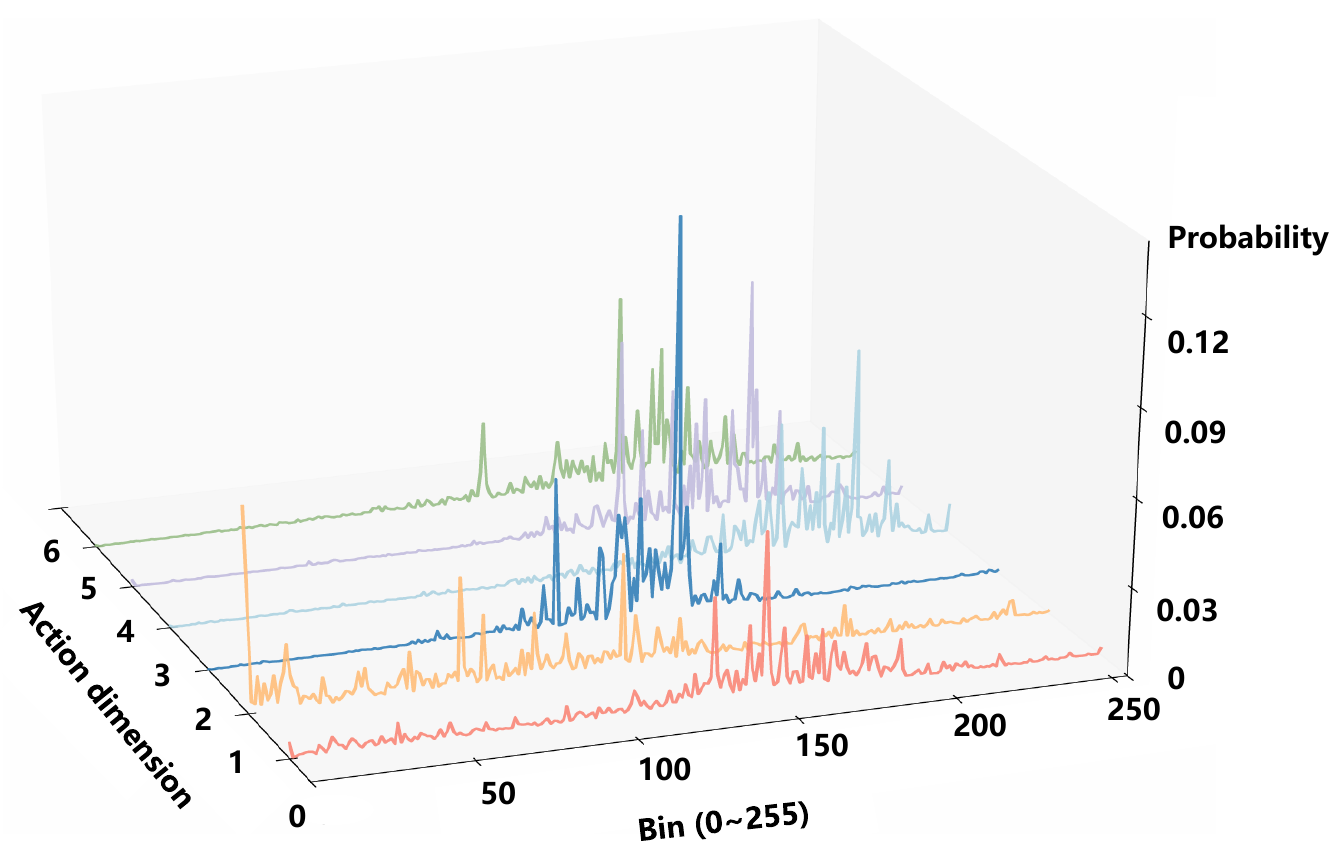}}
    \subfloat[SFT + FAN-PPO (success rate 97.4\%)]{
        \label{subfig:radar_openvla_rft}
        \includegraphics[width=0.32\linewidth]{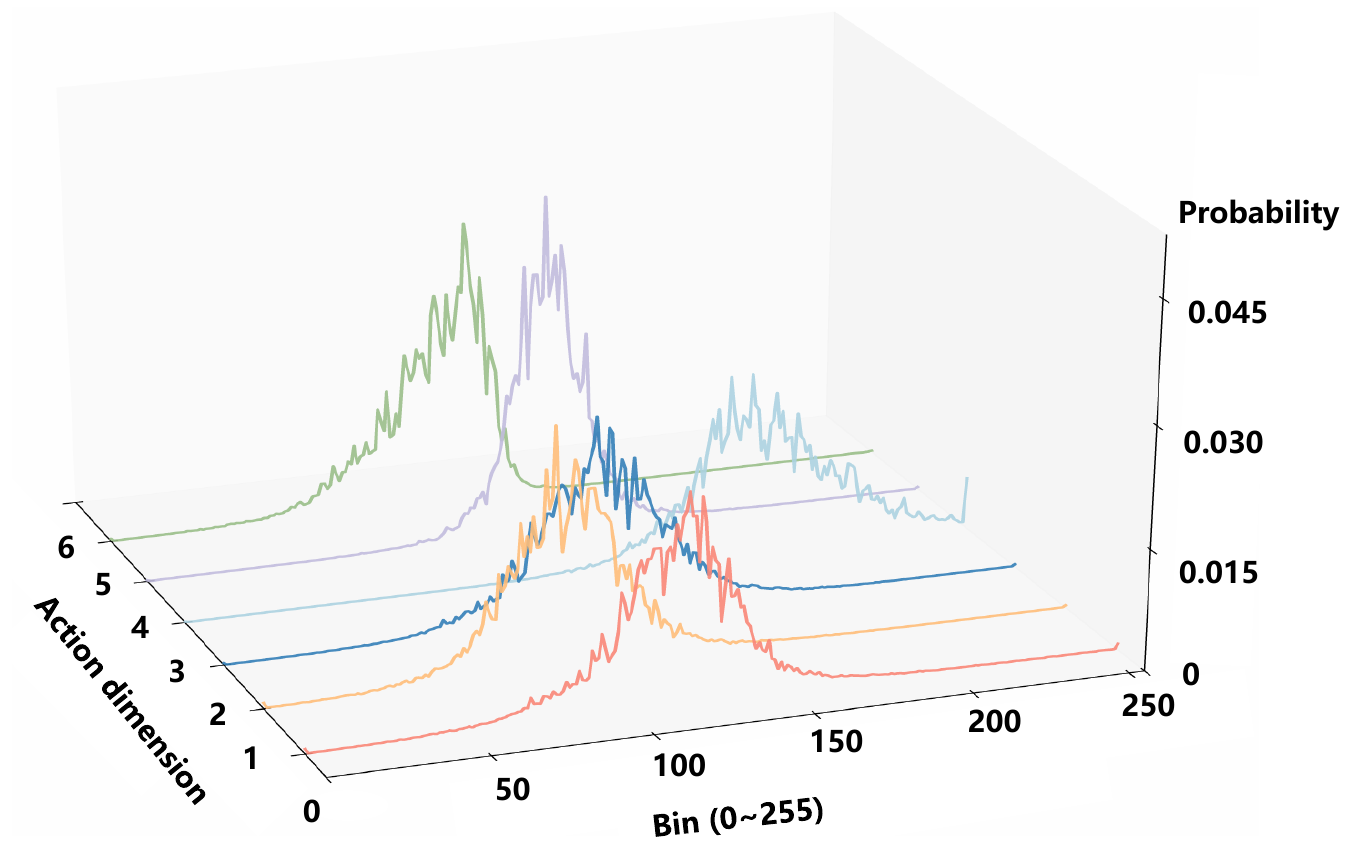}}
     \caption{
    Geometric structure of policy distribution on a ManiSkill task. (a)  After the SFT warm-up stage, the policy learned a narrow, peaked distribution with a minimal FAN, resulting in poor generalization.  (b) Subsequent RFT with PPO broadens the distribution, leading to improved task success; (c) Our FAN-PPO method explicitly guides the policy towards a robust Gaussian shape, achieving the highest success rate and demonstrating superior generalization.}
    \label{fig:fan_shape}
\end{figure*}

\paragraph{VLA as Markov Decision Processes.}
We model the VLA environment as an instruction-conditioned Markov Decision Process (MDP), defined by a tuple $(S, A, L, r, P, \gamma)$, where: $S$ represents the state space;  $A$ represents the action space; $L$ is the set of possible natural language instructions; $P: S \times A \to \Delta(S)$ is the state transition probability function, where $\Delta(S)$ is the set of probability distributions over $S$; $r: S \times A \times L \to \mathbb{R}$ is the reward function, which depends on the instruction $l \in L$; $\gamma \in [0, 1)$ is the discount factor. Without loss of generality, we assume the state space and action space both are countable sets.

The goal of this MDP is to learn an instruction-conditioned policy $\pi: S \times L \to \Delta(A)$ that maximizes the state values, conditioned on a given instruction $l \in L$. For a given policy $\pi$, instruction $l$, and initial state distribution $\mu$, the discounted state visitation distribution $d^{\pi}_{\mu}(\cdot)$ is defined as $d^{\pi}_{\mu}(s) = (1-\gamma) \sum_{t=0}^{\infty}\sum_{s',a} \gamma^t \pi(a|s',l) P_t(s|s',a) \mu(s')$, where $P_t$ is the $t$-times transition probability.

The state-action value function, $Q^{\pi}(s, a, l)$, represents the expected discounted return for taking action $a$ in state $s$ and subsequently following policy $\pi$ under instruction $l$:
$ Q^{\pi}(s,a,l) = \mathbb{E}_{\pi, P} \left[ \sum_{t=0}^{\infty} \gamma^t r(s_t,a_t) \, \middle| \, s_0 = s, a_0=a \right] $
From this, the state-value function $V^{\pi}$ and the advantage function $A^{\pi}$ are defined as the expected value of $Q^{\pi}$ over actions and the centered $Q^{\pi}$ value, respectively:
$ V^{\pi}(s, l) = \mathbb{E}_{a \sim \pi(\cdot|s,l)} [Q^{\pi}(s,a,l)] = \sum_{a \in A} \pi(a|s, l) Q^{\pi}(s,a,l) $, and  
$ A^{\pi}(s,a, l) = Q^{\pi}(s,a, l) - V^{\pi}(s, l) $.

\paragraph{Supervised finetuning (SFT).} We represent the VLA model as a policy $\pi_{\theta}$ parameterized by $\theta$. The policy is trained using SFT on a dataset of $n$ expert demonstrations, $\mathcal{D} = \{ (\tau^i, l^i)\}_{i=1}^n$. Each demonstration consists of an instruction $l^i$ and a corresponding state-action trajectory $\tau^i = \{s_0^i, a_0^i, \dots, s_{K^i-1}^i, a_{K^i-1}^i\}$, where $K^i$ is the trajectory length. The objective of SFT is to maximize the likelihood of the expert actions by minimizing the following negative log-likelihood loss:
\begin{equation}
    \label{eq:SFT_origin}
    \mathcal{L}_{\text{SFT}}(\theta) = - \frac{1}{n}\sum_{i=1}^n \sum_{t=0}^{K^i - 1} \log \pi_{\theta} (a_t^i| s_t^i, l^i).
\end{equation}

\paragraph{Trust-region-type policy optimization.} Trust-region-type policy optimization methods, such as PPO and GRPO, are widely adopted for finetuning large models, including VLAs. The core principle of these algorithms is to ensure training stability by restricting the magnitude of policy updates at each step.

This is formalized as a constrained optimization problem. At each training iteration $t$, given the current policy $\pi_t$ and the instruction $l \in L$, the objective is to find a new policy $\pi$ that maximizes the expected advantage, subject to a constraint on the policy distance (typically measured by KL divergence):
\begin{equation}
    \label{eq:trust_region_origin}
    \begin{aligned}
        \max_{\pi} \quad & \mathbb{E}_{s \sim d^{\pi_t}_{\mu}, a \sim \pi_t}\left[ \frac{\pi(a|s,l)}{\pi_t(a|s,l)} A^{\pi_t}(s,a,l) \right], \\
        \text{s.t.} \quad & \mathbb{E}_{s \sim d^{\pi_t}_{\mu}} \left[ D_{\text{KL}}(\pi(\cdot|s,l) \| \pi_t(\cdot|s,l)) \right] \leq \epsilon,
    \end{aligned}    
\end{equation}
where $D_{KL}(\cdot\| \cdot)$ represents the KL divergence. 

In practical implementation, given some trajectories sampled from the policy $\pi_{\theta_t}$, PPO employs a clip function to replace the KL divergence term. More specifically, given a sampled trajectory $\tau = \{ s_0,a_0,s_1,a_1,...,s_{K-1},a_{K-1}\}$ of length $K$, the policy loss is:
\begin{equation}
    \label{eq:PPO_origin}
    \begin{aligned}
        \mathcal{L}_{\text{PPO}}(\theta) & = - \frac{1}{K} \sum_{k=0}^{K-1} \Big[ \min \Big( I^k_t \hat{A}(s_k, a_k, l), \\
        & ~~~~~~~~~~~~ \text{Clip}(I^k_t, 1-\epsilon, 1+ \epsilon) \hat{A}(s_k, a_k, l) \Big)  \Big],
    \end{aligned}
\end{equation}
where $I^k_t = \pi_{\theta}(a_k|s_k,l)/\pi_{\theta_t}(a_k|s_k,l)$ is the importance ratio. The advantage estimation $\hat{A}(s_k,a_k)$ can be calculated using GAE~\cite{schulman2015high} or as the difference between the discounted return and a value baseline, i.e., $\hat{A}(s_k,a_k) = \sum_{i=k}^{K-1}\gamma^{i-k} r(s_i, a_i, l) - V_{\phi}(s_k, l)$. Additionally, the value function $V_{\phi}$ is updated by minimizing the mean squared error loss:
\begin{equation}
    \label{eq:PPO_value_loss}
    \mathcal{L}_{\text{Val}}(\phi) = \frac{1}{K} \sum_{k=0}^{K-1} \left( V_{\phi}(s_k, l) - \sum_{i=k}^{K-1} \gamma^{i-k} r(s_i,a_i,l) \right)^2.
\end{equation}

\section{Feasible Action Neighborhood}
\label{sec:FAN}

A defining characteristic of real-world robotic manipulation, which distinguishes it from purely linguistic tasks, is the inherent tolerance in physical actions. 
For any given state, there typically exists a neighborhood of actions around the optimal action, not just a single action, that can lead to near-identical and successful outcomes. For instance, when grasping an object, if the recorded optimal action is to move the gripper $1.0 \, cm$ to the left, moving it $0.9 \, cm$ or $1.1 \, cm$ would likely be just as effective. To formalize this tolerance region, we utilize the state-action value function, $Q(s, a)$, which represents the expected long-term performance:
\begin{definition}[Feasible Action Neighborhood, FAN]
    \label{def:fan}
    Let $a^*(s) = \arg\max_{a' \in A} Q(s,a')$ be the optimal action for a given state $s$. For a tolerance $\delta > 0$, the FAN, denoted by $\mathbb{N}_{\delta}(s)$, is the connected set containing $a^*(s)$ within the set of actions that yield nearly equivalent $Q$-values:
    \begin{equation*}
        \mathbb{N}_{\delta}(s) \subseteq \left\{a \in A : Q(s,a^*(s)) - Q(s,a) \le \delta \right\}.
    \end{equation*}
    We posit that for any well-posed physical task, this neighborhood is non-trivial, i.e., it forms a region of non-zero volume around $a^*(s)$.
\end{definition}
While the FAN is formally defined via the $Q$-function, the policy $\pi(a|s)$ implicitly encodes this information. For common policy formulations (e.g., a softmax over $Q$-values), actions with similar $Q$-values are assigned similar probabilities. This implies that the shape of the policy distribution reflects the structure of the FAN. Consequently, the shape of the policy distribution $\pi(a|s)$ serves as a practical and observable proxy for the FAN that the model has learned. A ``spiky'' narrow distribution corresponds to a small, trivial FAN, while a ``broad'' smooth distribution indicates a larger, more robust FAN.

Our empirical analyses reveal a striking correlation between the shape of the policy distribution, which is also the size of its implicit FAN, and the model's generalization performance, as shown in Figure \ref{fig:fan_shape}. In (a), we observe that the model finetuned with SFT on a small dataset for warming up consistently produces overly-peaked distributions. This indicates it has learned a trivial or near-trivial FAN, leading to poor generalization under perturbation. In contrast, in (b), when the same model is subsequently trained using RFT, its action distributions become broader, indicating the emergence of a larger, more robust FAN. Critically, this broadening of the distribution is directly correlated with a significant improvement in task success rate. This strong empirical evidence, that the policy's shape is a proxy for robustness, forms the primary motivation for our method.

\section{Methodology}

As established in Section \ref{sec:FAN}, shaping the policy distribution, which serves as a proxy for the FAN, is crucial for improving generalization and {sample efficiency} during finetuning. Our goal is therefore to guide the policy distribution during training to reflect the properties of an ideal FAN.

The key properties of a physical FAN are unimodality, smoothness, and local contiguity. These properties are well-modeled by a Gaussian distribution. Therefore, we introduce a FAN-guided regularizer, defined as the KL divergence between the policy $\pi$ and a target Gaussian distribution $\mathcal{N}(\mu(s),\Sigma(s))$:
\begin{equation}
    \label{eq:self_gaussianing_reg}
    \mathcal{L}_{\text{FAN}} = \mathbb{E}_{s}\left[D_{\text{KL}}(\pi(\cdot|s) \,\|\, \mathcal{N}(\cdot|\mu(s), \Sigma(s))) \right],
\end{equation}
where $\mathcal{N}(\cdot|\mu(s), \Sigma(s))$ denotes the density function of the normal distribution, the mean $\mu(s) = \arg\max_{a} \pi(a|s)$ is the policy's own predicted optimal action for state $s$, and the covariance matrix $\Sigma(s)$ directly controls the size of the FAN. This regularizer can be seamlessly incorporated into both SFT and RFT objectives to actively shape the policy distribution around its peak, transforming it from an overly confident ``spike" into a smooth, robust neighborhood.

\subsection{SFT with FAN-Guided Regularization}

We first incorporate our FAN-guided regularizer into the SFT loss \eqref{eq:SFT_origin}. This results in the FAN-SFT loss:
\begin{align*}
    \mathcal{L}_{\text{FAN-SFT}}(\theta) & = - \frac{1}{n}\sum_{i=1}^n \sum_{t=0}^{K^i - 1} \Big( \log \pi_{\theta} (a_t^i| s_t^i, l^i) \\
    &  + \alpha D_{\text{KL}}(\pi_{\theta} (\cdot| s_t^i, l^i) \| \mathcal{N}(\cdot|\mu(s_t^i),\Sigma(s_t^i)) ) \Big),
\end{align*}
where $\alpha$ is the regularization coefficient. Here, the covariance is not fixed but is dynamically defined by the variance of the policy: $\Sigma(s)  = \text{diag}\left( \sum_{a \in A} \pi(a|s, l) ( a - \mu(s) )^2 \right)$. This adaptive covariance matrix encourages the policy to adopt a Gaussian shape according to its current geometric property. We employ this dynamic target specifically during SFT because the supervised objective is inherently stable and can accommodate such a variable target without compromising training stability.

\subsection{RFT with FAN-Guided Regularization}

We now adapt our approach for the RFT paradigm. The FAN-guided regularizer is incorporated into a trust-region-type policy optimization objective, creating a principled trade-off between policy improvement and policy shaping.

At each update step $t$, we optimize the policy $\pi$ based on experience gathered with the current policy $\pi_t$. Our regularized objective is formulated as the following constrained optimization problem:
\begin{equation}
    \label{eq:fine_turning_with_RL}
    \begin{aligned}
        \max_{\pi} \quad & \mathbb{E}_{s \sim d^{\pi_t}_{\mu}, a \sim \pi_t}\left[ \frac{\pi(a|s, l)}{\pi_t(a|s, l)} A^{\pi_t}(s,a, l) \right] \\
        & ~~ - \alpha \mathbb{E}_{s \sim d^{\pi_t}_{\mu}} \left[  D_{\text{KL}}(\pi(\cdot|s,l) \| \mathcal{N}(\cdot|\mu(s),\Sigma)) \right], \\
        ~ \text{s.t.} \quad & \mathbb{E}_{s \sim d^{\pi_t}_{\mu}} \left[ D_{\text{KL}}(\pi(\cdot|s, l) \| \pi_t(\cdot|s, l))\right] \leq \epsilon ,
    \end{aligned}
\end{equation}
where $\alpha$ is the regularization coefficient. Crucially, to ensure training stability, we use a fixed covariance matrix $\Sigma = \sigma^2 I$. Here, $\sigma > 0$ is a fixed hyperparameter controlling the target FAN size, and $I$ is the identity matrix. This contrasts with our SFT approach, providing a stable, consistent target shape that helps anchor the policy updates.

The solution to the constrained optimization problem in \eqref{eq:fine_turning_with_RL} has an elegant closed-form structure, which reveals the mechanism of our regularizer.
\begin{proposition}[Form of the optimal policy]
    \label{prop:form_of_optimal_policy}
    For a given state $s$, and any action $a \in A$, the optimal policy $\pi_{t+1}$ that solves objective \eqref{eq:fine_turning_with_RL} is given by:
    \begin{equation*}
        \begin{aligned}
            \pi_{t+1}(a|s,l) & \propto  \mathcal{N}(a|\mu(s),\Sigma)^{\frac{\alpha}{\alpha + \beta^*}} \pi_t(a|s,l)^{\frac{\beta^*}{\alpha + \beta^*}}  \\ 
            & ~~~~~~ \times \exp\left( \frac{Q^{\pi_t}(s,a,l)}{\alpha + \beta^*}\right),
        \end{aligned} 
    \end{equation*}
    where $\beta^* \geq 0$ is the optimal Lagrange multiplier for the trust-region constraint.
\end{proposition}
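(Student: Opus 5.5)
We will solve \eqref{eq:fine_turning_with_RL} by Lagrangian duality and a per-state variational argument. First, the objective decouples across states. The surrogate term can be rewritten by cancelling the importance ratio:
\[
\mathbb{E}_{a\sim\pi_t}\Big[\tfrac{\pi(a|s,l)}{\pi_t(a|s,l)}A^{\pi_t}(s,a,l)\Big]=\sum_{a\in A}\pi(a|s,l)A^{\pi_t}(s,a,l).
\]
Since $A^{\pi_t}=Q^{\pi_t}-V^{\pi_t}$ and $V^{\pi_t}(s,l)$ does not depend on $a$, we have $\sum_a\pi(a|s,l)V^{\pi_t}(s,l)=V^{\pi_t}(s,l)$. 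This is a constant in $\pi$, so $A^{\pi_t}$ may be replaced by $Q^{\pi_t}$. This explains why $Q$ rather than $A$ appears in the statement.

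We then introduce a multiplier $\beta\ge 0$ for the trust-region constraint and form the Lagrangian
\[
\mathcal{L}(\pi,\beta)=\mathbb{E}_{s\sim d^{\pi_t}_\mu}\Big[\sum_a \pi(a|s,l)Q^{\pi_t}(s,a,l)-\alpha D_{\text{KL}}(\pi\|\mathcal{N})-\beta D_{\text{KL}}(\pi\|\pi_t)\Big]+\beta\epsilon .
\]
The expectation is over $d^{\pi_t}_\mu$, which is fixed and independent of $\pi$. The maximization over $\pi$ therefore splits into independent problems, one for each $s$ in the support. Each is the maximization of a concave functional over the simplex $\Delta(A)$, with normalization multiplier $\lambda(s)$.

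Next we take the stationarity condition with respect to $\pi(a|s,l)$ for each $a$:
\[
Q^{\pi_t}(s,a,l)-\alpha\big(\log\pi(a|s,l)+1-\log\mathcal{N}(a|\mu(s),\Sigma)\big)-\beta\big(\log\pi(a|s,l)+1-\log\pi_t(a|s,l)\big)-\lambda(s)=0 .
\]
Solving for $\log\pi$ gives
\[
(\alpha+\beta)\log\pi=Q^{\pi_t}+\alpha\log\mathcal{N}+\beta\log\pi_t+\text{const}(s).
\]
Dividing by $\alpha+\beta$ and exponentiating yields the claimed form, with the exponents $\alpha/(\alpha+\beta)$ and $\beta/(\alpha+\beta)$.

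Finally, we invoke strong duality. The primal objective is concave in $\pi$, since it is linear minus two convex KL terms. The constraint set is convex. Slater's condition holds because $\pi=\pi_t$ gives KL equal to $0<\epsilon$. Hence there is an optimal $\beta^*\ge0$, and the primal optimum is the Lagrangian maximizer at $\beta^*$, which gives the formula with $\beta^*$.

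Two points need care, and we expect them to be the main obstacle.

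\emph{Treatment of $\mu(s)$.} The mean $\mu(s)=\arg\max_a\pi(a|s)$ depends on $\pi$ itself. We would treat it as fixed during the update, taking it from the current policy $\pi_t$ (a stop-gradient), so that $\mathcal{N}$ is a fixed reference distribution. Otherwise the KL term is neither convex nor differentiable.

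\emph{Normalization of $\mathcal{N}$ on $A$.} Since $A$ is countable, we interpret $\mathcal{N}(\cdot|\mu(s),\Sigma)$ as its density restricted to $A$. Any normalization constant is absorbed into the proportionality.

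We would also check the following. The optimizer is interior wherever $\pi_t>0$, or wherever $\alpha>0$ with $\mathcal{N}>0$. The log-barrier from the KL terms ensures the KKT stationarity is exact rather than a boundary case. If $A$ is infinite, the variational argument should be phrased through the Gibbs variational principle rather than finite-dimensional KKT. In that setting one verifies optimality directly: the objective minus its value at the candidate equals $-(\alpha+\beta)D_{\text{KL}}(\pi\|\pi_{t+1})\le0$.
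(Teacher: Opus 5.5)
Your proposal is correct and follows essentially the same route as the paper's proof. Both form the Lagrangian with a multiplier $\beta$ for the trust-region constraint and $\lambda$ for normalization, set the derivative in $\pi(a|s,l)$ to zero, solve for $\log\pi$, and absorb the $a$-independent terms into the proportionality constant; the paper drops $V^{\pi_t}$ at this final step, whereas you replace $A^{\pi_t}$ by $Q^{\pi_t}$ up front. What you add is care the paper omits. You keep the state-averaged constraint, so a single scalar $\beta^*$ arises (the paper imposes the constraint per state), justify $\beta^*$ through Slater's condition and strong duality rather than an appeal to KKT, fix $\mu(s)$ as a stop-gradient target, and correctly call the Lagrangian concave in $\pi$ (the paper says convex).
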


The proof is in the supplementary material. Proposition \ref{prop:form_of_optimal_policy} provides a key insight: the next policy, $\pi_{t+1}$, is a geometric interpolation between the previous policy $\pi_t$ and the target Gaussian $\mathcal{N}(\mu(s), \Sigma)$, where the result is then re-weighted by the exponentiated $Q$-values. The interpolation weights are determined by $\alpha$ and $\beta^*$. As the optimal multiplier, $\beta^*$ is governed by the KKT conditions, which dictate that $\beta^*$ is inversely related to the trust-region size $\epsilon$. A smaller $\epsilon$ forces a larger $\beta^*$, which makes the update more conservative by placing more weight on the previous policy $\pi_t$. Consequently, there is a competition: $\alpha$ governs the strength of the pull towards the Gaussian shape, while $\epsilon$ governs the strength of the pull towards the $\pi_t$. Eventually, finetuning with the FAN-PPO loss guides the policy distribution to approximate the target Gaussian, as illustrated in Figure \ref{fig:fan_shape} (c).

For a practical implementation, we integrate our regularizer into the policy loss  \eqref{eq:PPO_origin} to define the FAN-PPO loss:
\begin{equation*}
    \begin{aligned}
        \mathcal{L}_{\text{FAN-PPO}}(\theta) & = - \frac{1}{K} \sum_{k=0}^{K-1} \Big[ \min \Big( I^k_t \hat{A}(s_k, a_k, l), \\
        & ~~~~~~~~~~~~ \text{Clip}(I^k_t, 1-\epsilon, 1+ \epsilon) \hat{A}(s_k, a_k, l) \Big)  \\
        & ~~~~~~~  - \alpha D_{\text{KL}}(\pi_{\theta}(\cdot|s_k,l) \| \mathcal{N}(\cdot|\mu(s_k),\Sigma)) \Big],
    \end{aligned}
\end{equation*}
where $I^k_t = \pi_{\theta}(a_k|s_k,l)/\pi_{\theta_t}(a_k|s_k,l)$ is the probability ratio. The advantage $\hat{A}$ is estimated via GAE and the value function is trained using the loss from \eqref{eq:PPO_value_loss}. 

\section{Experiments}
\begin{table*}[!t]
    \centering
    \small
    \caption{Comparison of SFT results on the ManiSkill benchmark. Values denote success rates (\%).} 
    \label{tab:ms_sft}
    \renewcommand{\arraystretch}{0.85}
    \begin{tabular}{ll c cccc}
    \toprule
    & \multirow{2}{*}{\textbf{Method}}
    & \multirow{2}{*}{\textbf{In-Distribution}}
    & \multicolumn{4}{c}{\textbf{Out-of-Distribution}} \\
    \cmidrule(r){4-7}
    &
    &
    & \textbf{Vision} & \textbf{Semantic} & \textbf{Execution} & \textbf{Avg.} \\
    \midrule
    & RL4VLA \cite{liu2025What} & 88.5 & 74.0 & 61.8 & 46.2 & 60.7 \\
    \midrule
    & OpenVLA + SFT \cite{kim2024OpenVLA} & 78.1 $\pm$ 3.1  & 76.6 $\pm$ 1.9 & 57.4 $\pm$ 0.9 & 40.4 $\pm$ 0.8 & 58.1 \\
    & OpenVLA + FAN-SFT (Ours) & 89.8 $\pm$ 0.8 & 81.7 $\pm$ 1.1   & 63.5 $\pm$ 1.5   & 44.8 $\pm$ 0.5 & 63.3 \\
    \rowcolor{lightgray}
    & $\triangle$ Improvement & +11.7 & +5.1 & +6.1 & +4.4 & +5.2 \\
    \bottomrule
    \end{tabular}
\end{table*}

\begin{figure*}[!t]
    \centering
    \includegraphics[width=0.95\textwidth]{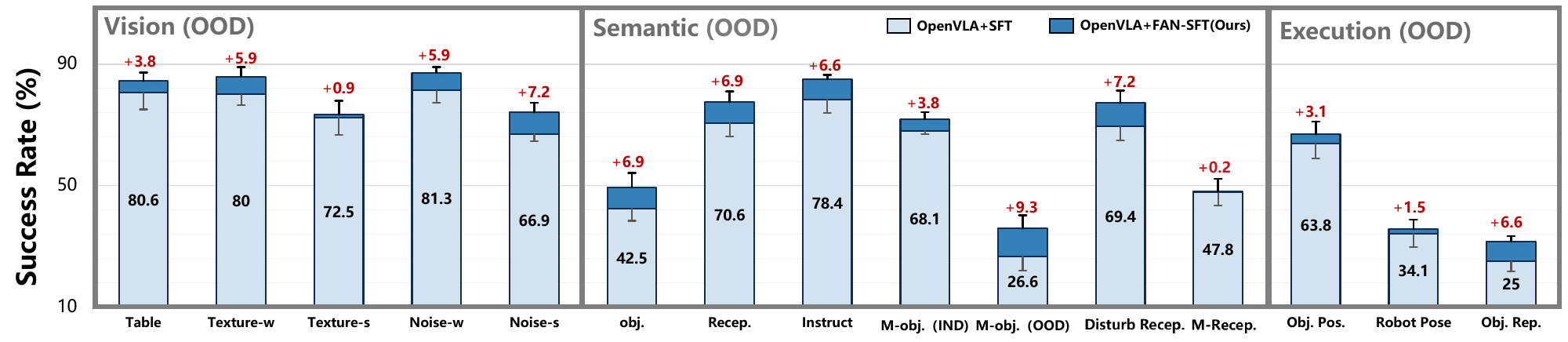}
    \caption{SFT performance on OpenVLA with and without FAN-guided regularization across different OOD tasks on ManiSkill.
    }
    \label{fig:ms_ood}
\end{figure*}

\begin{figure*}[!t]
    \centering
    \subfloat[In-Distribution]{
        \label{subfig:radar_openvla_sft}
        \includegraphics[width=0.24\linewidth]{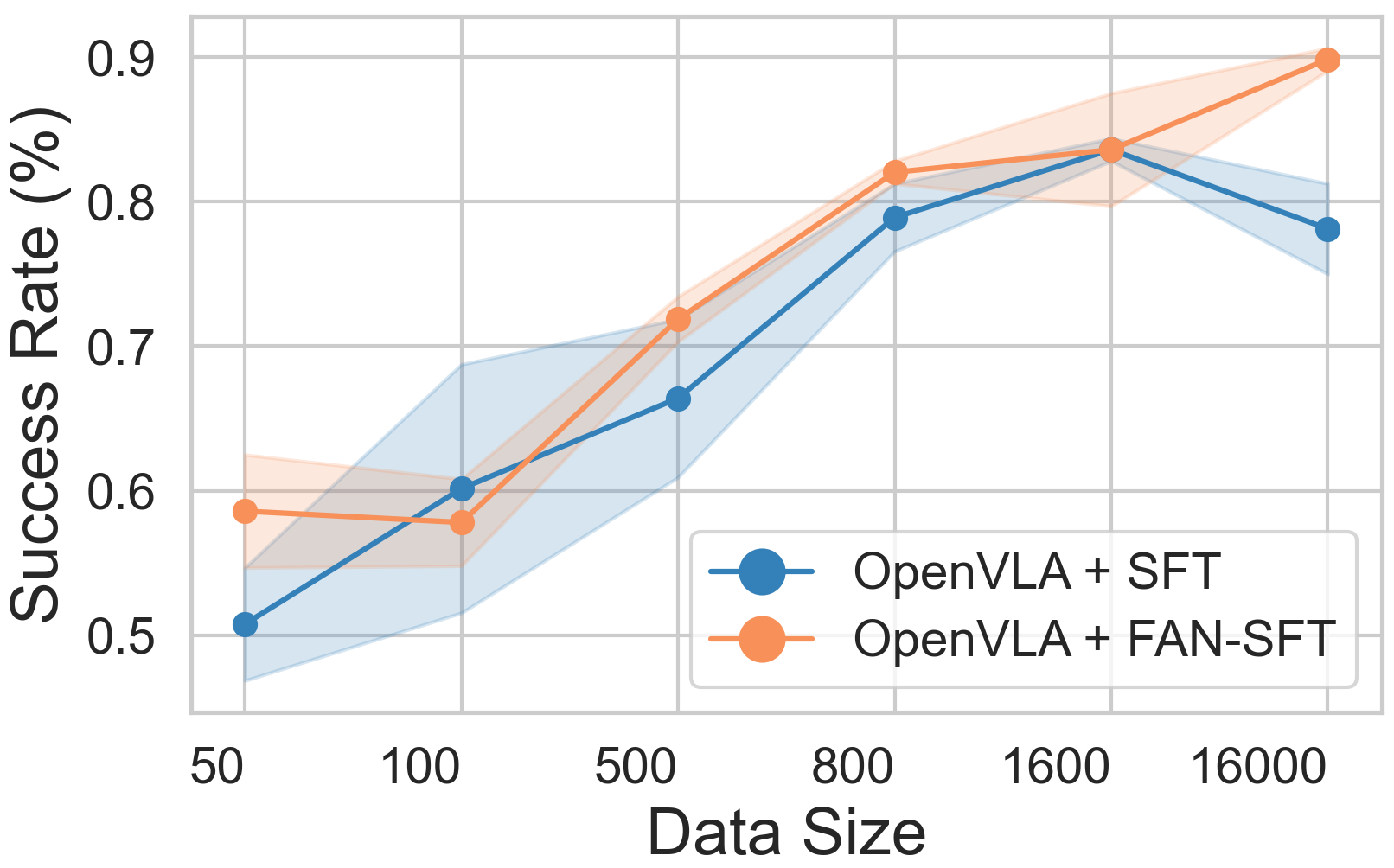}}
    \subfloat[Vision (OOD)]{
        \label{subfig:radar_oft_sft}
        \includegraphics[width=0.24\linewidth]{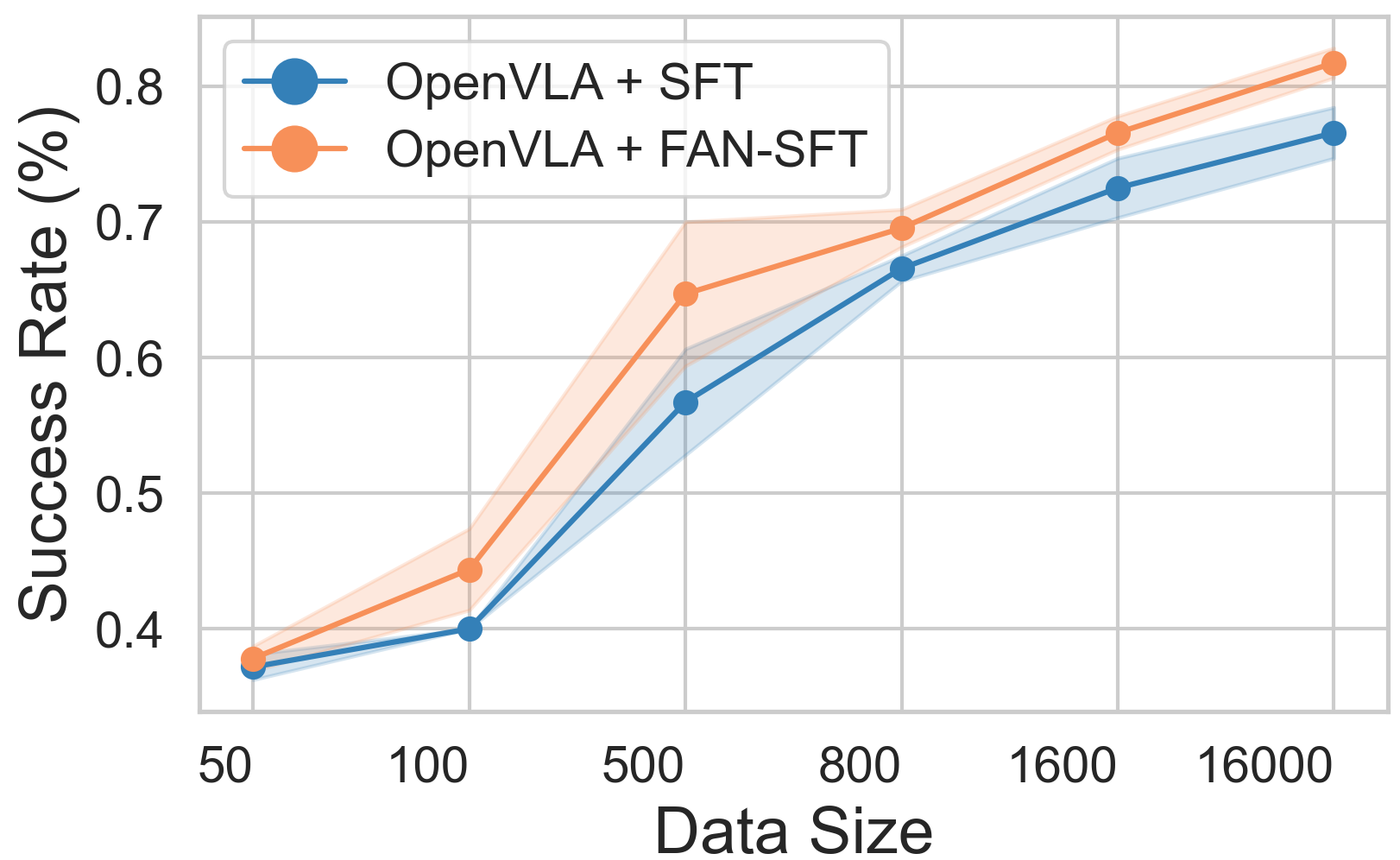}}
    \subfloat[Semantic (OOD)]{
        \label{subfig:radar_openvla_rft}
        \includegraphics[width=0.24\linewidth]{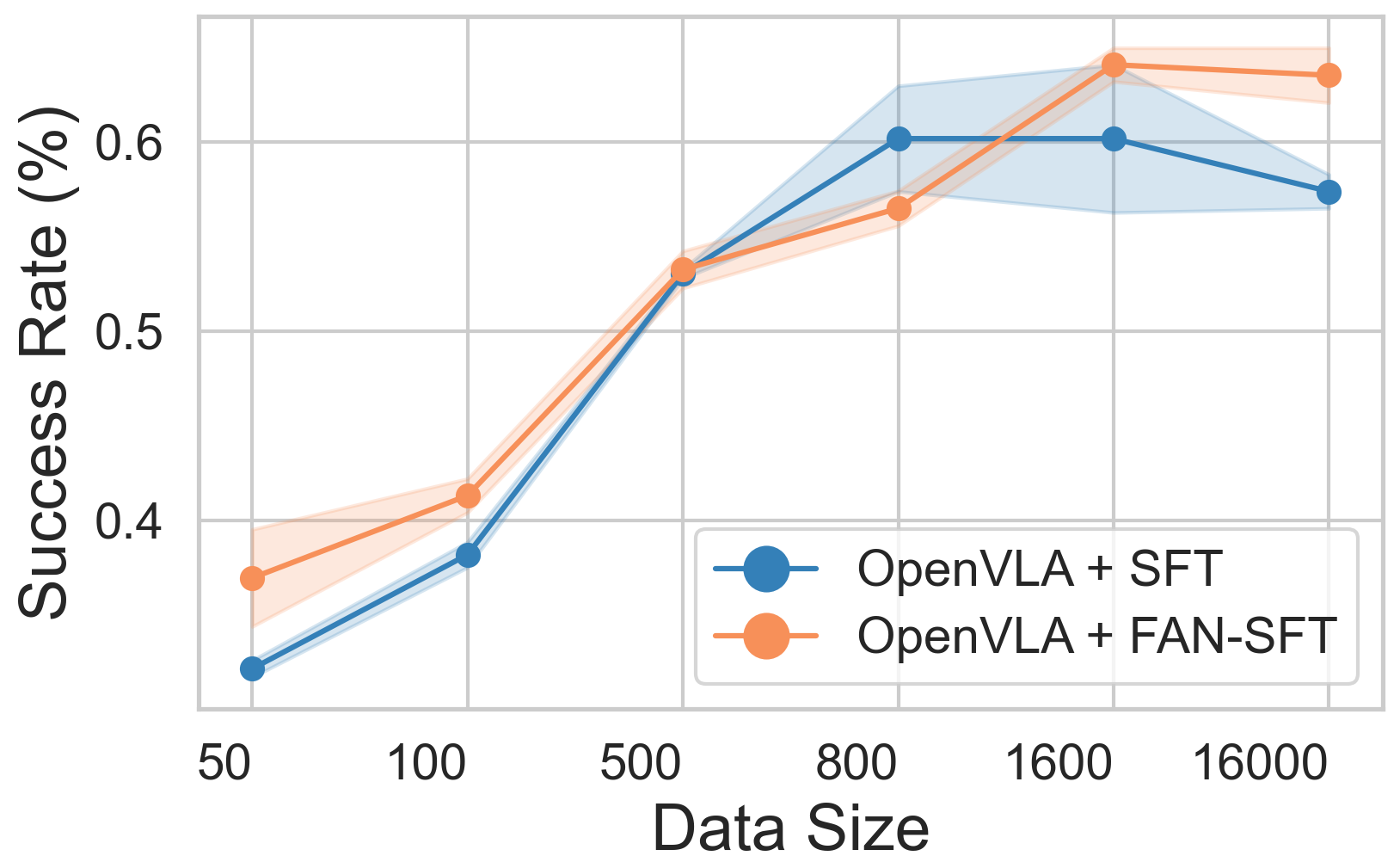}}
    \subfloat[Execution (OOD)]{
        \label{subfig:radar_oft_rft}
        \includegraphics[width=0.24\linewidth]{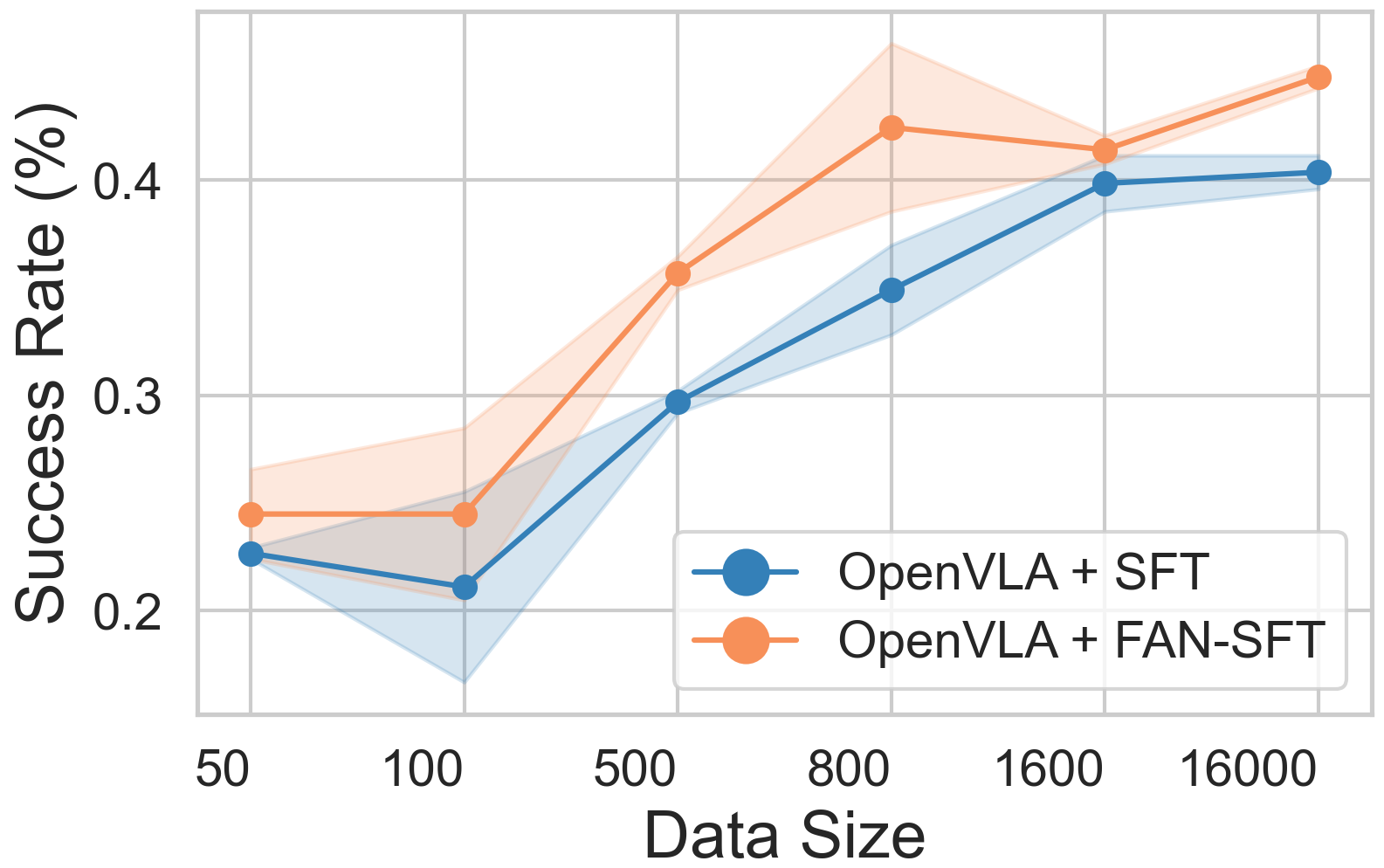}}
    \caption{SFT performance on OpenVLA  with and without FAN-guided regularization across different data sizes on in-distribution and three OOD tasks (Vision, Semantic, Execution).
    }
    \label{fig:scale}
\end{figure*}

We conduct experiments on both SFT  and RFT with two representative VLA models: OpenVLA \cite{kim2024OpenVLA}, which outputs a single action, and OpenVLA-OFT \cite{kim2025FineTuning}, which outputs action chunks. We consider two widely used robot manipulation benchmarks: Maniskill \cite{tao2025ManiSkill3} and LIBERO \cite{liu2023LIBERO}. 
{We comprehensively evaluate the proposed FAN-guided regularization in terms of absolute performance gains, OOD generalization, sample efficiency, and convergence speed.}
All experiments are conducted on NVIDIA A100 GPUs with 80GB memory. 

\subsection{Supervised Finetuning}\label{subsec:sft}

\begin{figure}[t]
    \centering
    \includegraphics[width=0.48\textwidth]{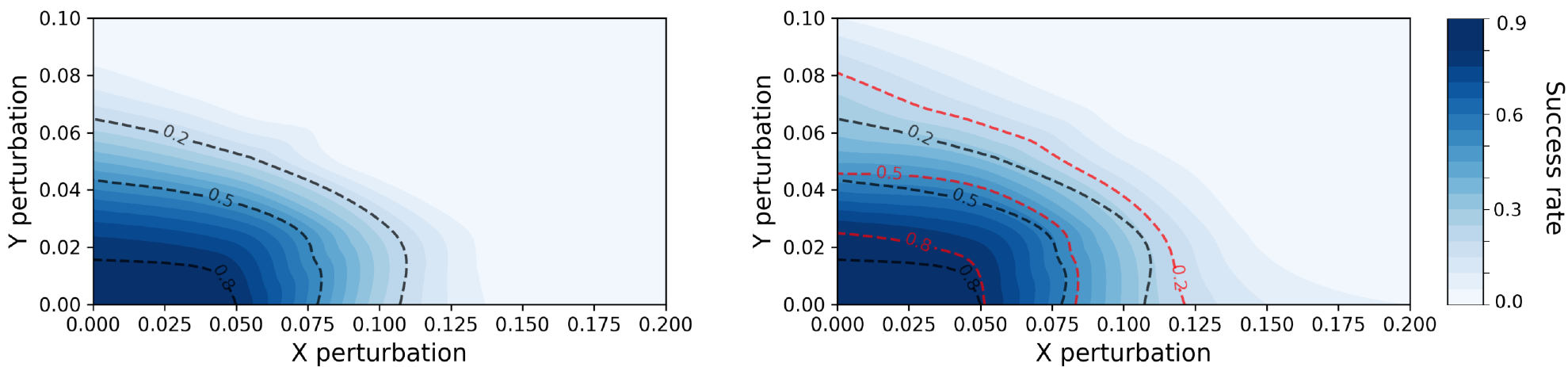}
    \caption{Spatial robustness on \texttt{LIBERO-Spatial}, comparing OpenVLA finetuned with SFT (left) versus our FAN-SFT (right). Color indicates success rate; the black and red dashed lines are the equal-success-rate contours for each method, respectively.}
    \label{fig:libero_ood}
\end{figure}

\begin{figure}[!t]
    \centering
    \subfloat[SFT]{
        \label{subfig:radar_openvla_sft}
        \includegraphics[width=0.3\linewidth]{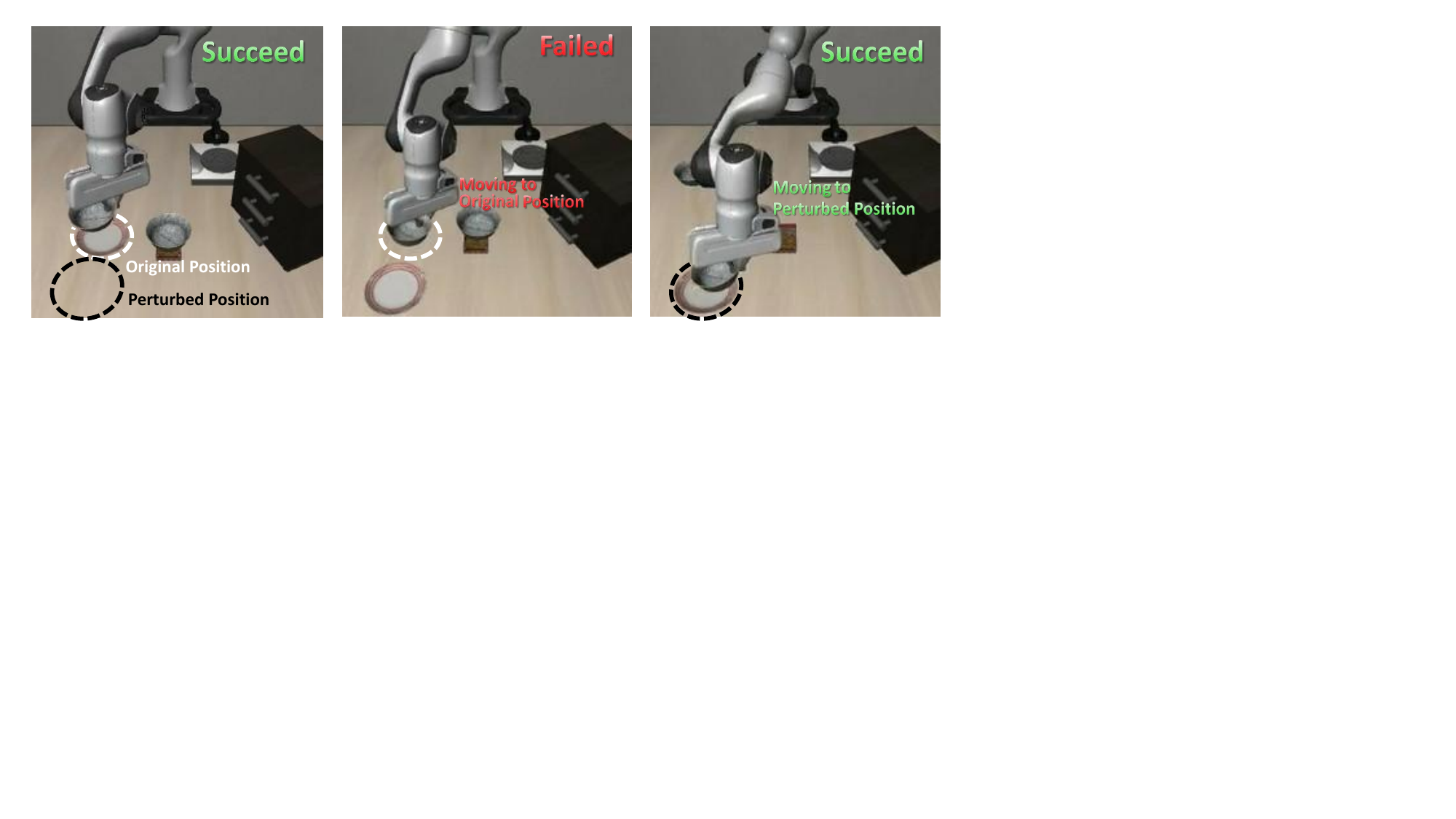}}
    \subfloat[SFT (OOD)]{
        \label{subfig:radar_oft_sft}
        \includegraphics[width=0.3\linewidth]{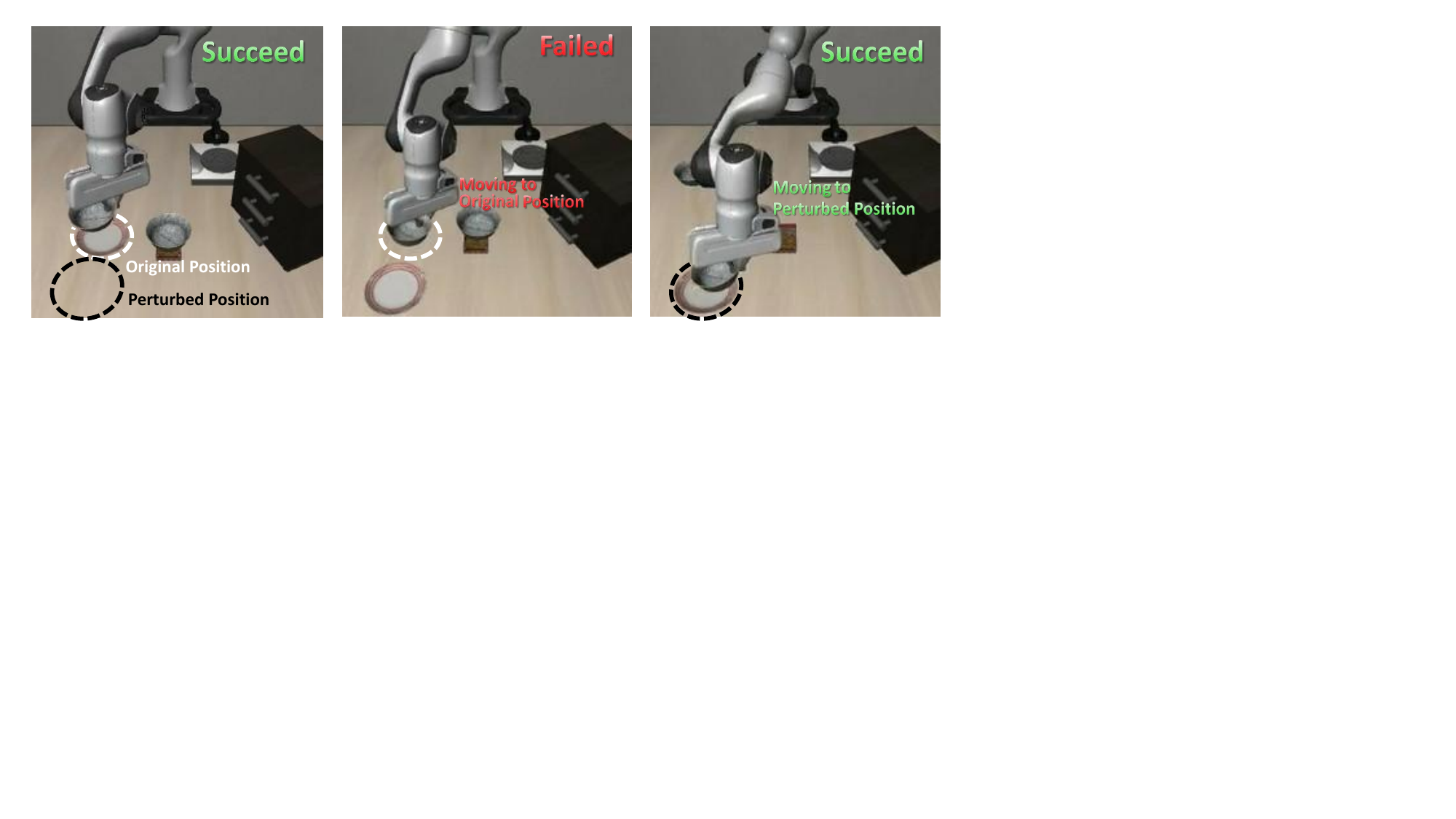}}
    \subfloat[FAN-SFT (OOD)]{
        \label{subfig:radar_openvla_rft}
        \includegraphics[width=0.3\linewidth]{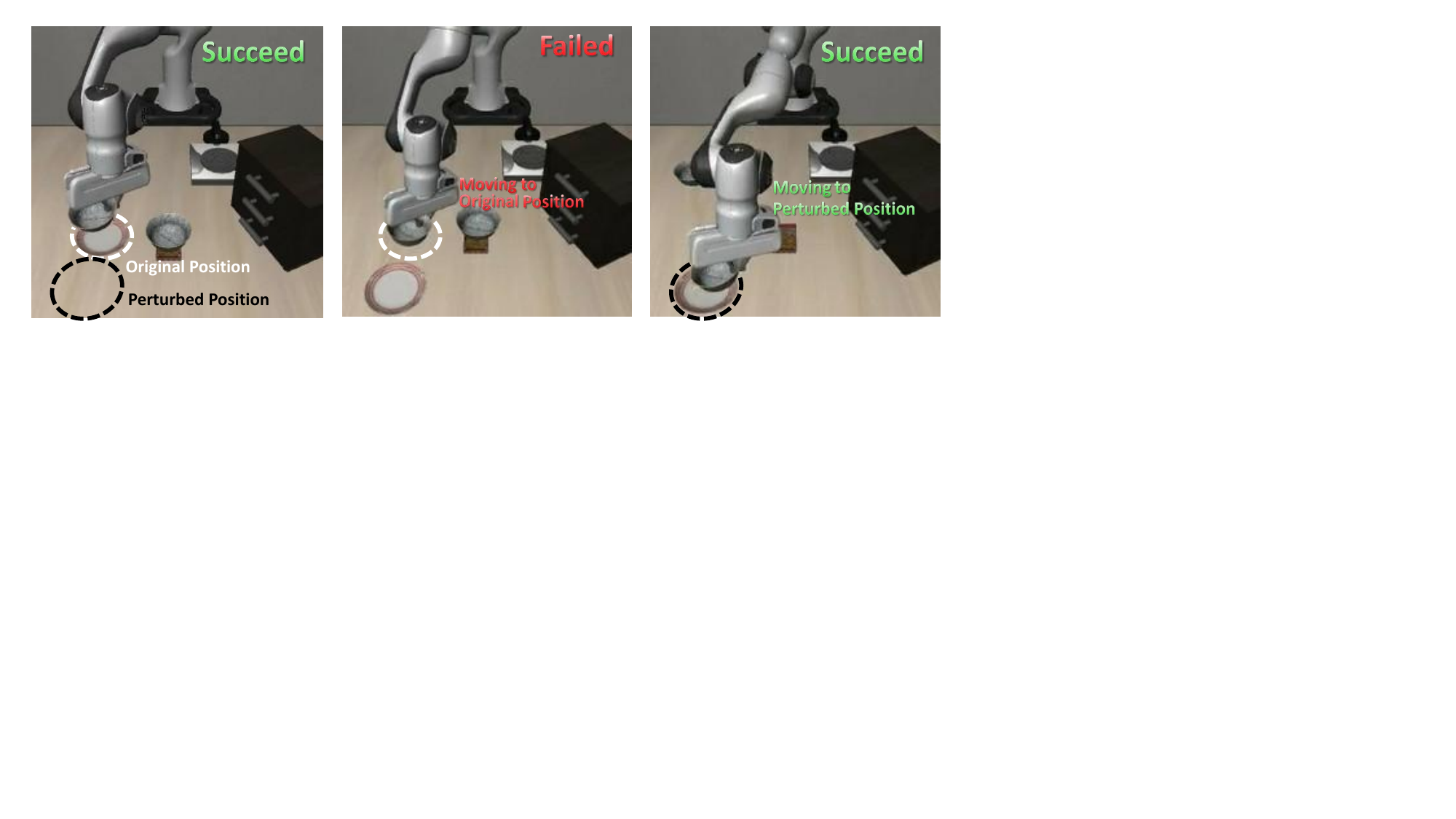}}
        \caption{{Qualitative comparison under spatial perturbation.
        Vanilla SFT remains biased toward the seen position, whereas FAN-SFT correctly adapts to the perturbed target location.}
        }
    \label{fig:libero_example}
\end{figure}

\paragraph{Implementation details.}
On ManiSkill, following \cite{liu2025What}, we train on \texttt{PutOnPlateInScene25Main-v3}, which comprises 25 diverse pick-and-place manipulation primitives across multiple object categories and target receptacles.
We collected a training dataset of the same scale (16K) as RL4VLA \cite{liu2025What} using a motion planner within the environment.
We use a pre-trained OpenVLA checkpoint from HuggingFace\footnote{ \url{https://huggingface.co/gen-robot/openvla-7b-rlvla-warmup}}.
For robustness evaluation, we assess performance under 15 perturbation types spanning visual, semantics, and execution variations.

For the LIBERO benchmark, we finetune models on the \texttt{LIBERO-Spatial} suite, which consists of 10 tasks, each provided with 50 human-teleoperated demonstrations. 
We use pre-trained OpenVLA and OpenVLA-OFT checkpoints from HuggingFace\footnote{\url{https://huggingface.co/openvla/openvla-7b}}.
Among them, OpenVLA-OFT uses the variant that simultaneously takes as input the third-person image, wrist camera image, robot proprioceptive state, and language instruction.
Additionally, we follow the setting of LIBERO-PRO \cite{zhou2025LIBEROPRO} and introduce varying degrees of spatial perturbations into the task setup to evaluate the model’s generalization under position-shifted OOD conditions.

\paragraph{Main Results.}
\begin{table*}[!t]
    \centering
    \small
    \caption{Comparison of RFT results on the ManiSkill benchmark. Values denote success rates (\%).}
    \label{tab:ms_rl}
    \renewcommand{\arraystretch}{0.85}
    \begin{tabular}{ll c cccc}
    \toprule
    & \multirow{2}{*}{\textbf{Method}}
    & \multirow{2}{*}{\textbf{In-Distribution}}
    & \multicolumn{4}{c}{\textbf{Out-of-Distribution}} \\
    \cmidrule(r){4-7}
    &
    &
    & \textbf{Vision} & \textbf{Semantic} & \textbf{Execution} & \textbf{Avg.} \\
    \midrule
    & RL4VLA\cite{liu2025What} & 93.8 & 76.6 & 72.8 & 78.3 & 75.9 \\
    \midrule  
    & OpenVLA + PPO & 95.9 $\pm$ 3.2 & 80.1 $\pm$ 0.1 & 79.7 $\pm$ 2.0 & 85.8 $\pm$ 1.8 & 81.9 \\
    & OpenVLA + FAN-PPO (Ours) & 97.4 $\pm$ 0.7 & 85.0 $\pm$ 4.0 & 86.7 $\pm$ 1.3 & 92.6 $\pm$ 1.5 & 88.1 \\
    \rowcolor{lightgray}
    & $\triangle$ Improvement & +1.5 & +4.9 & +7.0 & +6.9 & +6.2 \\
    \midrule
    & OpenVLA-OFT + PPO & 92.3  $\pm$ 2.5 &  84.9 $\pm$ 1.1  & 49.0 $\pm$ 0.6 & 55.9 $\pm$ 1.2 & 63.3 \\
    & OpenVLA-OFT + FAN-PPO (Ours) & 97.3 $\pm$ 1.3 & 88.1 $\pm$ 2.2 & 58.6 $\pm$ 1.0 & 67.0 $\pm$ 2.2 & 71.2 \\
    \rowcolor{lightgray}
    & $\triangle$ Improvement & +5.0 & +3.2 & +9.6 & +11.1 & +7.9 \\
    \bottomrule
    \end{tabular}
\end{table*}

\begin{figure*}[t]
    \centering
    \includegraphics[width=0.93\textwidth]{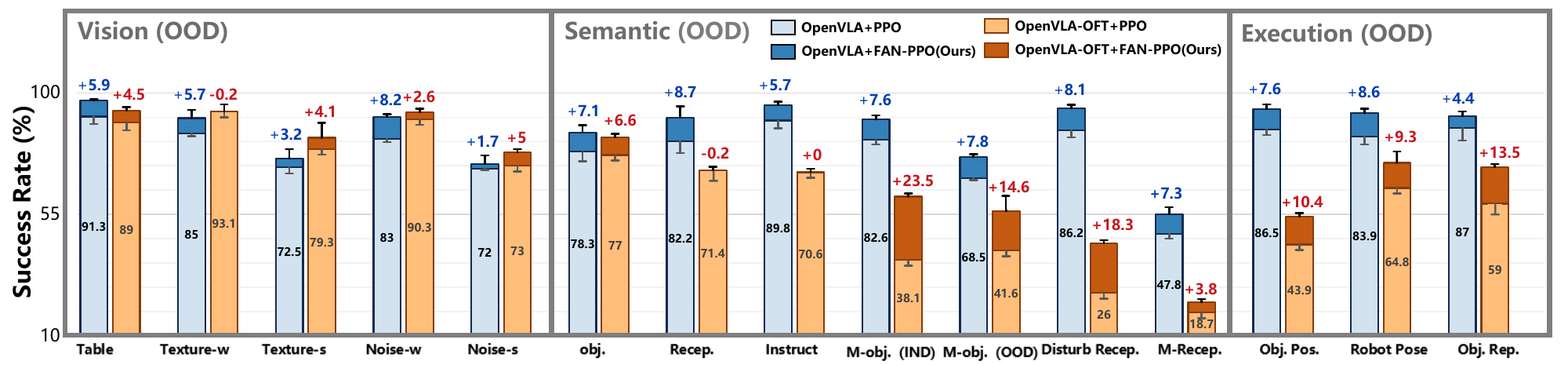}
    \caption{Performance comparison of OpenVLA and OpenVLA-OFT trained with PPO and FAN-PPO across various OOD tasks.}
    \label{fig:rl_ood}
\end{figure*}

\tref{tab:ms_sft} shows that, with FAN-guided regularization, FAN-SFT 
achieves average improvements of +11.7\% and +5.2\% on in-distribution and OOD tasks, respectively.
As detailed in \fref{fig:ms_ood}, our method achieves consistent improvements across all OOD tasks, with the largest absolute gain of 7.2\% observed on the Disturb Recep (distractive receptacle) task. These results demonstrate that the proposed regularization significantly enhances the model’s robustness under diverse perturbations.
%
{We also compare against label smoothing\cite{szegedy2016rethinking}, and the results show that FAN, as a structured regularizer, is more effective. Detailed results are provided in Appendix \tref{tab_sup:ls}.}

We further investigate the effectiveness of FAN-guided regularization under different training data scales. From the collected SFT data, we construct randomly sampled subsets at multiple data scales. 
{Similar to \cite{liu2025What}, we train for 60K steps when the dataset size is 1.6K or 16K, and for 20K steps for smaller subsets due to data scarcity.}
\fref{fig:scale} presents the average results on in-distribution and three types of OOD tasks. While the performance of both methods generally improves with increasing data scale, certain fluctuations are observed. Across most data scales, our FAN-guided regularization outperforms the baseline, which demonstrates its effectiveness under varying data sizes.

On LIBERO, we apply our method to both OpenVLA and OpenVLA-OFT models. 
On the \texttt{LIBERO-Spatial} suite, our FAN-guided regularization improves the success rate from 84.7\% to 87.2\% on OpenVLA and from 95.2\% to 98.8\% on OpenVLA-OFT. 
We further introduce varying degrees of perturbations along the \textit{x} and \textit{y} axes to evaluate the robustness of OpenVLA finetuned with SFT and {FAN-SFT.
{As shown in \fref{fig:libero_ood}, FAN-guided regularization yields stronger resilience to positional shifts, increasing the success rate from 0.24 to 0.36 at a perturbation of (\textit{x}=0.05, \textit{y}=0.05), for example.}
The qualitative results in \fref{fig:libero_example} further illustrate this effect: 
after spatial perturbation, the model trained with standard SFT tends to place the bowl at the same plate position seen during training, which indicates overfitting to familiar spatial patterns. 
In contrast, with FAN-SFT, the model can successfully adapt to unseen positions, demonstrating stronger spatial generalization ability.
%
{The complete LIBERO results are provided in Appendix \tref{tab_sup:libero_sft_ind}}.

\subsection{Reinforced Finetuning}

\paragraph{Implementation details.}

Considering that ManiSkill supports GPU-parallelized simulation, we conduct RFT on this benchmark. 
Similar to \sref{subsec:sft}, we interact within the \texttt{PutOnPlateInScene25Main-v3} environment and subsequently evaluate OOD performance under 15 types of perturbations.

In the proposed FAN-PPO method, two hyperparameters play crucial roles in controlling the stochasticity and adaptation strength. 
For OpenVLA, all results are obtained with $\sigma = 0.3$ and $\alpha = 1.0$. 
For OpenVLA-OFT, we use $\sigma = 0.2$ and $\alpha = 0.1$. 
We use the same pre-trained OpenVLA checkpoints as in previous experiments, and the OpenVLA-OFT weights are publicly available\footnote{\url{https://huggingface.co/RLinf/RLinf-OpenVLAOFT-ManiSkill-Base-Lora}
}. 

\paragraph{Main Results.}

\begin{figure}[!t]
    \centering
    \subfloat[Rollout on the training set.\label{fig:ms_oft_rl}]{
        \includegraphics[width=0.47\linewidth]{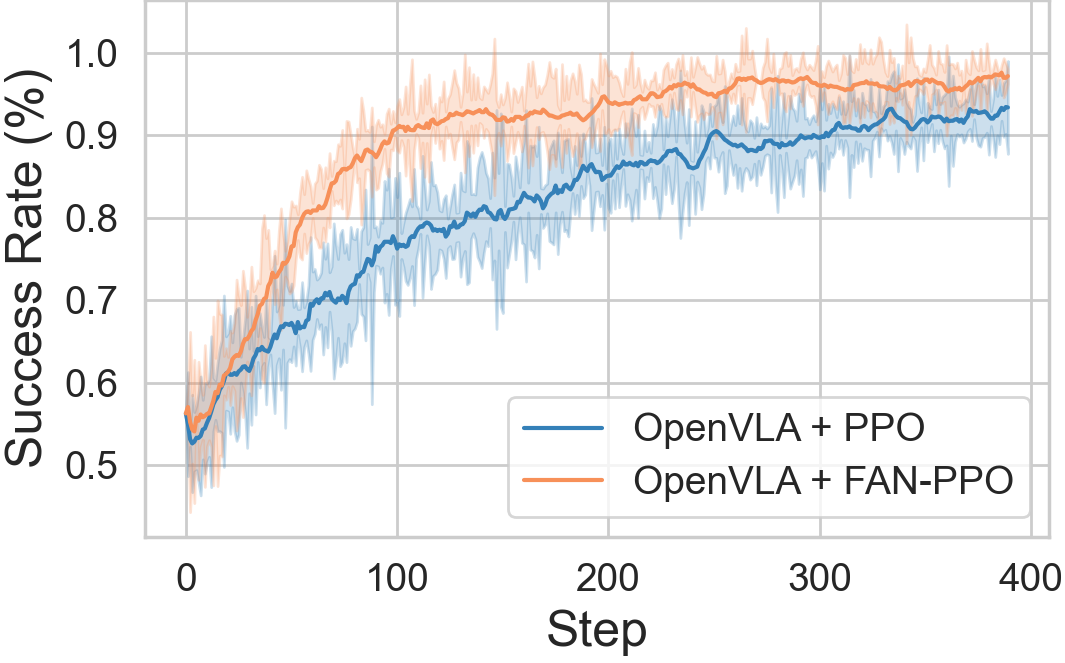}}
    \hfill
    \subfloat[Evaluation on the test set.\label{fig:ms_oft_rl_ood}]{
        \includegraphics[width=0.47\linewidth]{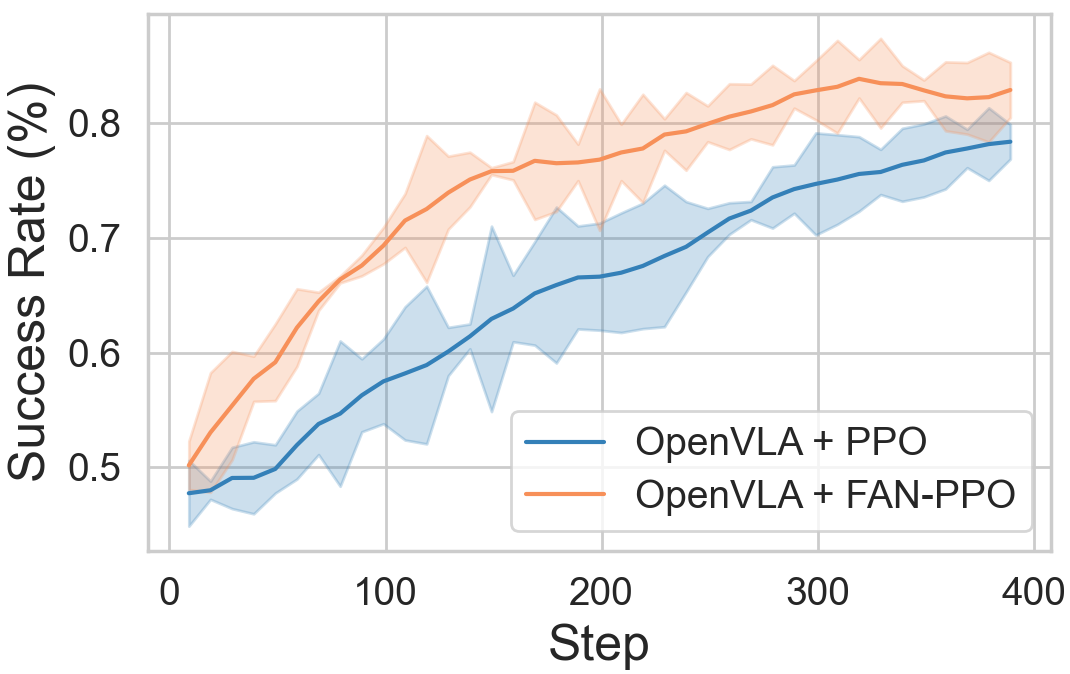}}

    \caption{Training curves of rollout and evaluation success rates during RFT of OpenVLA. Our FAN-guided regularization significantly improved the sample efficiency of OpenVLA.
    }
    \label{fig:openvla_rl}
\end{figure}

\begin{figure}[!t]
    \centering
    \subfloat[Rollout on the training set.\label{fig:ms_oft_rl}]{
        \includegraphics[width=0.47\linewidth]{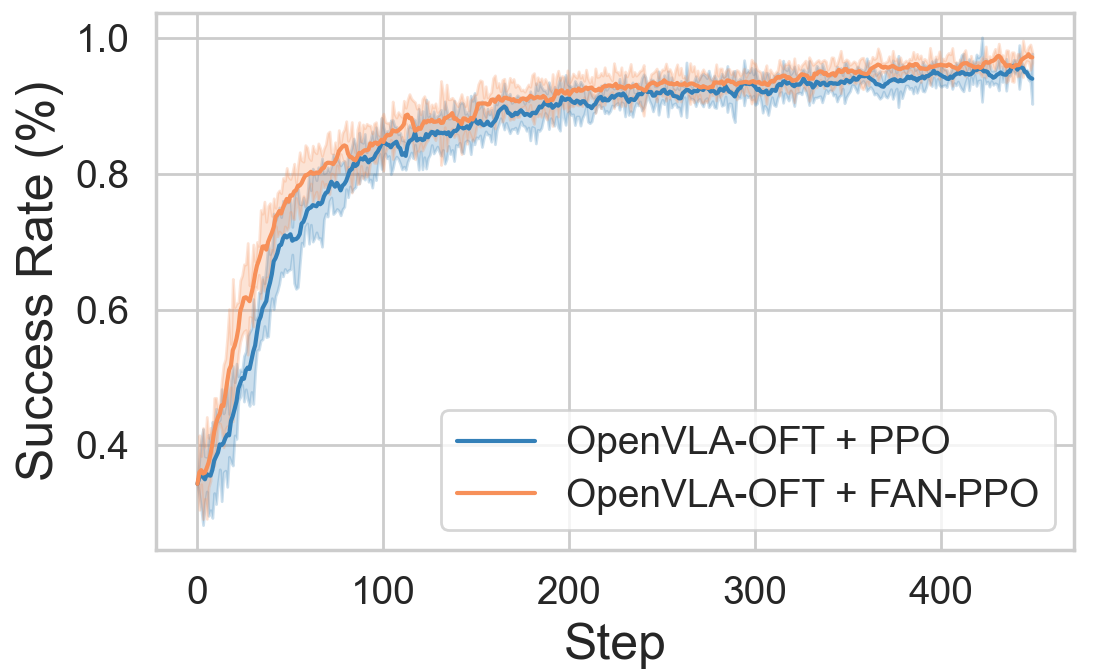}}
    \hfill
    \subfloat[Evaluation on the test set.\label{fig:ms_oft_rl_ood}]{
        \includegraphics[width=0.47\linewidth]{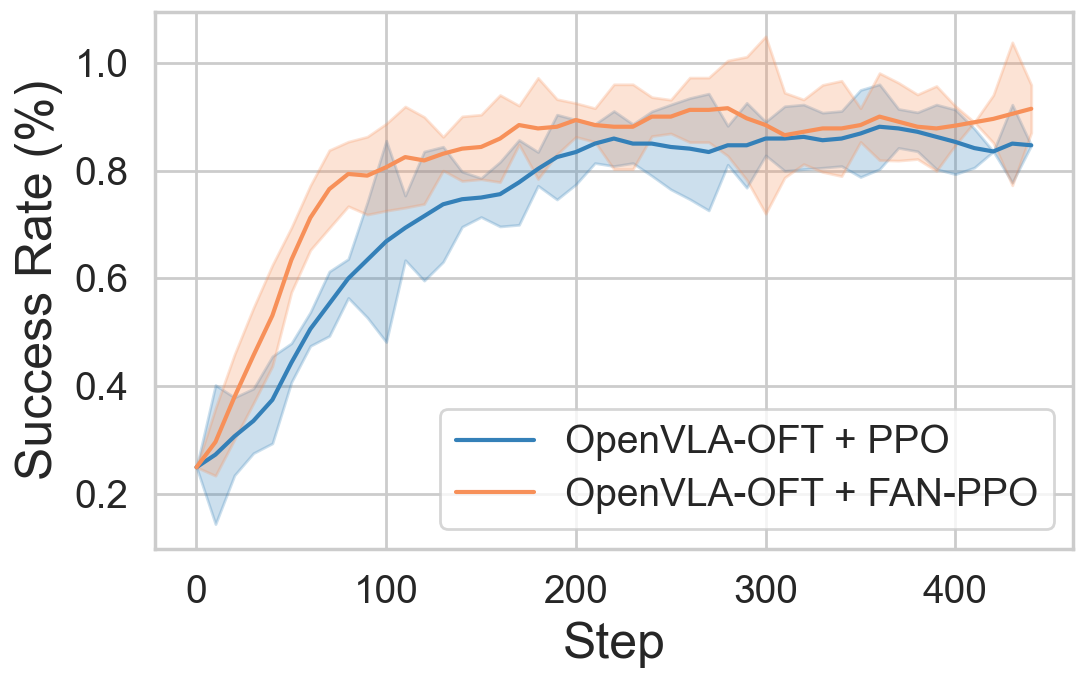}}

    \caption{Training curves of rollout and evaluation success rates during RFT of OpenVLA-OFT. Our FAN-guided regularization significantly improved the sample efficiency in evaluation and obtained better performance in training process of OpenVLA-OFT.
    }
    \label{fig:oft_rl}
    \vspace{-10pt}
\end{figure}

\tref{tab:ms_rl} summarizes the quantitative results of RFT on ManiSkill. 
It can be seen that applying FAN-guided regularization to both OpenVLA and OpenVLA-OFT consistently improves performance across in-distribution and all OOD settings. 
Notably, the improvement is most pronounced in the execution OOD setting, where the success rate increases by more than 10\%. 
As shown in \fref{fig:rl_ood}, which provides a more detailed breakdown across different OOD tasks, FAN-guided regularization yields consistent performance gains similar to results in the SFT experiments in \sref{subsec:sft}.

\fref{fig:openvla_rl} presents the rollout and evaluation success rates during RFT of OpenVLA. 
Clearly, FAN-guided regularization significantly accelerates convergence, effectively improving sample efficiency, achieving a 90\% success rate with only about one-third of the training steps required by the baseline. 
A similar trend can be observed under OOD evaluation, where our method maintains faster improvement and higher final performance. 
\fref{fig:oft_rl} presents the results on OpenVLA-OFT. 
Although less pronounced compared to OpenVLA, FAN-guided regularization still yields distinct performance gains throughout training, with more evident advantage in OOD settings.
More detailed quantitative results are provided in Appendix \tref{tab_sup:sample1}-\ref{tab_sup:sample4}.

\subsection{Real-World Experiment}

\begin{figure}[!t]
    \centering
    \subfloat[JAKA robot platform]{
        \label{subfig:jaka_robot}
        \includegraphics[width=0.42\linewidth]{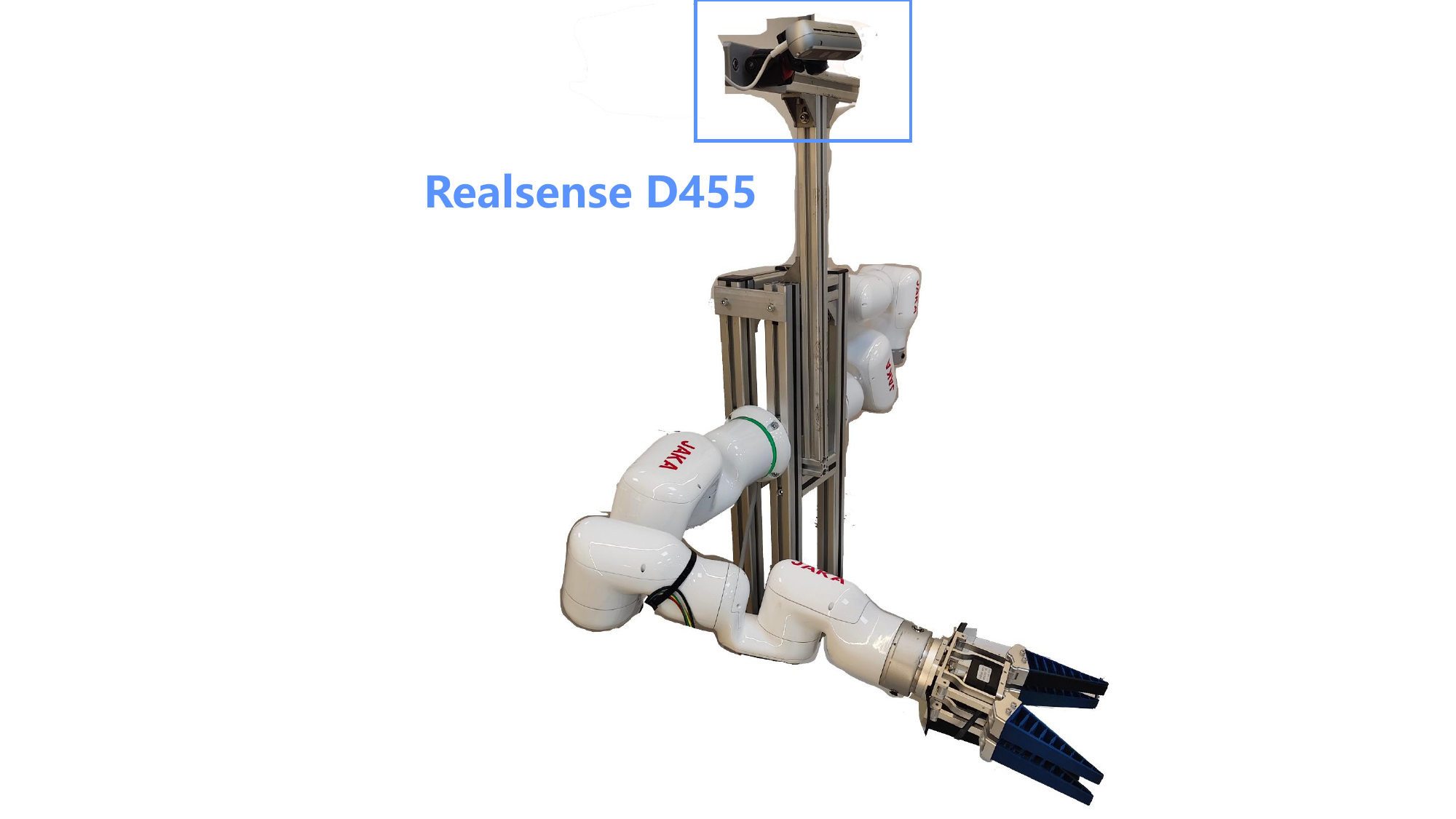}}
    \hspace{0.05\linewidth}
    \subfloat[Placing the object into a box from unseen spatial positions]{  
        \label{subfig:frenchs_placing}
        \centering
        \includegraphics[width=0.21\linewidth]{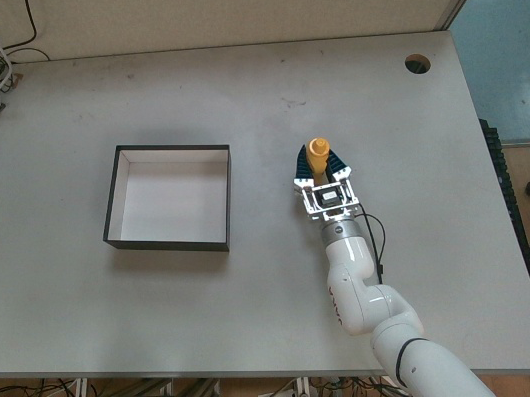}
        \hspace{0.02\linewidth}
        \includegraphics[width=0.21\linewidth]{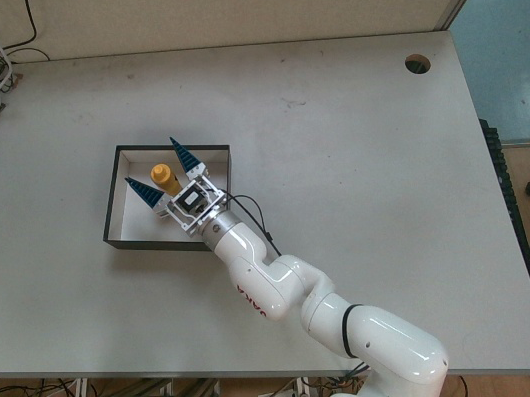}
    }
    \caption{%
    Real-world physical platform. (a) JAKA robot platform; (b) the task of placing the object into a box from unseen spatial positions, to evaluate the model's spatial robustness.
    }
    \label{fig:real_world_example}
\end{figure}

To evaluate the effectiveness of our method in real-world settings, we built a physical robotic platform for comparative experiments, as illustrated in \fref{fig:real_world_example}. 
The platform consists of a 7‑DoF JAKA manipulator equipped with a parallel gripper and a fixed Intel RealSense D455 camera that provides third-person RGB images.

\begin{table}[!t]
    \centering
    \small
    \caption{Real-world evaluation results (Successes / Total Trials).}
    \label{tab:real_world}
    \renewcommand{\arraystretch}{1.0}
    \begin{tabular*}{\linewidth}{@{\extracolsep{\fill}}lcccc@{}}
        \toprule
        \textbf{Method} & \textbf{Task-1} & \textbf{Task-2} & \textbf{Task-3} & \textbf{Task-4} \\
        \midrule
        OpenVLA\cite{kim2024OpenVLA} & 19/30 & 7/30 & 7/30 & 1/30 \\
        OpenVLA + FAN-SFT & 22/30 & 12/30 & 17/30 & 7/30 \\
        \bottomrule
    \end{tabular*}
\end{table}

We evaluate the OpenVLA models trained with SFT and the proposed FAN-SFT.
A total of 150 demonstration trajectories (50 per object) are collected for training. 
We define four tasks for evaluation: 
Task-1 corresponds to the standard setting, consistent with the demonstrations, e.g., placing the target object into a designated box, as shown in \fref{subfig:frenchs_placing}; 
Task-2 perturbs the initial pose of the object; 
Task-3 perturbs the manipulator’s initial pose; 
and Task-4 perturbs the box’s initial position. 
Each task is evaluated over 30 independent trials. The results in \tref{tab:real_world} show that our method exhibits stronger robustness in the presence of spatial perturbations. 
More details are provided in Appendix.

\section{Conclusion}
In this work, we addressed the fundamental mismatch between VLA finetuning methods and the intrinsic tolerance of physical actions, which is a property we term FAN. We first demonstrated empirically that standard methods, which ignore this property, produce ``spiky" policy distributions that fail to generalize. To remedy this, we introduced a FAN-guided regularizer, applicable to both SFT and RFT. This regularizer explicitly shapes the policy's geometric structure, guiding it to approximate a target Gaussian. Our extensive experiments demonstrated that by actively shaping the policy distribution, VLA models achieve significant improvement in {sample efficiency and OOD generalization}. 
Overall, our findings indicate that controlling policy geometry constitutes a principled and highly effective direction for developing robust and generalizable robotic agents.

\section*{Acknowledgements}
This work was supported by the Science and Technology Innovation (STI) 2030--Major Project under Grant 2022ZD0208700 and Shanghai Pilot Industry Innovation and Development Project (Artificial Intelligence Program) under Grant 2025-GZL-RGZN-01033. 

{
    \small
    \bibliographystyle{ieeenat_fullname}
    \bibliography{main}
}

\clearpage
\setcounter{page}{1}
\maketitlesupplementary

\tableofcontents
\newpage

\addtocontents{toc}{\protect\setcounter{tocdepth}{2}}
\section{Proof of Proposition 1}

\begin{proof}[Proof of Proposition 1]
    Because $\pi(\cdot|s, l)$ is a probability distribution, given any $s \in S$, it satisfies $\sum_{a \in A} \pi(a|s, l)=1$. We also add this constraint into the objective. The objective can be modified as
    \begin{equation*}
        \begin{aligned}
            \max_{\pi} \quad &  \mathbb{E}_{\pi_t}\left[ \frac{\pi(a|s, l)}{\pi_t(a|s, l)} A^{\pi_t}(s,a, l) \right]  - \alpha D_{\text{KL}}(\pi(\cdot|s, l) \| \mathcal{N}(\cdot|\mu(s),\Sigma)) \\
            \text{s.t.} \quad & D_{\text{KL}}(\pi(\cdot|s, l) \| \pi_t(\cdot|s, l)) \leq \epsilon \text{ and } \sum_{a} \pi(a|s, l) = 1
        \end{aligned}
    \end{equation*}
    Using method of Lagrange multiplier for the objective, we have
    \begin{align*}
        \mathcal{L}(\pi, \beta, \lambda)
        = & \sum_{a \in A} \pi(a|s, l) A^{\pi_t}(s,a, l) - \alpha \sum_{a \in A} \pi(a|s, l) \log \frac{\pi(a|s, l)}{\mathcal{N}(a|\mu(s),\Sigma)} - \beta \left( \sum_{a \in A} \pi(a|s, l) \log \frac{\pi(a|s, l)}{\pi_t(a|s, l)} - \epsilon \right) \\
        &  - \lambda (\sum_{a} \pi(a|s, l) - 1) \\
        = & \sum_{a \in A} \pi(a|s, l) A^{\pi_t}(s,a, l) - \alpha \sum_{a \in A} \pi(a|s, l) \log \pi(a|s, l)  + \alpha  \sum_{a \in A} \pi(a|s, l) \log \mathcal{N}(a|\mu(s),\Sigma) \\
        & - \beta \sum_{a \in A} \pi(a|s, l) \log \pi(a|s, l)  + \beta\sum_{a \in A} \pi(a|s, l) \log \pi_t(a|s, l)  - \lambda (\sum_{a} \pi(a|s, l) - 1) + \beta \epsilon \\
        = & \sum_{a \in A} \pi(a|s, l) A^{\pi_t}(s,a, l) - (\alpha + \beta) \sum_{a \in A} \pi(a|s, l) \log \pi(a|s, l) + \alpha  \sum_{a \in A} \pi(a|s, l) \log \mathcal{N}(a|\mu(s),\Sigma) \\
        & + \beta\sum_{a \in A} \pi(a|s, l) \log \pi_t(a|s, l) - \lambda (\sum_{a} \pi(a|s, l) - 1) + \beta \epsilon .
    \end{align*}
    From the property of the method of Lagrange multiplier, $\mathcal{L}(\pi, \beta, \lambda)$ is a convex function for $\pi$. To find the optimal policy $\pi^*$, we solve for the critical point of $\mathcal{L}(\pi, \beta, \lambda)$. Given any $a$, we calculate the partial derivative of $\mathcal{L}(\pi, \beta, \lambda)$, then we have
    \begin{align*}
        \frac{\partial \mathcal{L}(\pi, \beta, \lambda)}{ \partial \pi(a|s, l)} & =  A^{\pi_t}(s,a, l) - (\alpha + \beta) (\log \pi(a|s, l) + 1)  + \alpha \log \mathcal{N}(a|\mu(s),\Sigma) + \beta\log \pi_t(a|s, l) - \lambda
    \end{align*}
    We then set $\frac{\partial \mathcal{L}(\pi, \beta, \lambda)}{ \partial \pi(a|s, l)} = 0$, and have
    \begin{align*}
        & A^{\pi_t}(s,a, l) - (\alpha + \beta) (\log \pi(a|s, l) + 1)  + \alpha \log \mathcal{N}(a|\mu(s),\Sigma) + \beta\log \pi_t(a|s, l) - \lambda = 0 \\
        \Rightarrow & (\alpha + \beta) (\log \pi(a|s, l) + 1) = A^{\pi_t}(s,a, l)  + \alpha \log \mathcal{N}(a|\mu(s),\Sigma) + \beta\log \pi_t(a|s, l) - \lambda \\
        \Rightarrow & \log \pi(a|s, l)= \frac{A^{\pi_t}(s,a, l)}{\alpha + \beta} + \frac{\alpha}{\alpha + \beta} \log \mathcal{N}(a|\mu(s),\Sigma) + \frac{\beta}{\alpha + \beta} \log \pi_t(a|s, l)  - \frac{\lambda}{\alpha + \beta} - 1 \\
        \Rightarrow & \pi(a|s, l) = \exp\left( \frac{A^{\pi_t}(s,a, l)}{\alpha + \beta}  \right)\mathcal{N}(a|\mu(s),\Sigma)^{\frac{\alpha}{\alpha + \beta}}  \pi_t(a|s, l)^{\frac{\beta}{\alpha + \beta}}  \exp\left( - \frac{\lambda}{\alpha + \beta} - 1 \right) \\
        \Rightarrow & \pi(a|s, l) = \exp\left( \frac{Q^{\pi_t}(s,a, l)}{\alpha + \beta}  \right)\mathcal{N}(a|\mu(s),\Sigma)^{\frac{\alpha}{\alpha + \beta}}  \pi_t(a|s, l)^{\frac{\beta}{\alpha + \beta}}  \exp\left( \frac{-\lambda}{\alpha + \beta} - 1 \right) \exp\left( \frac{-V^{\pi_t}(s, l)}{\alpha + \beta}  \right)
    \end{align*}
    Because the last two terms are not related to action $a$ and the optimal value $\beta^*$ for $\beta$ can be calculated using KKT condition, we can then have the form of the optimal policy $\pi_{t+1}$ as
    \begin{align*}
        \pi_{t+1}(a|s, l) & \propto  \mathcal{N}(a|\mu(s),\Sigma)^{\frac{\alpha}{\alpha + \beta^*}} \pi_t(a|s, l)^{\frac{\beta^*}{\alpha + \beta^*}} \exp\left( \frac{Q^{\pi_t}(s,a, l)}{\alpha + \beta^*}\right).
    \end{align*} 
    The proof is completed.
\end{proof}

\section{Additional Experiments on Supervised Finetuning}
\subsection{Implementation Details}

For a fair comparison, we adopt the hyperparameter configurations reported in the original OpenVLA and OpenVLA-OFT implementations for each corresponding setting. 
Our proposed method, FAN-SFT, builds upon this established setup and introduces an additional regularization loss, governed by the coefficient \(\alpha\).

Table~\ref{tab_sup:hyperparams_sft1}, Table~\ref{tab_sup:hyperparams_sft2}, and Table~\ref{tab_sup:hyperparams_sft3} provide a comprehensive summary of the hyperparameters used in our experiments. 
Specifically, they respectively detail the SFT hyperparameters for OpenVLA on the ManiSkill benchmark, for OpenVLA on the LIBERO benchmark, and for OpenVLA-OFT on the LIBERO benchmark, including the differences between the baseline configuration and our FAN-SFT specific addition.

We evaluate our method on a diverse set of out-of-distribution (OOD) task variants on ManiSkill, as in \cite{liu2025What}, 
which can be grouped into three categories: 
vision, semantic, and execution.
Table~\ref{tab_sup:ood_tasks} summarizes the definition of each variant.

\begin{table*}[htbp]
    \centering
    \caption{Hyperparameters for SFT OpenVLA on the ManiSkill benchmark. We adopt the configuration from RL4VLA for our baseline and extend it with a regularization coefficient $\alpha$ for our method (FAN-SFT).}
    \label{tab_sup:hyperparams_sft1}
    \renewcommand{\arraystretch}{1.2}
    \begin{tabular}{@{}ll@{}}
        \toprule
        \multicolumn{2}{@{}l}{\textbf{Base Configuration (adopted from RL4VLA)}} \\
        \midrule
        \# GPUs & 4 $\times$ NVIDIA A100 (80GB VRAM) \\
        learning rate (LR) & 5e-4 \\
        total batch size & 40 (10 per GPU) \\
        grad accumulation steps & 1 \\
        input images & 1 third-person camera \\
        input image size & 224 $\times$ 224 px \\
        LoRA rank & 32 \\
        \midrule
        \multicolumn{2}{@{}l}{\textbf{Addition for Our Method (FAN-SFT)}} \\
        \midrule
        regularization coeff. $\alpha$ &
            0.05 \\
        \bottomrule
    \end{tabular}
\end{table*}

\begin{table*}[htbp]
    \centering
    \caption{Hyperparameters for SFT OpenVLA on the LIBERO benchmark. We adopt the configuration from OpenVLA for our baseline and extend it with a regularization coefficient $\alpha$ for our method (FAN-SFT).}
    \label{tab_sup:hyperparams_sft2}
    \renewcommand{\arraystretch}{1.2}
    \begin{tabular}{@{}ll@{}}
        \toprule
        \multicolumn{2}{@{}l}{\textbf{Base Configuration (adopted from OpenVLA)}} \\
        \midrule
        \# GPUs & 2 $\times$ NVIDIA A100 (80GB VRAM) \\
        learning rate (LR) & 5e-4 \\
        total batch size & 48 (24 per GPU) \\
        grad accumulation steps & 1 \\
        input images & 1 third-person camera \\
        input image size & 224 $\times$ 224 px \\
        LoRA rank & 32 \\
        \midrule
        \multicolumn{2}{@{}l}{\textbf{Addition for Our Method (FAN-SFT)}} \\
        \midrule
        regularization coeff. $\alpha$ &
        \begin{tabular}[t]{@{}l@{}}
            {0.01}
        \end{tabular} \\
        \bottomrule
    \end{tabular}
\end{table*}

\begin{table*}[htbp]
    \centering
    \caption{Hyperparameters for SFT OpenVLA-OFT on the LIBERO benchmark. We adopt the configuration from OpenVLA-OFT for our baseline and extend it with a regularization coefficient $\alpha$ for our method (FAN-SFT).}
    \label{tab_sup:hyperparams_sft3}
    \renewcommand{\arraystretch}{1.2}
    \begin{tabular}{@{}ll@{}}
        \toprule
        \multicolumn{2}{@{}l}{\textbf{Base Configuration (adopted from OpenVLA-OFT)}} \\
        \midrule
        \# GPUs & 4 $\times$ NVIDIA A100 (80GB VRAM) \\
        learning rate (LR) & 5e-4 \\
        total batch size & 32 (8 per GPU) \\
        grad accumulation steps & 2 \\
        input images & 1 third-person camera, 1 wrist-mounted camera \\
        input image size & 224 $\times$ 224 px \\
        LoRA rank & 32 \\
        action chunk size & 8 steps (predict 8, execute all 8 open-loop) \\
        use proprio (robot state) & yes \\
        use FiLM & no \\
        image augmentations & 90\% random crops, color jitter: \\
        & \;\; \texttt{random\_resized\_crop=dict(scale=[0.9, 0.9], ratio=[1.0, 1.0])} \\
        & \;\; \texttt{random\_brightness=[0.2]} \\
        & \;\; \texttt{random\_contrast=[0.8, 1.2]} \\
        & \;\; \texttt{random\_saturation=[0.8, 1.2]} \\
        & \;\; \texttt{random\_hue=[0.05]} \\
        \midrule
        \multicolumn{2}{@{}l}{\textbf{Addition for Our Method (FAN-SFT)}} \\
        \midrule
        regularization coeff. $\alpha$ &
        0.05 \\
        \bottomrule
    \end{tabular}
\end{table*}

\begin{table}[htbp]
    \centering
    \small
    \caption{
    Definitions of the OOD task variants.
    The first five variants correspond to vision OOD, the middle eight to semantic OOD, 
    and the last three to execution OOD.
    }
    \label{tab_sup:ood_tasks}
    \renewcommand{\arraystretch}{1.0}
    \begin{tabular}{ll}
        \toprule
        \textbf{Variant} & \textbf{Definition} \\
        \midrule
        \multicolumn{2}{l}{\textbf{Vision}} \\
        \midrule
        Table      & unseen table \\
        Texture-w  & dynamic texture (weak) \\
        Texture-s  & dynamic texture (strong) \\
        Noise-w    & dynamic noise (weak) \\
        Noise-s    & dynamic noise (strong) \\
        \midrule
        \multicolumn{2}{l}{\textbf{Semantic}} \\
        \midrule
        Obj.        & unseen objects \\
        Recep.      & unseen receptacles \\
        Instruct   & unseen instructions \\
        M-obj. (IND) & multi-object (both seen) \\
        M-obj. (OOD) & multi-object (both unseen) \\
        Disturb Recep. & distractive receptacle \\
        M-Recep.    & multi-receptacle (both unseen) \\
        Obj. Pos.    & unseen position (object \& receptacle) \\
        \midrule
        \multicolumn{2}{l}{\textbf{Execution}} \\
        \midrule
        Robot Pose & unseen robot initial pose \\
        Obj. Rep. & mid-episode object reposition \\
        \bottomrule
    \end{tabular}
\end{table}

\clearpage
\subsection{Additional Results}

\paragraph{ManiSkill.}

For the experiments on the ManiSkill benchmark, we reference the RL4VLA results directly from the original paper. Since the SFT dataset used in RL4VLA is not publicly available, we reproduce its setting by collecting an SFT dataset of the same size in the same environments using a motion planner. Based on this reproduced setup, we obtain the baseline and further implement our proposed OpenVLA + FAN-SFT method.

Detailed results regarding Figure~\ref{fig:ms_ood} are provided in Table~\ref{tab_sup:SFT_combined_results}, while Table~\ref{tab_sup:ms_sft} presents a comprehensive comparison against the baselines. 
As illustrated in Table~\ref{tab_sup:ms_sft}, our proposed OpenVLA + FAN-SFT significantly outperforms the OpenVLA + SFT baseline, achieving a substantial gain of $11.7\%$ in in-distribution tasks and an average improvement of $5.2\%$ across out-of-distribution (OOD) tasks. 
Crucially, this enhancement is consistent across all generalization axes: we observe performance boosts of $5.1\%$, $6.1\%$, and $4.4\%$ in Vision, Semantic, and Execution categories, respectively. The breakdown in Table~\ref{tab_sup:SFT_combined_results} further highlights the robustness of our method in challenging environments. Notably, FAN-SFT exhibits remarkable efficacy in tasks requiring complex reasoning and noise resilience, such as `M-Obj. (OOD)' ($+9.3\%$), `Disturb Recep.' ($+7.8\%$), and `Noise-s' ($+7.2\%$).

\begin{table*}[htbp]
    \centering
    \small
    \caption{Comparison of supervised fine-tuning (SFT) results on the ManiSkill benchmark.
    Values denote success rates (\%).
    Results marked with \textsuperscript{†} are taken directly from the original RL4VLA paper.
    Since the SFT dataset used in RL4VLA is not publicly available, we reproduce its setting by collecting an SFT dataset of the same size in the same environments using a motion planner.
    Based on this reproduced setup, we obtain the OpenVLA + SFT baseline as our reproduction of RL4VLA, and further implement our proposed OpenVLA + FAN-SFT method.
    }  %
    \label{tab_sup:ms_sft}
    \renewcommand{\arraystretch}{1.0}
    \begin{tabular}{ll c cccc}
    \toprule
    & \multirow{2}{*}{\textbf{Method}}
    & \multirow{2}{*}{\textbf{In-Distribution}}
    & \multicolumn{4}{c}{\textbf{Out-of-Distribution}} \\
    \cmidrule(r){4-7}
    &
    &
    & \textbf{Vision} & \textbf{Semantic} & \textbf{Execution} & \textbf{Avg.} \\
    \midrule
    & RL4VLA\textsuperscript{†} \cite{liu2025What} & 88.5 & 74.0 & 61.8 & 46.2 & 60.7 \\
    \midrule
    & OpenVLA + SFT \cite{kim2024OpenVLA} & 78.1 $\pm$ 3.1  & 76.6 $\pm$ 1.9 & 57.4 $\pm$ 0.9 & 40.4 $\pm$ 0.8 & 58.1 \\
    & OpenVLA + FAN-SFT (Ours) & 89.8 $\pm$ 0.8 & 81.7 $\pm$ 1.1   & 63.5 $\pm$ 1.5   & 44.8 $\pm$ 0.5 & 63.3 \\
    \rowcolor{lightgray}
    & $\triangle$ Improvement & +11.7 & +5.1 & +6.1 & +4.4 & +5.2 \\
    \bottomrule
    \end{tabular}
\end{table*}

\begin{table*}[htbp]
    \centering
    \caption{Detailed SFT results on the ManiSkill benchmark (OOD tasks).}
    \label{tab_sup:SFT_combined_results}
    \begin{tabular}{@{}ll|c|ccc@{}}
        \toprule
        & \textbf{Task} 
        & \textbf{RL4VLA(SFT)\textsuperscript{†}} 
        & \textbf{OpenVLA+SFT} 
        & \textbf{OpenVLA+FAN-SFT} 
        & \textbf{$\Delta$} \\
        \midrule

        \multirow{5}{*}{\rotatebox{90}{\textbf{Vision}}}
         & Table        & 86.5 & 80.6 $\pm$ 5.5 & 84.4 $\pm$ 2.9 & +3.8 \\
         & Texture-w    & 79.7 & 80.0 $\pm$ 3.4 & 85.9 $\pm$ 3.1 & +5.9 \\
         & Texture-s    & 63.0 & 72.5 $\pm$ 5.8 & 73.4 $\pm$ 4.5 & +0.9 \\
         & Noise-w      & 77.1 & 81.3 $\pm$ 4.0 & 87.2 $\pm$ 1.8 & +5.9 \\
         & Noise-s      & 63.5 & 66.9 $\pm$ 2.3 & 74.1 $\pm$ 3.2 & +7.2 \\
        \cmidrule(l){1-6}

        \multirow{8}{*}{\rotatebox{90}{\textbf{Semantic}}}
         & Obj.           & 53.6 & 42.5 $\pm$ 4.1 & 49.4 $\pm$ 4.8 & +6.9 \\
         & Recep.         & 68.2 & 70.6 $\pm$ 4.5 & 77.5 $\pm$ 3.6 & +6.9 \\
         & Instruct       & 89.3 & 78.4 $\pm$ 4.4 & 85.0 $\pm$ 1.5 & +6.6 \\
         & M-Obj. (IND)   & 69.3 & 68.1 $\pm$ 1.2 & 71.9 $\pm$ 2.3 & +3.8 \\
         & M-Obj. (OOD)   & 35.9 & 26.6 $\pm$ 4.5 & 35.9 $\pm$ 4.3 & +9.3 \\
         & Disturb Recep. & 72.4 & 69.4 $\pm$ 4.6 & 77.2 $\pm$ 4.1 & +7.8 \\
         & M-Recep.       & 49.0 & 47.8 $\pm$ 4.3 & 48.0 $\pm$ 4.2 & +0.2 \\
        \cmidrule(l){1-6}

        \multirow{4}{*}{\rotatebox{90}{\textbf{Execution}}}
         & Obj. Pos.    & 71.4 & 63.8 $\pm$ 4.8 & 66.9 $\pm$ 4.2 & +3.1 \\
         & Robot Pose   & 34.9 & 34.1 $\pm$ 4.3 & 35.6 $\pm$ 3.2 & +1.5 \\
         & Obj. Rep.    & 32.3 & 25.0 $\pm$ 3.3 & 31.6 $\pm$ 1.8 & +6.6 \\[2mm]
        \bottomrule
    \end{tabular}
\end{table*}

\paragraph{LIBERO.}

Table~\ref{tab_sup:libero_sft_ind} reports the performance comparison on the LIBERO benchmark.
With our FAN-guided regularization, the proposed method consistently improves performance across all LIBERO suites, yielding an average gain of 2.4 percentage points.
In terms of overall average success rate, our approach reaches \(96.9\%\), surpassing the latest state-of-the-art method UniVLA by \(1.7\%\).

Given that existing methods have achieved near-saturated performance on standard In-Distribution (IND) LIBERO tasks (with success rates exceeding \(90\%\), as shown in Table~\ref{tab_sup:libero_sft_ind}), relying solely on IND metrics makes it difficult to distinguish the true robustness of the policies.
To address this and rigorously evaluate spatial generalization, we introduce positional perturbations by shifting the target placement location (i.e., the plate) along the \(x\) and \(y\) axes.
Qualitative visualizations of these tasks are presented in Figure~\ref{fig_sup:libero_tasks_xy}.
As observed from the rollout trajectories, the baseline OpenVLA + SFT policy (top rows) frequently fails to adapt to these spatial shifts, often executing motions biased towards the original training distribution, which we attribute to the single-correct-bin constraint imposed during vanilla SFT training.
In contrast, our method OpenVLA + FAN-SFT (bottom rows) explicitly incorporates a feasible action neighborhood during training, which effectively broadens the policy's admissible action space and improves its ability to adapt to variations in the target location.
As a result, it exhibits stronger spatial generalization and is able to accurately place the object onto the perturbed plate across all tasks.

\begin{table*}[htbp]
    \centering

    \caption{Detailed SFT results on the LIBERO Benchmark.}
    \label{tab_sup:libero_sft_ind}
    \renewcommand{\arraystretch}{1.0}
    \setlength{\tabcolsep}{10pt}
    \begin{tabular}{l cccc | c}
    \toprule
    \textbf{Method} & \textbf{Spatial} & \textbf{Object} & \textbf{Goal} & \textbf{Long} & \textbf{Avg.} \\
    \midrule
    Octo\cite{ghosh2024Octo} & 78.9 & 85.7 & 84.6 & 51.1 & 75.1 \\
    OpenVLA\cite{kim2024OpenVLA} & 84.7 & 88.4 & 79.2 & 53.7 & 76.5 \\
    WorldVLA\cite{cen2025WorldVLA} & 87.6 & 96.2 & 83.4 & 60.0 & 81.8 \\
    TriVLA\cite{liu2025TriVLA} & 91.2 & 93.8 & 89.8 & 73.2 & 87.0 \\  
    UniVLA\cite{bu2025UniVLA} & 96.5 & \textbf{96.8} & 95.6 & 92.0 & 95.2 \\  
    \midrule
    OpenVLA-OFT\cite{kim2025FineTuning} & 95.2 & 94.2 & 95.2 & 93.2 & 94.5 \\
    \textbf{OpenVLA-OFT + FAN-SFT (Ours)} & \textbf{98.8} & 96.6 & \textbf{97.0} & \textbf{95.2} & \textbf{96.9} \\
    \rowcolor{lightgray}
    $\triangle$ Improvement & +3.6 & +2.4 & +1.8 & +2.0 & +2.4 \\
    \bottomrule
    \end{tabular}
\end{table*}

\begin{figure}[htbp]
    \centering

    \begin{subfigure}{0.9\textwidth}
        \centering
        \includegraphics[width=\textwidth]{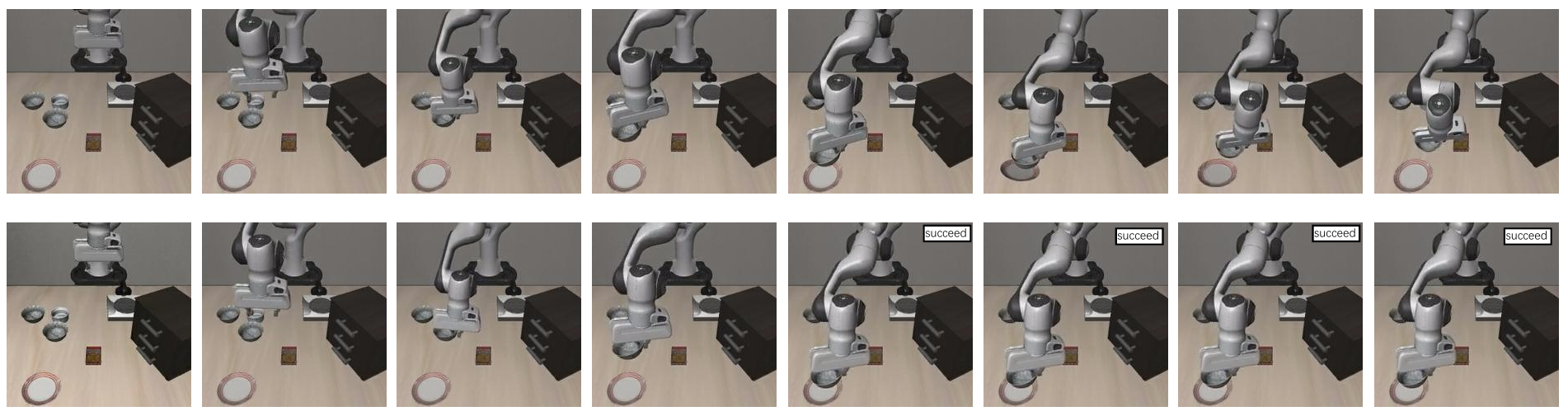}
        \caption{
        Task 1 (\(x\)-axis disturbance on the plate): pick up the black bowl between the plate and the ramekin and place it on the (perturbed) plate.
        The baseline OpenVLA + SFT policy (top) fails, whereas our OpenVLA + FAN-SFT policy (bottom) succeeds.
        }
        \label{fig_sup:libero_task1}
    \end{subfigure}

    \vspace{0.6em}

    \begin{subfigure}{0.9\textwidth}
        \centering
        \includegraphics[width=\textwidth]{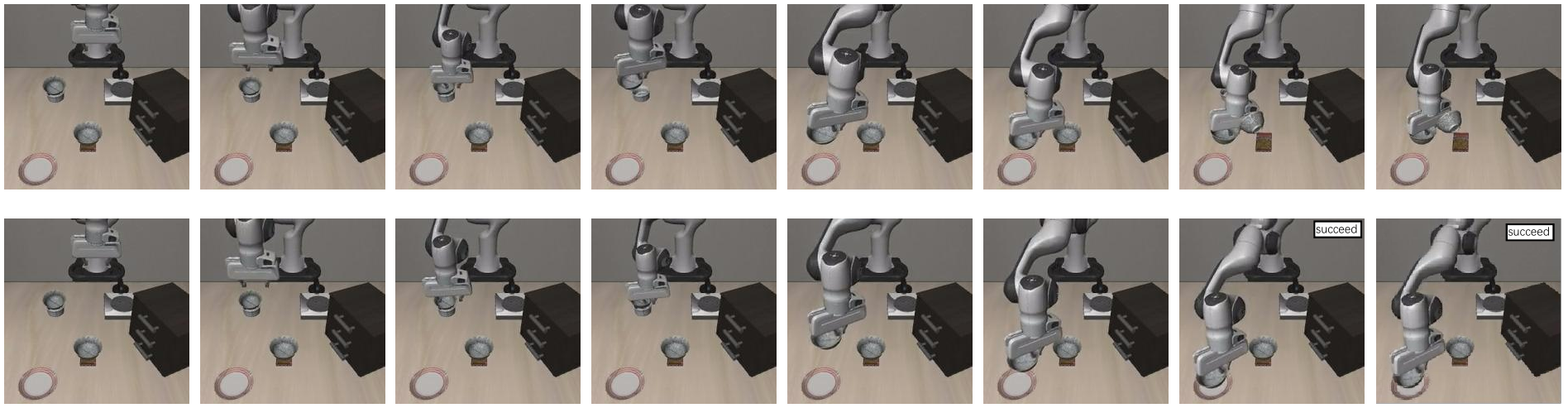}
        \caption{
        Task 2 (\(x\)-axis disturbance on the plate): pick up the black bowl on the ramekin and place it on the (perturbed) plate.
        The baseline OpenVLA + SFT policy (top) fails, whereas our OpenVLA + FAN-SFT policy (bottom) succeeds.
        }
        \label{fig_sup:libero_task2}
    \end{subfigure}

    \vspace{0.6em}

    \begin{subfigure}{0.9\textwidth}
        \centering
        \includegraphics[width=\textwidth]{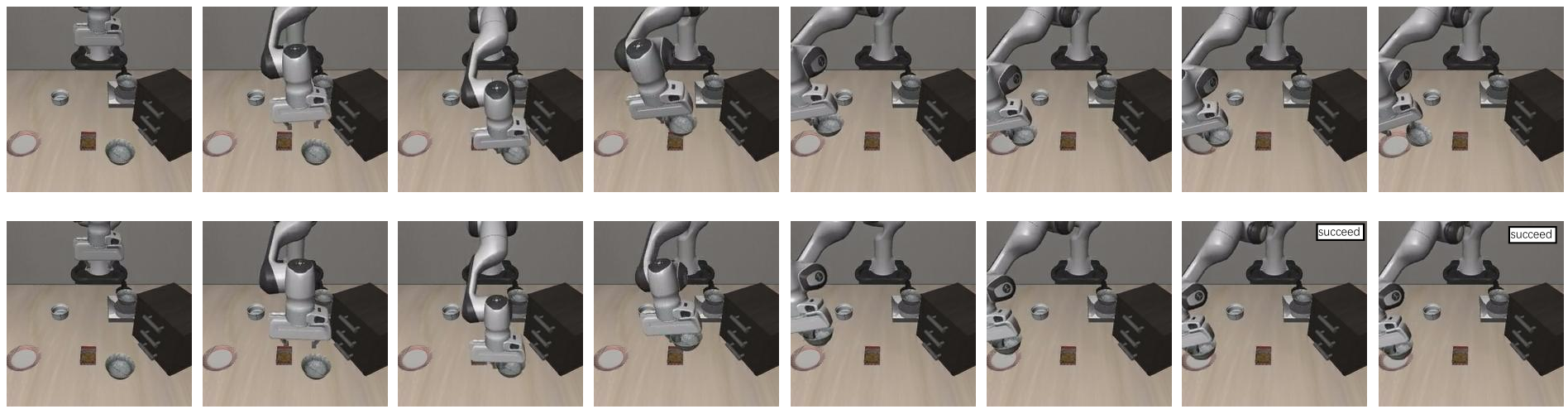}
        \caption{
        Task 3 (\(y\)-axis disturbance on the plate): pick up the black bowl next to the cookie box and place it on the (perturbed) plate.
        The baseline OpenVLA + SFT policy (top) fails, whereas our OpenVLA + FAN-SFT policy (bottom) succeeds.
        }
        \label{fig_sup:libero_task3}
    \end{subfigure}

    \vspace{0.6em}

    \begin{subfigure}{0.9\textwidth}
        \centering
        \includegraphics[width=\textwidth]{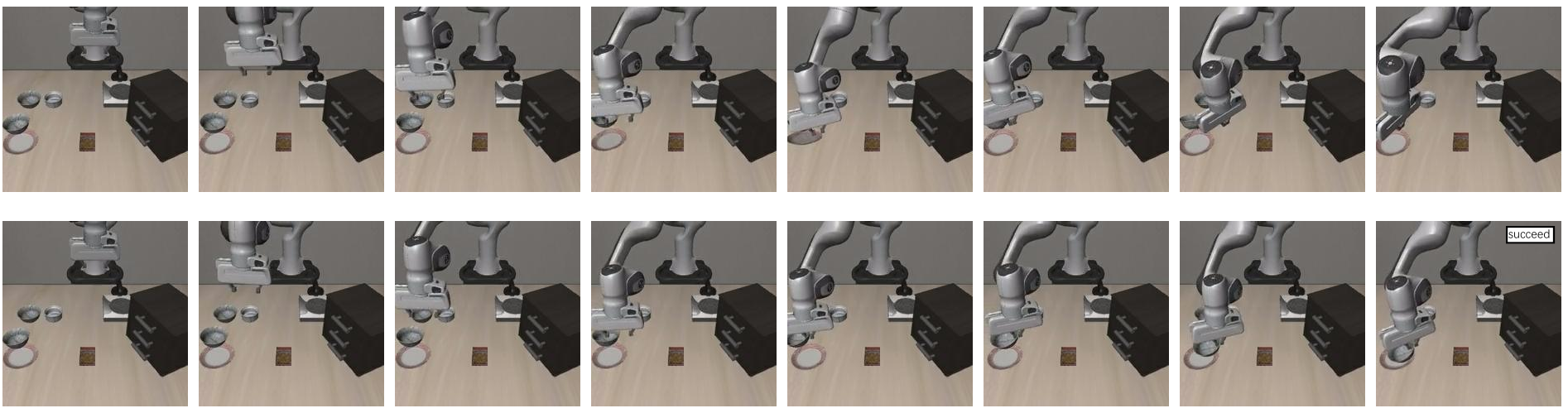}
        \caption{
        Task 4 (\(y\)-axis disturbance on the plate): pick up the black bowl next to the plate and place it on the (perturbed) plate.
        The baseline OpenVLA + SFT policy (top) fails, whereas our OpenVLA + FAN-SFT policy (bottom) succeeds.
        }
        \label{fig_sup:libero_task4}
    \end{subfigure}

    \caption{
    Simulation rollouts on four manipulation tasks from the LIBERO-Spatial benchmark under disturbances applied to the plate (i.e., the target placement location).
    For Tasks 1 and 2, the plate is perturbed along the \(x\)-axis; for Tasks 3 and 4, the plate is perturbed along the \(y\)-axis.
    In each subfigure, the top row shows trajectories generated by the baseline OpenVLA + SFT policy,
    and the bottom row shows trajectories generated by our proposed OpenVLA + FAN-SFT policy,
    which exhibits stronger robustness to target-location disturbances.
    }
    \label{fig_sup:libero_tasks_xy}
\end{figure}

\clearpage
\subsection{Parameter Sensitivity Analysis}\label{supp:parameter}

We evaluate the sensitivity of our proposed OpenVLA + FAN-SFT method to the regularization coefficient \(\alpha\) on the ManiSkill benchmark, as shown in Table~\ref{tab_sup:ms_sft_hyperparameter}. 
The empirical results demonstrate that, across a broad range of moderate values, introducing our FAN-guided regularization consistently leads to performance improvements over the baseline. 
However, when the regularization coefficient becomes excessively large, it starts to hinder the training process and degrades performance. 
In particular, we adopt a regularization coefficient of \(\alpha = 0.05\) for our SFT experiments, which achieves near-optimal performance under this hyperparameter sweep.

\begin{table*}[htbp]
    \centering
    \small
    \caption{Sensitivity of OpenVLA + FAN-SFT to the regularization coefficient $\alpha$ on ManiSkill.}  
    \label{tab_sup:ms_sft_hyperparameter}
    \renewcommand{\arraystretch}{1.0}
    \begin{tabular}{ll c cccc}
    \toprule
    & \multirow{2}{*}{\textbf{Regularization coeff. $\alpha$}}
    & \multirow{2}{*}{\textbf{In-Distribution}}
    & \multicolumn{4}{c}{\textbf{Out-of-Distribution}} \\
    \cmidrule(r){4-7}
    &
    &
    & \textbf{Vision} & \textbf{Semantic} & \textbf{Execution} & \textbf{Avg.} \\
    \midrule
    & 0.0 & 78.1 & 76.3 & 57.6 & 41.0 & 58.3 \\
    & 1e-4 & 81.8 & 76.2 & 58.0 & 44.6 & 59.6 \\
    & 1e-3 & 87.0 & 75.6 & 60.9 & 44.2 & 60.2 \\
    & 1e-2 & 89.6 & 77.5 & 62.9 & 46.5 & 62.3 \\
    & 2e-2 & 88.4 & 77.5 & 62.9 & \textbf{49.1} & 63.2 \\
    & 5e-2 & \textbf{89.8} & \textbf{81.7} & \textbf{63.5} & 44.8 & \textbf{63.3} \\
    & 0.1 & 82.8 & 76.0 & 60.8 & 45.0 & 60.6 \\
    & 1.0 & 83.9 & 78.8 & 60.4 & 43.6 & 60.9 \\
    & 2.0 & 80.7 & 76.5 & 59.3 & 45.0 & 60.3 \\
    \bottomrule
    \end{tabular}
\end{table*}

We further compare our proposed OpenVLA + FAN-SFT method against label smoothing on the ManiSkill benchmark, as shown in Table~\ref{tab_sup:ls}. 
The empirical results indicate that label smoothing yields only modest benefits when the smoothing factor $\epsilon$ is properly chosen. 
Among the evaluated settings, $\epsilon = 0.05$ gives the strongest performance, but still remains clearly inferior to FAN across both in-distribution and out-of-distribution evaluations. 
As the smoothing factor further increases, the performance degrades, suggesting that the unstructured probability spreading introduced by label smoothing is suboptimal for modeling robotic actions. 
By contrast, FAN imposes a more structured form of regularization that better respects the local geometry of the action space, and therefore provides more effective and stable improvements.

\begin{table*}[htbp]
    \centering
    \small
    \caption{Comparison to label smoothing on ManiSkill.}
    \label{tab_sup:ls}
    \renewcommand{\arraystretch}{1.0}
    \begin{tabular}{ll c cccc}
    \toprule
    & \multirow{2}{*}{\textbf{Method}}
    & \multirow{2}{*}{\textbf{In-Distribution}}
    & \multicolumn{4}{c}{\textbf{Out-of-Distribution}} \\
    \cmidrule(r){4-7}
    &
    &
    & \textbf{Vision} & \textbf{Semantic} & \textbf{Execution} & \textbf{Avg.} \\
    \midrule
    & Ori & 78.1 & \underline{76.6} & 57.4 & 40.4 & 58.1 \\
    & \quad + $\epsilon=0.05$ & \underline{82.8} & 75.9 & \underline{62.3} & \underline{42.2} & \underline{60.1} \\
    & \quad + $\epsilon=0.1$ & 81.3 & 69.1 & 60.3 & 39.4 & 56.3 \\
    & \quad + $\epsilon=0.2$ & 79.7 & 68.1 & 51.8 & \underline{42.2} & 54.0 \\
    \midrule
    & Ours & \textbf{89.8} & \textbf{81.7} & \textbf{63.5} & \textbf{44.8} & \textbf{63.3} \\
    \bottomrule
    \end{tabular}
\end{table*}

\clearpage
\section{Additional Experiments on Reinforced Finetuning}

\subsection{Implementation Details}
On the ManiSkill benchmark, we conduct reinforcement fine-tuning (RFT) experiments by applying our FAN-guided regularization to both OpenVLA and OpenVLA-OFT, and comparing the resulting FAN-PPO variants against the standard PPO baselines. 
In addition, for all RFT experiments we use a fixed \(\sigma\) instead of an adaptive \(\sigma\) in order to improve the stability of RFT. 
The detailed hyperparameter configurations for all PPO-based experiments are summarized in Table~\ref{tab_sup:hyperparams_ppo}.

\begin{table*}[htbp]
    \centering
    \caption{Hyperparameter setup for PPO in our experiments. We follow the configuration from the original implementation, including common parameters and those specific to our FAN-PPO.}
    \label{tab_sup:hyperparams_ppo}
    \renewcommand{\arraystretch}{1.2}
    \begin{tabular}{@{}ll@{}}
        \toprule
        \multicolumn{2}{@{}l}{\textbf{Base Configuration}} \\
        \midrule
        \# GPUs & 1 $\times$ NVIDIA A100 (80GB VRAM) \\
        learning rate (LR) & 1e-4 for policy; 3e-3 for value head \\
        mini-batch size & 8 for OpenVLA; 12 for OpenVLA-OFT \\
        grad accumulation steps & 20 \\
        LoRA rank & 32 \\
        input images & 1 third-person camera \\
        input image size & 224 $\times$ 224 px \\
        No.\ of training episodes        & 390 for OpenVLA; 650 for OpenVLA-OFT \\
        sample trajectories per episode  & 64 for OpenVLA; 96 for OpenVLA-OFT \\
        max steps per trajectory         & 80 \\
        training epochs per episode      & 1 \\
        GAE parameter $(\lambda)$        & 0.95 \\
        PPO clipping ratio $(\epsilon)$  & 0.2 \\
        entropy coefficient              & 0.00 \\
        \midrule
        \multicolumn{2}{@{}l}{\textbf{Addition for Our Method (FAN-PPO)}} \\
        \midrule
        regularization  coefficient $\alpha$ &
            1.0 for OpenVLA; 0.1 for OpenVLA-OFT  \\
        standard deviation $\sigma$ & 0.3 for OpenVLA; 0.2 for OpenVLA-OFT \\
        \bottomrule
    \end{tabular}
\end{table*}

\subsection{Additional Results}
Table~\ref{tab_sup:RFT_combined_results} provides a more detailed complement to Figure~\ref{fig:rl_ood}. 
Entries marked with \textsuperscript{†} correspond to RL4VLA results taken directly from the original paper. 
The OpenVLA + PPO baseline represents our reproduction of RL4VLA, upon which we further build our proposed FAN-PPO method.

For OpenVLA, our FAN-PPO consistently improves over the original PPO across all OOD tasks. 
For OpenVLA-OFT, FAN-PPO achieves comparable performance on some tasks and yields substantial gains on others, such as a 23.5\% improvement on 'M-Obj. (IND)' and an 18.3\% improvement on 'Disturb Recep.'. 
Overall, these results demonstrate that, during the RFT stage, our FAN-guided regularization can effectively enhance the model's performance on OOD tasks.

Tables~\ref{tab_sup:sample1}--\ref{tab_sup:sample4} report the number of training steps required by OpenVLA and OpenVLA-OFT to reach different success rates during training. 
For both models, incorporating our FAN-guided regularization consistently reduces the number of training steps needed to achieve the same success rate across all evaluated thresholds, demonstrating that our method effectively improves sample efficiency.

Figures~\ref{fig_sup:ind}--\ref{fig_sup:ObjRep} present qualitative results of OpenVLA under two RFT methods across various ManiSkill tasks. In each case, the top row shows trajectories obtained with the standard PPO, while the bottom row corresponds to our FAN-PPO. These qualitative examples consistently indicate that our method exhibits better generalization.

\begin{table*}[htbp]
    \centering
    \caption{Detailed RFT results on the ManiSkill benchmark (OOD tasks).}
    \label{tab_sup:RFT_combined_results}
    \begin{tabular}{@{}ll|c|ccc|cccc@{}}
        \toprule
        & \textbf{Task} 
        & \textbf{\makecell{RL4VLA\\(RFT)\textsuperscript{†}}}
        & \textbf{\makecell{OpenVLA\\+PPO}} 
        & \textbf{\makecell{OpenVLA\\+FAN-PPO}}
        & \textbf{$\Delta$}
        & \textbf{\makecell{OpenVLA-OFT\\+PPO}}
        & \textbf{\makecell{OpenVLA-OFT\\+FAN-PPO}}
        & \textbf{$\Delta$} \\
        \midrule

        \multirow{5}{*}{\rotatebox{90}{\textbf{Vision}}}
         & Table        & 84.4 & 91.3 $\pm$ 2.8 & 97.1 $\pm$ 0.4 & +5.9 & 89.0 $\pm$ 1.9 & 93.5 $\pm$ 0.9 & +4.5 \\
         & Texture-w    & 83.3 & 85.0 $\pm$ 0.9 & 90.6 $\pm$ 6.1 & +5.7 & 93.1 $\pm$ 3.0 & 92.8 $\pm$ 2.3 & -0.3 \\
         & Texture-s    & 63.0 & 69.2 $\pm$ 2.3 & 72.4 $\pm$ 0.7 & +3.2 & 79.3 $\pm$ 2.4 & 83.3 $\pm$ 5.1 & +4.0\\
         & Noise-w      & 85.4 & 83.0 $\pm$ 1.2 & 91.2 $\pm$ 1.0 & +8.2 & 90.3 $\pm$ 1.1 & 92.8 $\pm$ 0.9 & +2.5 \\
         & Noise-s      & 66.7 & 72.0 $\pm$ 4.6 & 73.7 $\pm$ 6.5 & +1.7 & 73.0 $\pm$ 2.7 & 78.0 $\pm$ 4.3 & +5.0 \\

        \cmidrule(l){1-9}
        \multirow{7}{*}{\rotatebox{90}{\textbf{Semantic}}}
         & Obj.           & 71.4 & 78.3 $\pm$ 3.6 & 85.4 $\pm$ 2.7 & +7.1 & 77.0 $\pm$ 3.7 & 83.6 $\pm$ 1.7 & +6.6 \\
         & Recep.         & 75.0 & 82.2 $\pm$ 4.5 & 90.9 $\pm$ 4.1 & +8.7 & 71.4 $\pm$ 2.8 & 71.2 $\pm$ 6.4 & -0.2 \\
         & Instruct       & 89.1 & 89.8 $\pm$ 2.9 & 95.6 $\pm$ 1.3 & +5.7 & 70.6 $\pm$ 2.0 & 70.7 $\pm$ 0.7 & +0.1 \\
         & M-Obj. (IND)   & 75.0 & 82.6 $\pm$ 1.6 & 90.1 $\pm$ 1.5 & +7.6 & 38.1 $\pm$ 2.6 & 61.6 $\pm$ 1.3 & +23.5 \\
         & M-Obj. (OOD)   & 57.8 & 68.5 $\pm$ 0.7 & 76.3 $\pm$ 1.0 & +7.8 & 41.6 $\pm$ 6.7 & 56.1 $\pm$ 1.0 & +14.5 \\
         & Disturb Recep. & 81.2 & 86.2 $\pm$ 2.6 & 94.3 $\pm$ 1.3 & +8.1 & 26.0 $\pm$ 0.9 & 44.3 $\pm$ 0.9 & +18.3 \\
         & M-Recep.       & 59.9 & 67.2 $\pm$ 1.7 & 74.5 $\pm$ 5.2 & +7.3 & 18.6 $\pm$ 3.1 & 22.5 $\pm$ 3.5 & +3.9 \\

        \cmidrule(l){1-9}
        \multirow{3}{*}{\rotatebox{90}{\textbf{Execution}}}
         & Obj. Pos.    & 80.7 & 86.5 $\pm$ 2.1 & 94.0 $\pm$ 1.8 & +7.6 & 43.9 $\pm$ 4.7 & 54.3 $\pm$ 6.7 & +10.4 \\
         & Robot Pose   & 79.7 & 83.9 $\pm$ 3.0 & 92.3 $\pm$ 1.6 & +8.6 & 64.8 $\pm$ 4.4 & 74.1 $\pm$ 2.9 & +9.3 \\
         & Obj. Rep.    & 74.5 & 87.0 $\pm$ 4.6 & 91.4 $\pm$ 1.7 & +4.4 & 59.0 $\pm$ 2.5 & 72.6 $\pm$ 2.5 & +13.5 \\[2mm]
        \bottomrule
    \end{tabular}
\end{table*}

\begin{table}[!t]
    \centering
    \small
    \caption{
        Training steps required to reach different rollout success rate thresholds on the training subset 
        of ManiSkill with OpenVLA model.
        Columns report the number of training steps needed to achieve
        60\%, 70\%, 80\%, and 90\% success rate (SR), respectively.
        Our method (OpenVLA + FAN-PPO) consistently attains the same
        success rate with fewer steps, demonstrating improved sample efficiency.
    }
    \label{tab_sup:sample1}
    \renewcommand{\arraystretch}{1.0}
    \begin{tabular}{lcccc}
        \toprule
        \textbf{Method} 
        & \textbf{Steps to 60\% SR}
        & \textbf{Steps to 70\% SR}
        & \textbf{Steps to 80\% SR}
        & \textbf{Steps to 90\% SR} \\
        \midrule
        OpenVLA + PPO & 18 & 62 & 133 & 249 \\
        OpenVLA + FAN-PPO            & 18 & \textbf{37} & \textbf{56}  & \textbf{98}  \\
        \bottomrule
    \end{tabular}
\end{table}

\begin{table}[!t]
    \centering
    \small
    \caption{
        Training steps required to reach different evaluation success rate on the test subset 
        of ManiSkill with OpenVLA model.
        Columns report the number of training steps needed to achieve
        55\%, 60\%, 65\%, 70\%, and 75\% success rate (SR), respectively.
        Our method (OpenVLA + FAN-PPO) consistently attains the same
        success rate with fewer steps, demonstrating improved sample efficiency.
    }
    \label{tab_sup:sample2}
    \renewcommand{\arraystretch}{1.0}
    \begin{tabular}{lccccc}
        \toprule
        \textbf{Method} 
        & \textbf{Steps to 55\% SR}
        & \textbf{Steps to 60\% SR}
        & \textbf{Steps to 65\% SR}
        & \textbf{Steps to 70\% SR}
        & \textbf{Steps to 75\% SR} \\
        \midrule
        OpenVLA + PPO            & 109 & 149 & 209 & 279 & 339  \\
        OpenVLA + FAN-PPO & \textbf{29} & \textbf{59} & \textbf{109} & \textbf{129} & \textbf{179} \\
        \bottomrule
    \end{tabular}
\end{table}

\begin{table}[!t]
    \centering
    \small
    \caption{
        Training steps required to reach different rollout success rate thresholds on the training subset 
        of ManiSkill with OpenVLA-OFT model.
        Columns report the number of training steps needed to achieve
        50\%, 60\%, 70\%, 80\%, and 90\% success rate (SR), respectively.
        Our method (OpenVLA-OFT + FAN-PPO) consistently attains the same
        success rate with fewer steps, demonstrating improved sample efficiency.
    }
    \label{tab_sup:sample3}
    \renewcommand{\arraystretch}{1.0}
    \begin{tabular}{lccccc}
        \toprule
        \textbf{Method} 
        & \textbf{Steps to 50\% SR}
        & \textbf{Steps to 60\% SR}
        & \textbf{Steps to 70\% SR}
        & \textbf{Steps to 80\% SR}
        & \textbf{Steps to 90\% SR} \\
        \midrule
        OpenVLA-OFT + PPO & 26 & 35 & 46 & 82 & 186 \\
        OpenVLA-OFT + FAN-PPO            & \textbf{17} & \textbf{24} & \textbf{38} & \textbf{60} & \textbf{151}  \\
        \bottomrule
    \end{tabular}
\end{table}

\begin{table}[!t]
    \centering
    \small
    \caption{
        Training steps required to reach different evaluation success rate on the test subset of ManiSkill  with OpenVLA-OFT model.
        Columns report the number of training steps needed to achieve
        50\%, 60\%, 70\%, 80\%, and 90\% success rate (SR), respectively.
        Our method (OpenVLA-OFT + FAN-PPO) consistently attains the same
        success rate with fewer steps, demonstrating improved sample efficiency.
    }
    \label{tab_sup:sample4}
    \renewcommand{\arraystretch}{1.0}
    \begin{tabular}{lccccc}
        \toprule
        \textbf{Method} 
        & \textbf{Steps to 50\% SR}
        & \textbf{Steps to 60\% SR}
        & \textbf{Steps to 70\% SR}
        & \textbf{Steps to 80\% SR}
        & \textbf{Steps to 90\% SR} \\
        \midrule
        OpenVLA-OFT + PPO & 60 & 80 & 120 & 180 & - \\
        OpenVLA-OFT + FAN-PPO            & \textbf{40} & \textbf{50} & \textbf{60} & \textbf{100} & \textbf{240}  \\
        \bottomrule
    \end{tabular}
\end{table}

\begin{figure}[htbp]
    \centering
    \includegraphics[width=0.9\textwidth]{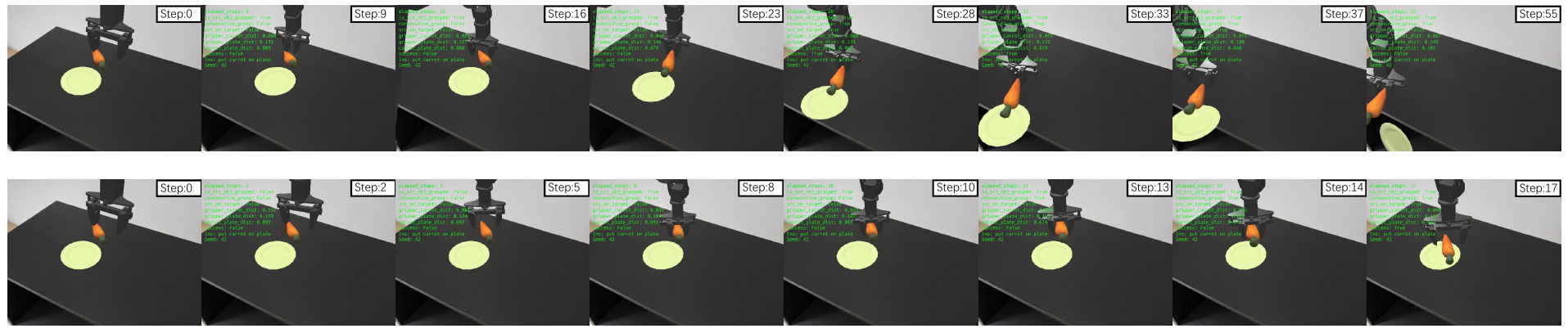}
    \caption{Qualitative comparison of VLA task executions on the in-distribution task.
    Top row: trajectories generated by the baseline OpenVLA + PPO, which fails to accomplish the task.
    Bottom row: trajectories generated by our OpenVLA + FAN-PPO, which successfully complete the task}
    \label{fig_sup:ind}
\end{figure}

\begin{figure}[htbp]
    \centering
    \includegraphics[width=0.9\textwidth]{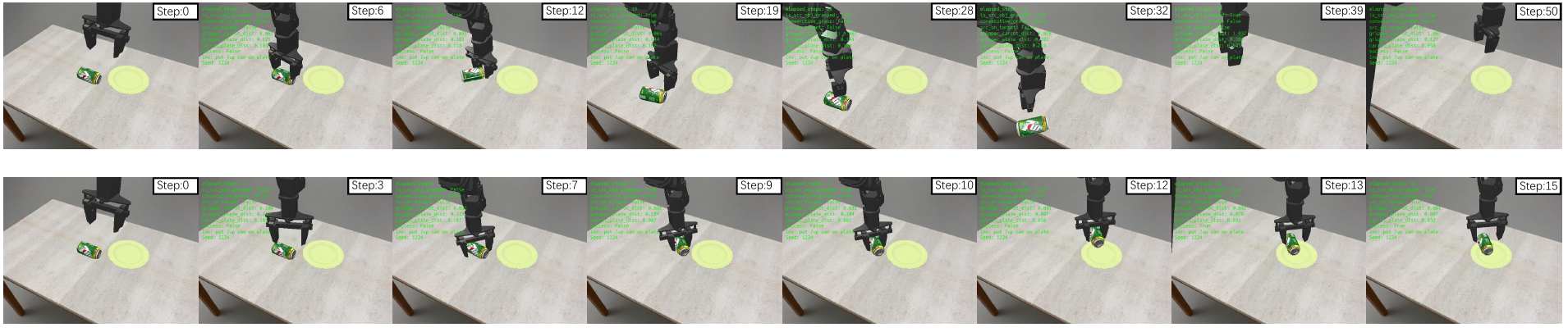}
    \caption{Qualitative comparison of VLA task executions on the 'Table' task under vision OOD conditions.
    Top row: trajectories generated by the baseline OpenVLA + PPO, which fails to accomplish the task.
    Bottom row: trajectories generated by our OpenVLA + FAN-PPO, which successfully complete the task}
    \label{fig_sup:Table}
\end{figure}

\begin{figure}[htbp]
    \centering
    \includegraphics[width=0.9\textwidth]{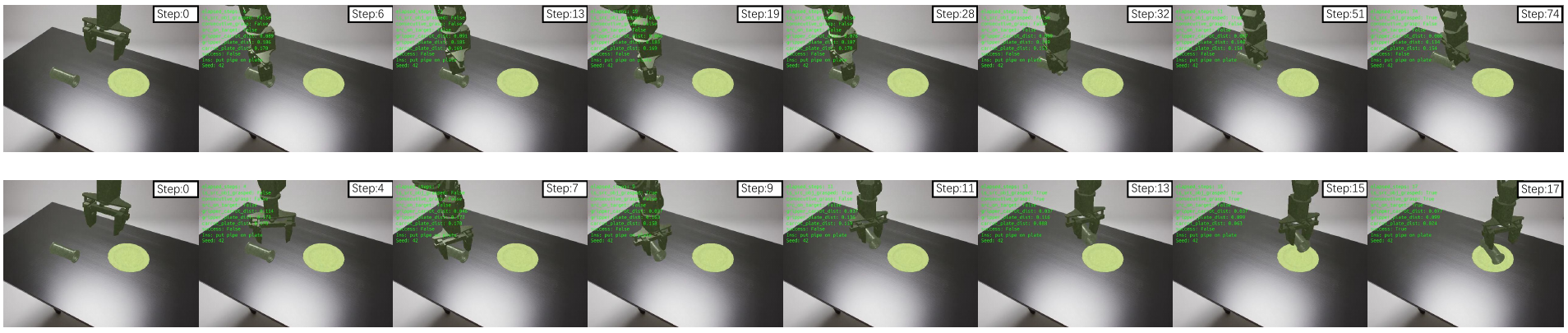}
    \caption{Qualitative comparison of VLA task executions on the 'Texture-w' task under vision OOD conditions.
    Top row: trajectories generated by the baseline OpenVLA + PPO, which fails to accomplish the task.
    Bottom row: trajectories generated by our OpenVLA + FAN-PPO, which successfully complete the task}
    \label{fig_sup:Texture-w}
\end{figure}

\begin{figure}[htbp]
    \centering
    \includegraphics[width=0.9\textwidth]{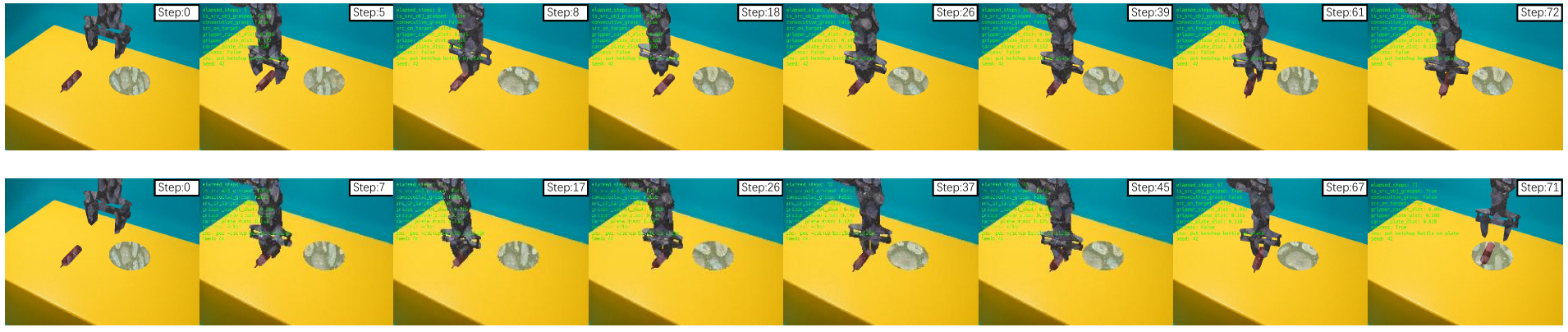}
    \caption{Qualitative comparison of VLA task executions on the 'Texture-s' task under vision OOD conditions.
    Top row: trajectories generated by the baseline OpenVLA + PPO, which fails to accomplish the task.
    Bottom row: trajectories generated by our OpenVLA + FAN-PPO, which successfully complete the task}
    \label{fig_sup:Texture-s}
\end{figure}

\begin{figure}[htbp]
    \centering
    \includegraphics[width=0.9\textwidth]{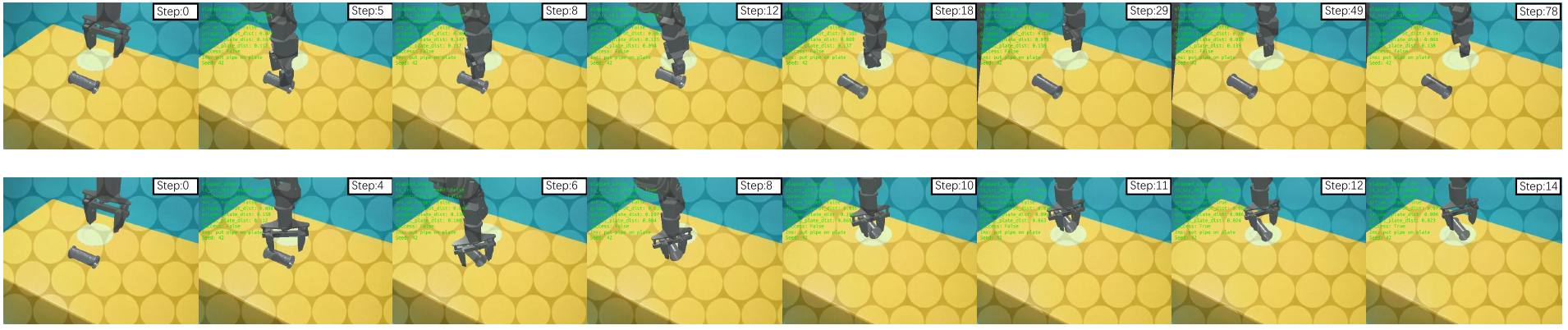}
    \caption{Qualitative comparison of VLA task executions on the 'Noise-w' task under vision OOD conditions.
    Top row: trajectories generated by the baseline OpenVLA + PPO, which fails to accomplish the task.
    Bottom row: trajectories generated by our OpenVLA + FAN-PPO, which successfully complete the task}
    \label{fig_sup:Noise-w}
\end{figure}

\begin{figure}[htbp]
    \centering
    \includegraphics[width=0.9\textwidth]{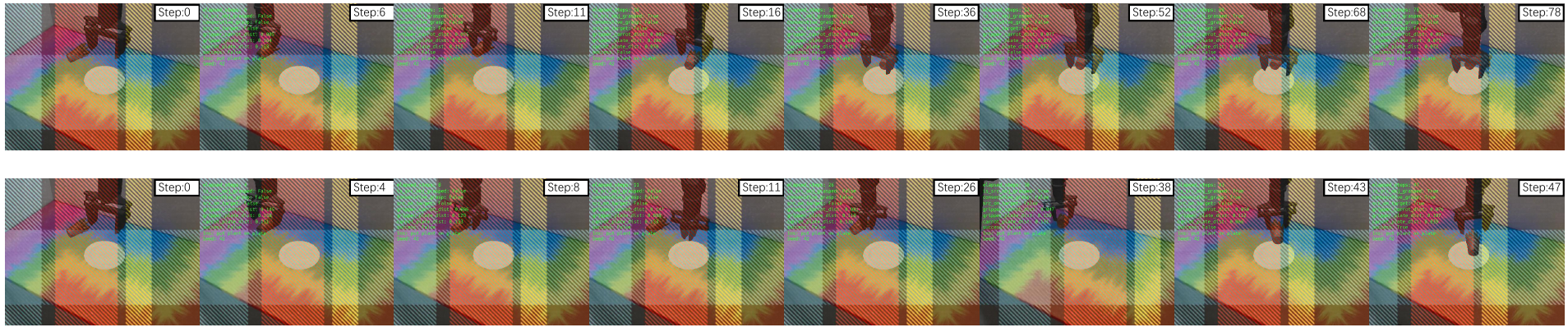}
    \caption{Qualitative comparison of VLA task executions on the 'Noise-s' task under vision OOD conditions.
    Top row: trajectories generated by the baseline OpenVLA + PPO, which fails to accomplish the task.
    Bottom row: trajectories generated by our OpenVLA + FAN-PPO, which successfully complete the task}
    \label{fig_sup:Noise-s}
\end{figure}

\begin{figure}[htbp]
    \centering
    \includegraphics[width=0.9\textwidth]{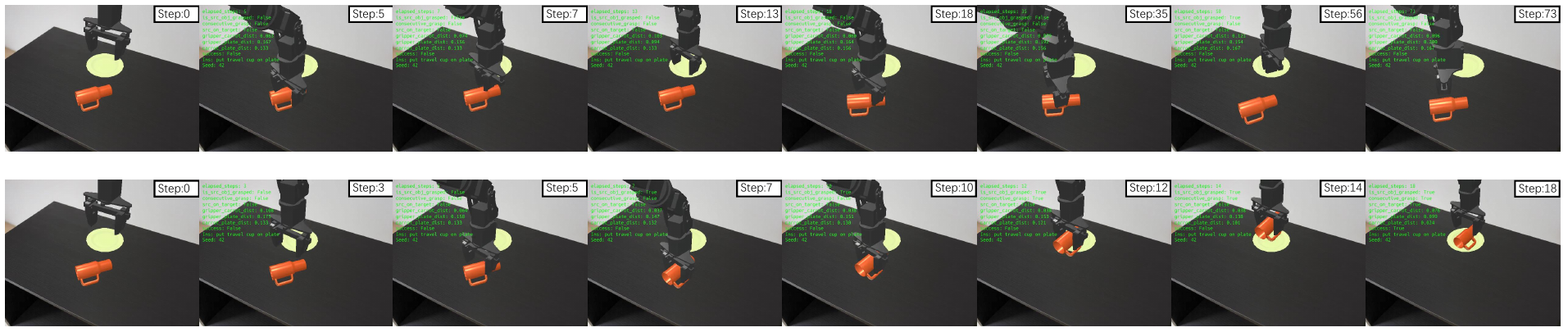}
    \caption{Qualitative comparison of VLA task executions on the 'obj' task under semantic OOD conditions.
    Top row: trajectories generated by the baseline OpenVLA + PPO, which fails to accomplish the task.
    Bottom row: trajectories generated by our OpenVLA + FAN-PPO, which successfully complete the task}
    \label{fig_sup:obj}
\end{figure}

\begin{figure}[htbp]
    \centering
    \includegraphics[width=0.9\textwidth]{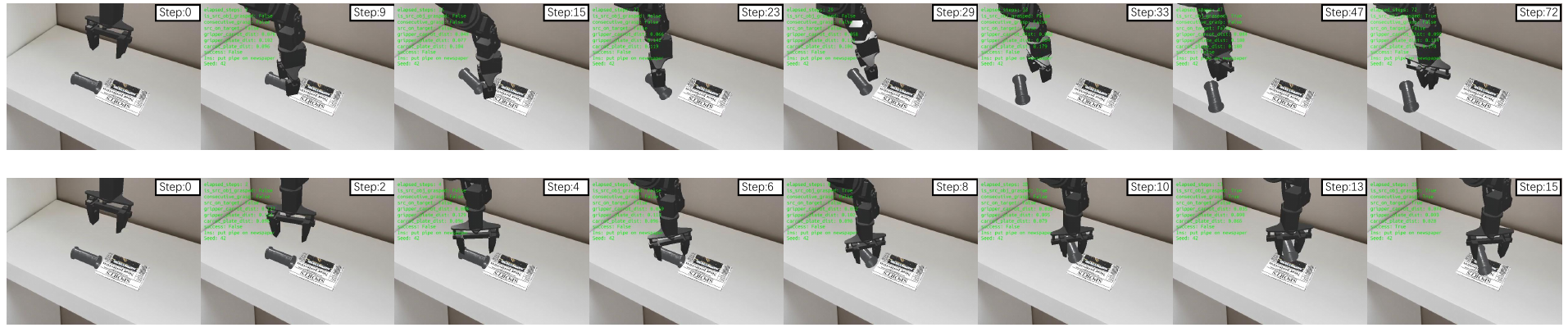}
    \caption{Qualitative comparison of VLA task executions on the 'Recep' task under semantic OOD conditions.
    Top row: trajectories generated by the baseline OpenVLA + PPO, which fails to accomplish the task.
    Bottom row: trajectories generated by our OpenVLA + FAN-PPO, which successfully complete the task}
    \label{fig_sup:Recep}
\end{figure}

\begin{figure}[htbp]
    \centering
    \includegraphics[width=0.9\textwidth]{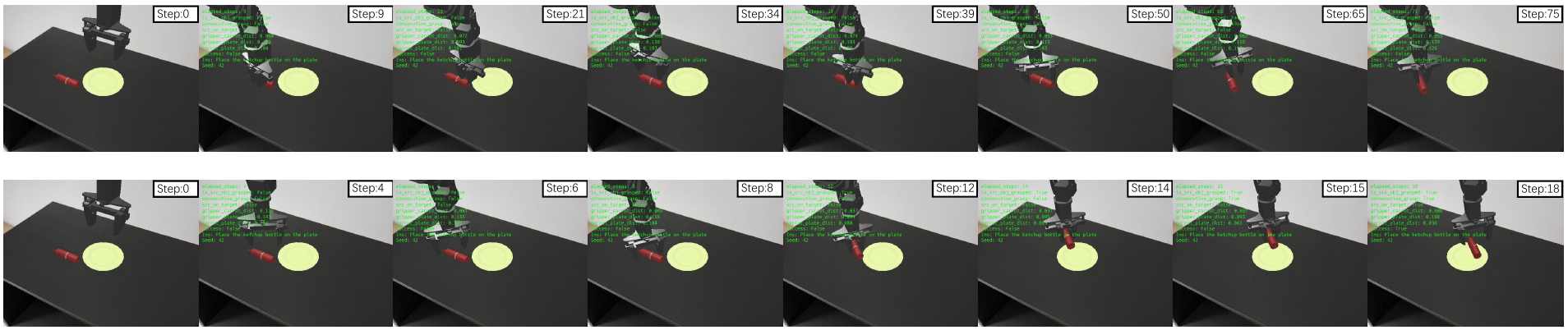}
    \caption{Qualitative comparison of VLA task executions on the 'Instruct' task under semantic OOD conditions.
    Top row: trajectories generated by the baseline OpenVLA + PPO, which fails to accomplish the task.
    Bottom row: trajectories generated by our OpenVLA + FAN-PPO, which successfully complete the task}
    \label{fig_sup:Instruct}
\end{figure}

\begin{figure}[htbp]
    \centering
    \includegraphics[width=0.9\textwidth]{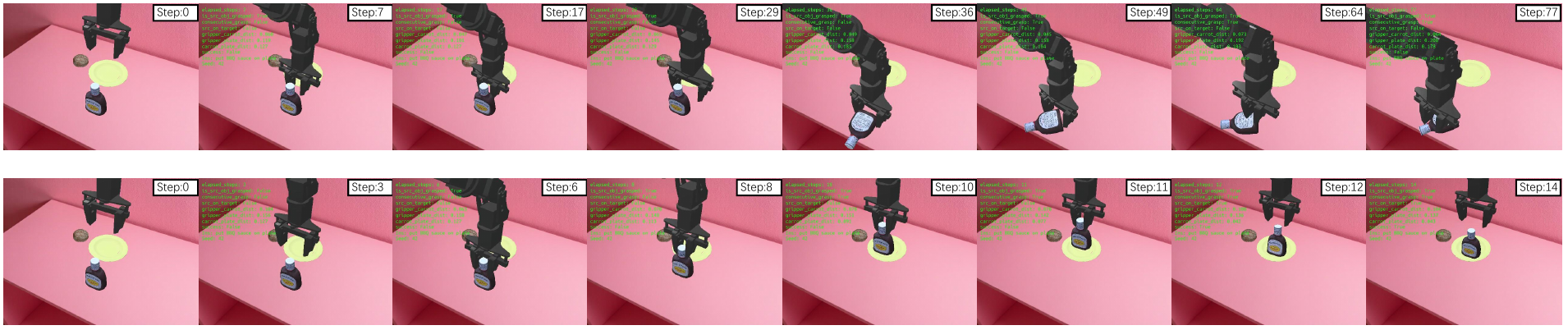}
    \caption{Qualitative comparison of VLA task executions on the 'M-obj (IND)' task under semantic OOD conditions.
    Top row: trajectories generated by the baseline OpenVLA + PPO, which fails to accomplish the task.
    Bottom row: trajectories generated by our OpenVLA + FAN-PPO, which successfully complete the task}
    \label{fig_sup:M-obj(IND)}
\end{figure}

\begin{figure}[htbp]
    \centering
    \includegraphics[width=0.9\textwidth]{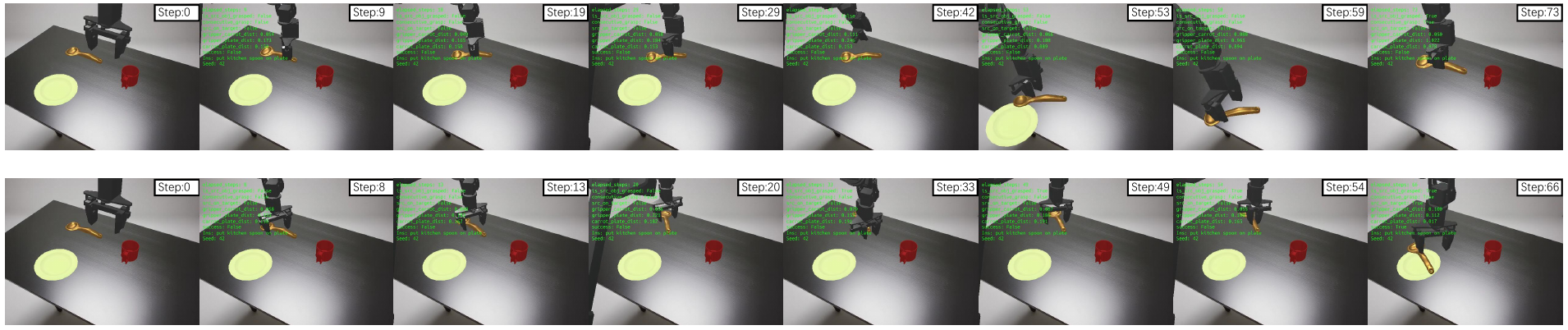}
    \caption{Qualitative comparison of VLA task executions on the 'M-obj (OOD)' task under semantic OOD conditions.
    Top row: trajectories generated by the baseline OpenVLA + PPO, which fails to accomplish the task.
    Bottom row: trajectories generated by our OpenVLA + FAN-PPO, which successfully complete the task}
    \label{fig_sup:M-obj(OOD)}
\end{figure}

\begin{figure}[htbp]
    \centering
    \includegraphics[width=0.9\textwidth]{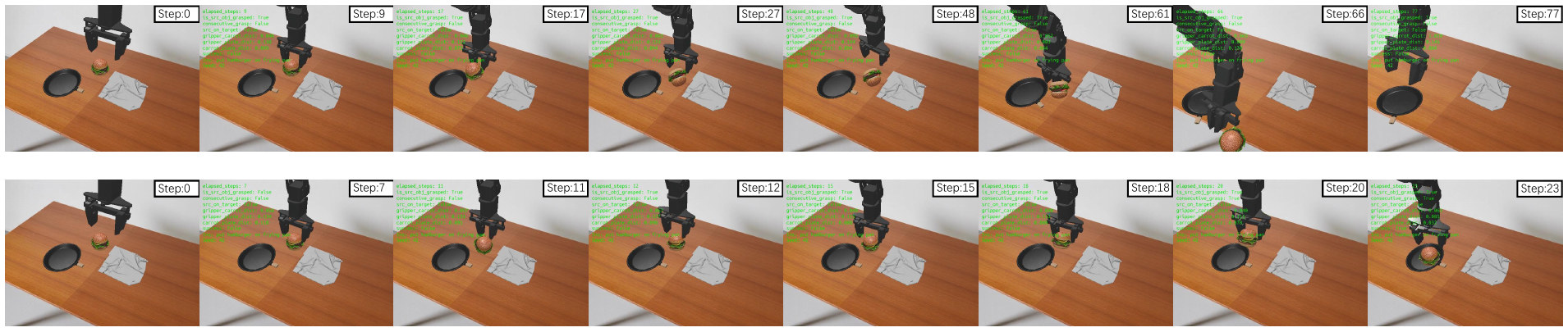}
    \caption{Qualitative comparison of VLA task executions on the 'M-obj (OOD)' task under semantic OOD conditions.
    Top row: trajectories generated by the baseline OpenVLA + PPO, which fails to accomplish the task.
    Bottom row: trajectories generated by our OpenVLA + FAN-PPO, which successfully complete the task}
    \label{fig_sup:M-obj(OOD)}
\end{figure}

\begin{figure}[htbp]
    \centering
    \includegraphics[width=0.9\textwidth]{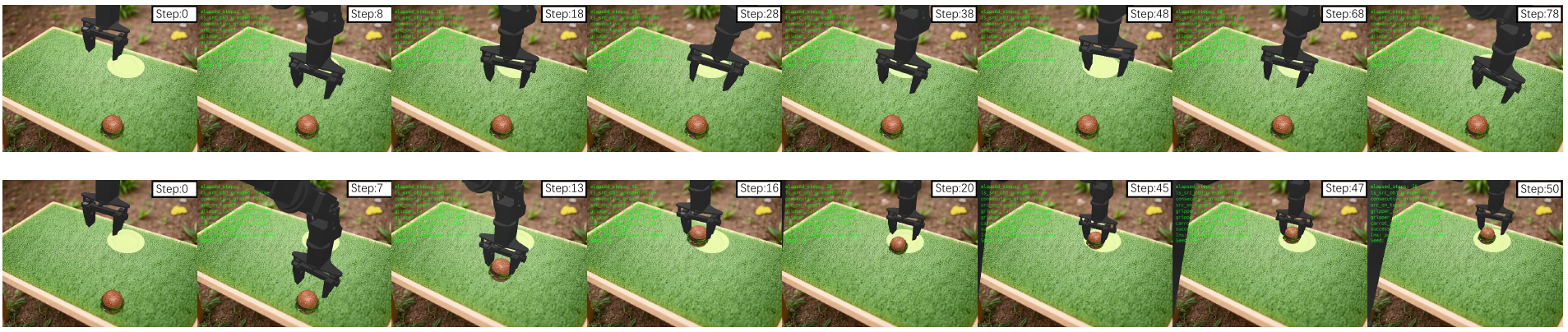}
    \caption{Qualitative comparison of VLA task executions on the 'Obj. Pos.' task under execution OOD conditions.
    Top row: trajectories generated by the baseline OpenVLA + PPO, which fails to accomplish the task.
    Bottom row: trajectories generated by our OpenVLA + FAN-PPO, which successfully complete the task}
    \label{fig_sup:ObjPos}
\end{figure}

\begin{figure}[htbp]
    \centering
    \includegraphics[width=0.9\textwidth]{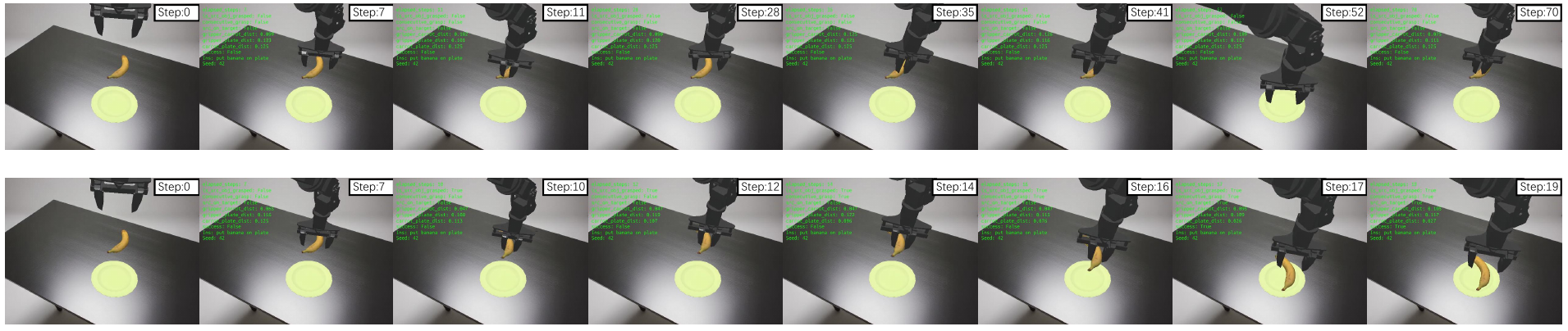}
    \caption{Qualitative comparison of VLA task executions on the 'Robot Pose' task under execution OOD conditions.
    Top row: trajectories generated by the baseline OpenVLA + PPO, which fails to accomplish the task.
    Bottom row: trajectories generated by our OpenVLA + FAN-PPO, which successfully complete the task}
    \label{fig_sup:RobotPose}
\end{figure}

\begin{figure}[htbp]
    \centering
    \includegraphics[width=0.9\textwidth]{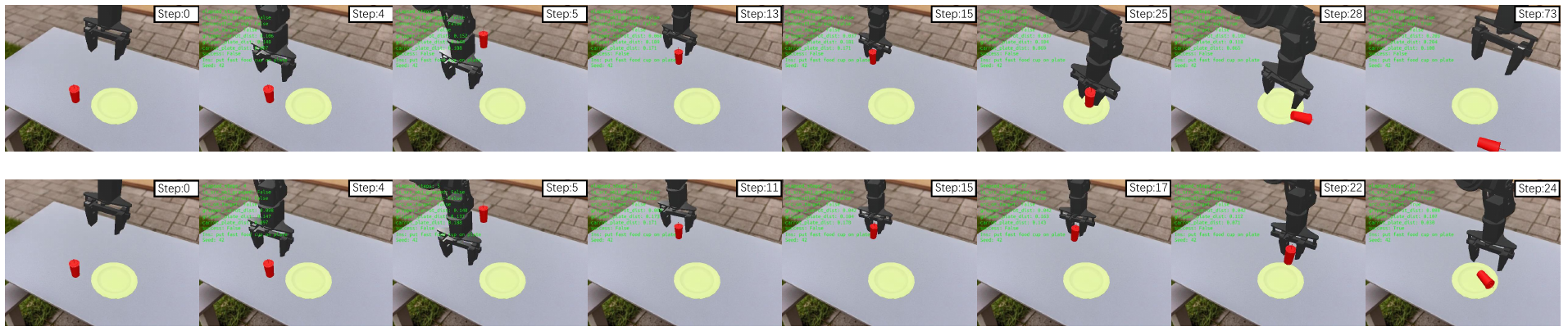}
    \caption{Qualitative comparison of VLA task executions on the 'Obj. Rep.' task under execution OOD conditions.
    Top row: trajectories generated by the baseline OpenVLA + PPO, which fails to accomplish the task.
    Bottom row: trajectories generated by our OpenVLA + FAN-PPO, which successfully complete the task}
    \label{fig_sup:ObjRep}
\end{figure}

\clearpage
\subsection{Parameter Sensitivity Analysis}\label{supp_subsec:parameter}

Figure~\ref{subfig_sup:coef_ind} shows the training dynamics of OpenVLA fine-tuned with FAN-PPO on ManiSkill under different values of the regularization coefficient \(\alpha\).
We observe that excessively large \(\alpha\) can destabilize training (e.g., \(\alpha = 2.0\)) and even cause collapse (e.g., \(\alpha = 5.0\) and \(\alpha = 10.0\)).
Based on these observations, we set \(\alpha = 1.0\) as the default regularization coefficient in all our experiments.

\begin{figure}[htbp]
    \centering
    \subfloat[Rollout success rate on the training subset under different values of \(\alpha\).\label{subfig_sup:coef_ind}]{
        \includegraphics[width=0.41\linewidth]{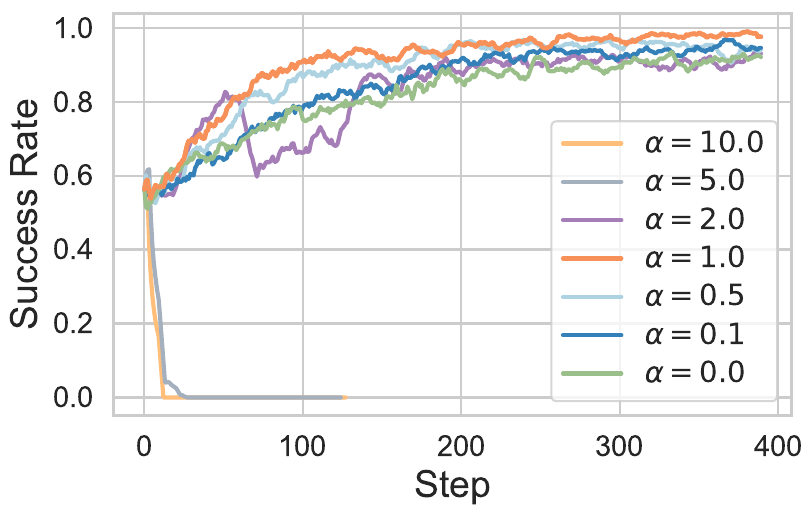}}
    \hspace{10mm}
    \subfloat[Evaluation success rate on the test subset under different values of \(\alpha\).\label{subfig_sup:coef_ood}]{
        \includegraphics[width=0.41\linewidth]{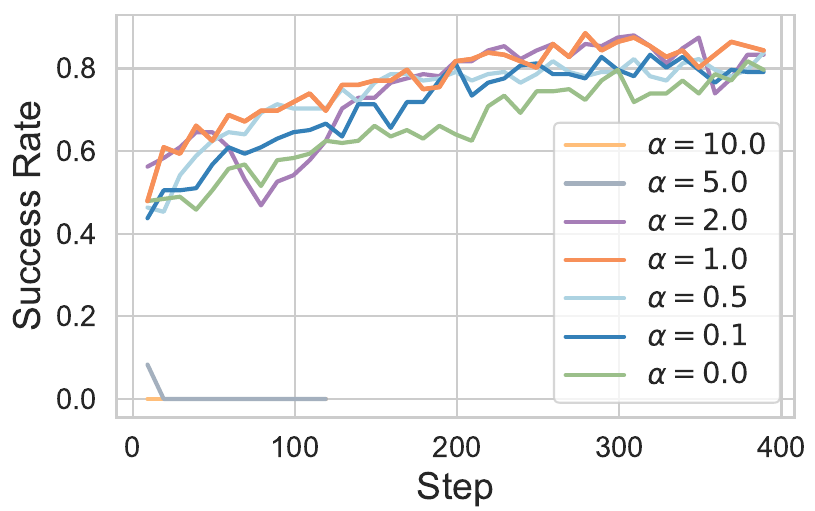}}

    \caption{
    Sensitivity of \textit{OpenVLA + FAN-PPO} to the regularization coefficient \(\alpha\) on the ManiSkill \texttt{PutOnPlateInScene25Main-v3} environment.
    Subfigure~\subref{subfig_sup:coef_ind} reports the rollout success rate on the training subset, while
    subfigure~\subref{subfig_sup:coef_ood} shows the evaluation success rate on the test subset.
    }
    \label{fig_sup:coef}
\end{figure}

Figure~\ref{fig_sup:sigma} shows the training dynamics of OpenVLA fine-tuned with FAN-PPO on ManiSkill under different values of the standard deviation \(\sigma\).
Under our FAN-guided regularization, a smaller \(\sigma\) corresponds to a sharper target distribution. 
When \(\sigma\) becomes too small, the distribution effectively degenerates to a single bin, which leads to degraded performance (e.g., \(\sigma = 0.05\)) or even training collapse (e.g., \(\sigma = 0.01\)).
Once \(\sigma\) exceeds a certain threshold, the model performance becomes relatively insensitive to its exact value: as shown in the figure, values in the range \(\sigma \in [0.1, 2.0]\) yield similar performance on both the in-distribution and OOD subsets.
Based on these observations, we set \(\sigma = 0.3\) as the default standard deviation in all our experiments.

\begin{figure}[htbp]
    \centering
    \subfloat[Rollout success rate on the training subset under different values of \(\sigma\).\label{subfig_sup:sigma_ind}]{
        \includegraphics[width=0.41\linewidth]{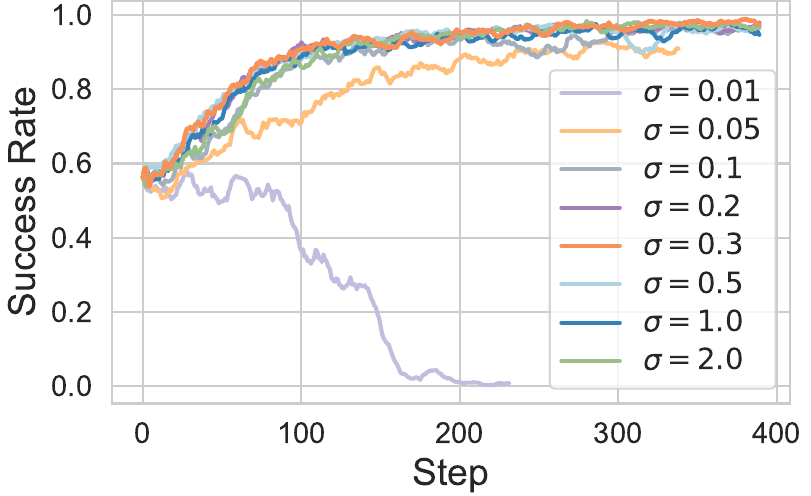}}
    \hspace{10mm}
    \subfloat[Evaluation success rate on the test subset under different values of \(\sigma\).\label{subfig_sup:sigma_ood}]{
        \includegraphics[width=0.41\linewidth]{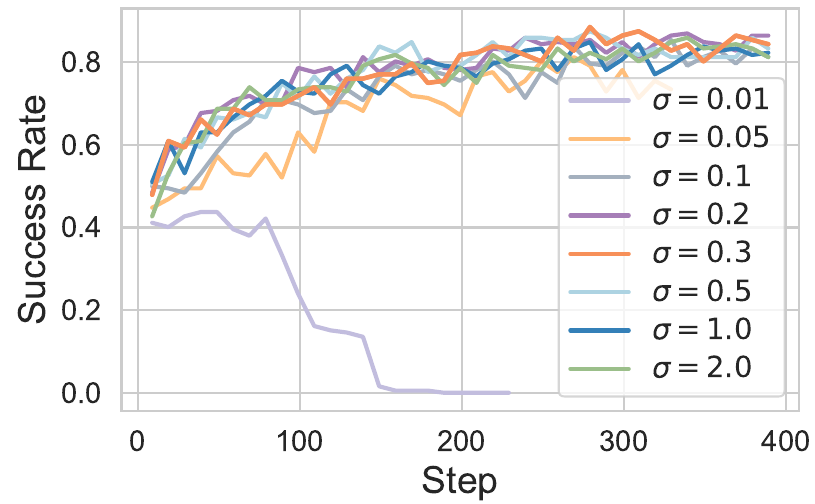}}

    \caption{
    Sensitivity of \textit{OpenVLA + FAN-PPO} to the standard deviation $\sigma$ on the ManiSkill \texttt{PutOnPlateInScene25Main-v3} environment.
    Subfigure~\subref{subfig_sup:sigma_ind} reports the rollout success rate on the training subset, while
    subfigure~\subref{subfig_sup:sigma_ood} shows the evaluation success rate on the test subset.
    }
    \label{fig_sup:sigma}
\end{figure}

We further compare our proposed FAN-PPO method with entropy maximization during the RFT stage, as illustrated in Figure~\ref{fig:entropy}. 
The training curves show that entropy maximization leads to slower performance improvement and lower sample efficiency than FAN-PPO. 
Moreover, its behavior varies more significantly under different choices of the regularization coefficient, indicating weaker training stability and higher sensitivity to hyperparameter selection. 

Our motivation is that feasible actions in manipulation generally exhibit a locally smooth tolerance geometry, for which the Gaussian prior adopted in FAN serves as a simple and straightforward proxy rather than being restricted to this specific form.
To examine this idea, we further test an explicitly multi-modal target by using a Gaussian-kernel-smoothed distribution
$q_{\kappa}=\operatorname{Normalize}(K_{\kappa}\pi)$
in the KL regularization
$D_{\mathrm{KL}}(\pi\|q_{\kappa})$.
The empirical results show that this alternative regularization also yields consistent performance improvements, which supports the general idea of imposing structured local tolerance in action space.
However, as shown in Figure~\ref{fig:kernel}, this multi-modal target regularization still underperforms FAN with a Gaussian prior.
This suggests that the specific design of the regularizer can substantially affect the final performance, and exploring more effective regularization forms remains a worthwhile direction for future work.

\begin{figure}[h]
    \centering
    \subfloat[EM regularization.\label{fig:entropy}]{
        \includegraphics[width=0.41\linewidth]{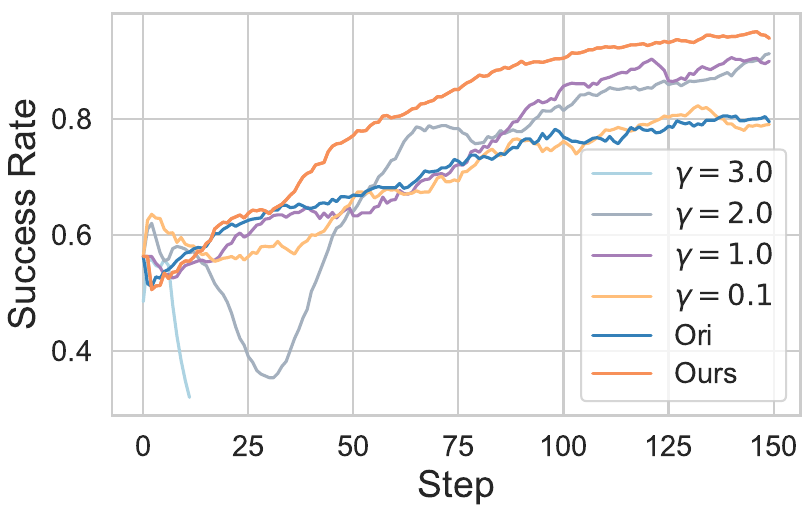}}
    \hspace{10mm}
    \subfloat[Gaussian-kernel smoothing.\label{fig:kernel}]{
        \includegraphics[width=0.41\linewidth]{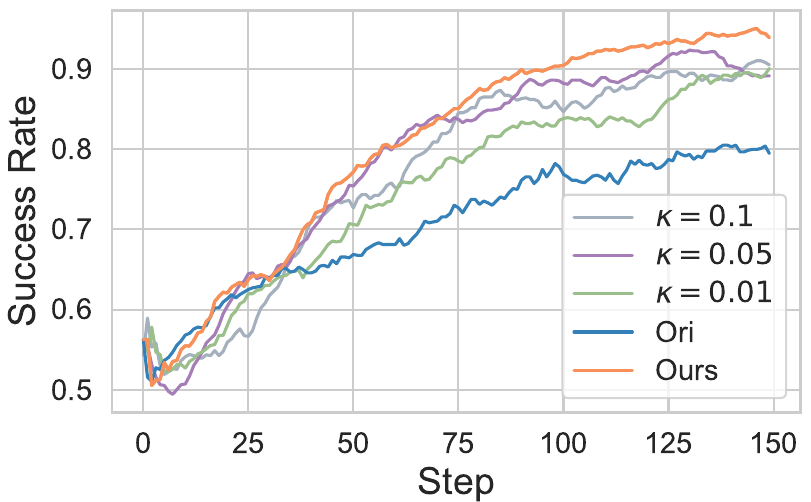}}

    \caption{Training curves during RFT of OpenVLA.}
\end{figure}

\clearpage
\section{Real-World Experiments}

To validate the effectiveness of our method in physical environments, we conducted real-world experiments using a JAKA minicobo manipulator equipped with a fixed Intel RealSense RGB-D camera for visual perception, as illustrated in Figure~\ref{fig_sup:real_world_hardware}. 
Following standard data pre-processing protocols~\cite{kim2024OpenVLA}, we filtered out “no-op” actions from the collected demonstrations—specifically, transitions where the joint-angle changes are near zero and the gripper state remains unchanged—to ensure the high quality of the training data.

We evaluated the policies on a ``pick-and-place'' task involving placing a target object into a designated container. 
To rigorously assess robustness, we designed four distinct testing scenarios ranging from in-distribution settings to challenging configurations with spatial perturbations. 
Qualitative comparisons of the rollout trajectories are presented in Figure~\ref{fig_sup:real_tasks_all}.

In the In-Distribution (IND) setting (Task-1), where the environment setup aligns with the collected demonstrations, both the baseline OpenVLA + SFT and our OpenVLA + FAN-SFT policies successfully complete the task, demonstrating basic competency. 
However, performance diverges significantly under environmental perturbations. 
In Tasks 2--4, we introduced variations in the initial object pose, the manipulator's starting configuration, and the target box position, respectively. 
As shown in the visualizations, the baseline policy lacks the spatial generalization required to adapt to these shifts, frequently failing to grasp the object or missing the target container. 
In contrast, our method demonstrates superior robustness, successfully correcting its trajectory to accommodate these real-world disturbances and consistently completing the manipulation tasks.

\begin{figure}[htbp]
    \centering

    \begin{subfigure}[b]{0.3\textwidth}
        \centering
        \includegraphics[width=\linewidth]{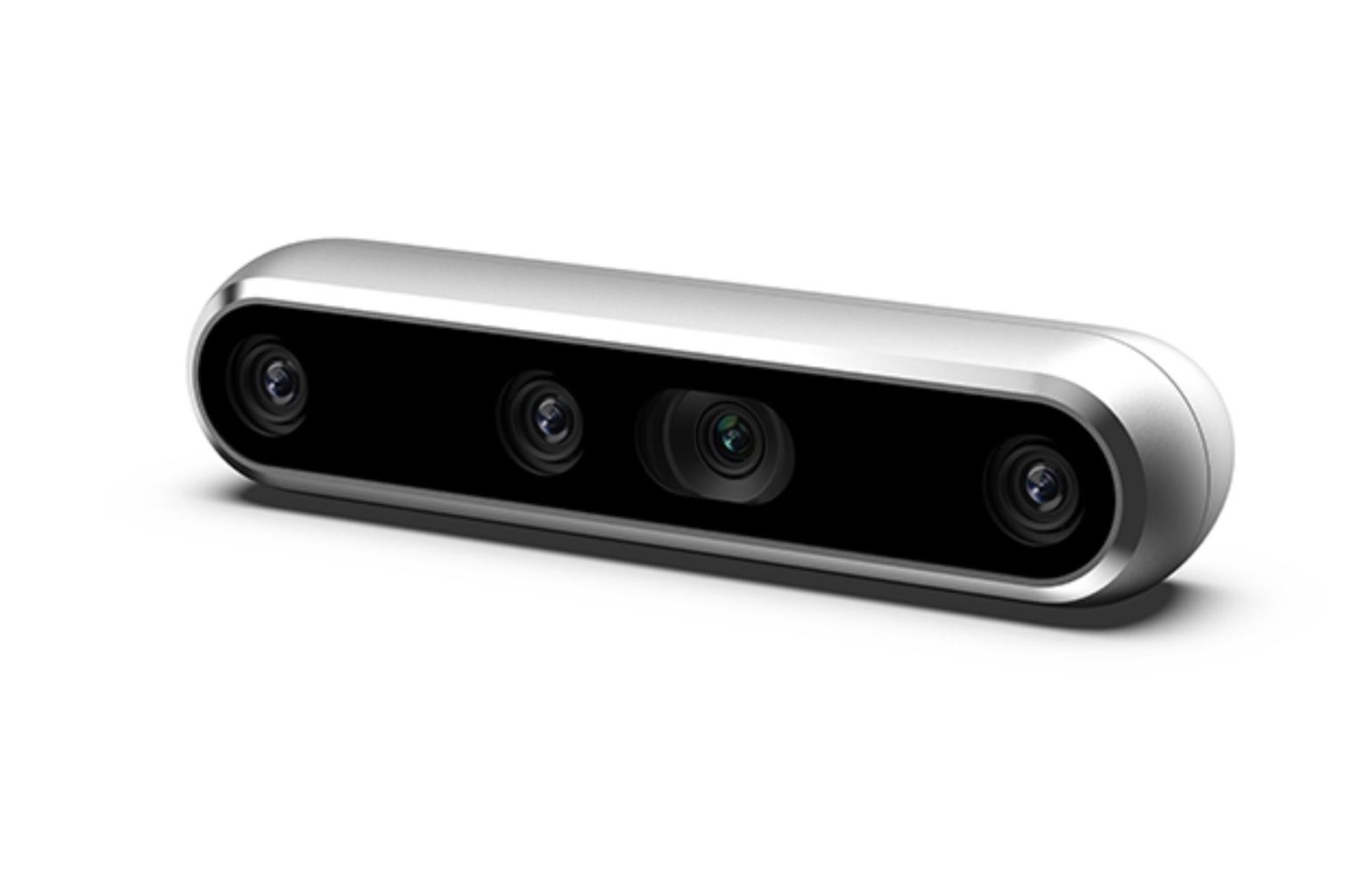}
        \caption{Intel RealSense}
        \label{subfig_sup:camera}
    \end{subfigure}
    \hspace{0.02\textwidth}
    \begin{subfigure}[b]{0.3\textwidth}
        \centering
        \includegraphics[width=\linewidth]{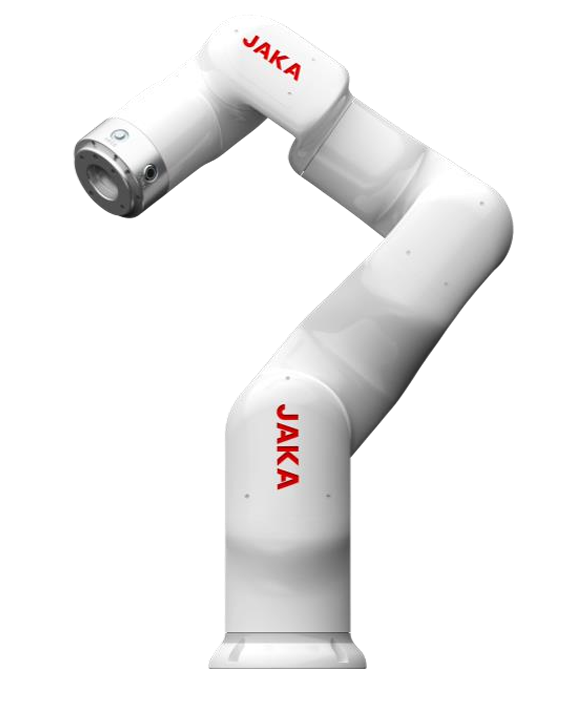}
        \caption{JAKA minicobo Robot}
        \label{subfig_sup:arm}
    \end{subfigure}
    \hspace{0.02\textwidth}
    \begin{subfigure}[b]{0.3\textwidth}
        \centering
        \includegraphics[width=\linewidth, angle=-90]{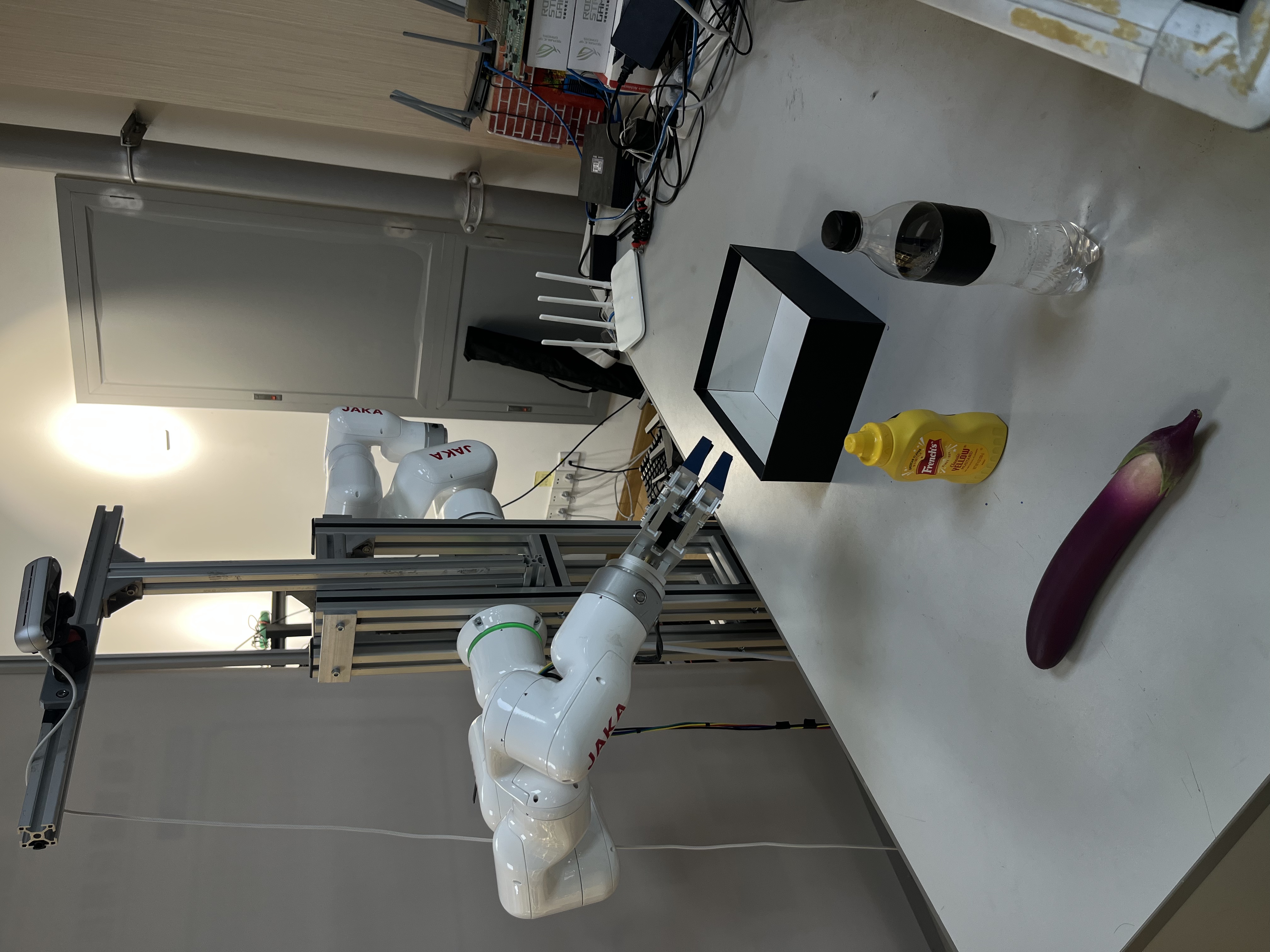}
        \caption{Setup}
        \label{subfig_sup:setup}
    \end{subfigure}

    \caption{Real-world experiment hardware setup. 
    (a) The RGB-D camera used for perception. 
    (b) The robot arm used for manipulation. 
    (c) The overall workspace configuration, where three types of target objects are placed on the table and the task is to place them into the black box.}
    \label{fig_sup:real_world_hardware}
\end{figure}

\begin{figure}[htbp]
    \centering
    \begin{subfigure}{0.9\textwidth}
        \centering
        \includegraphics[width=\textwidth]{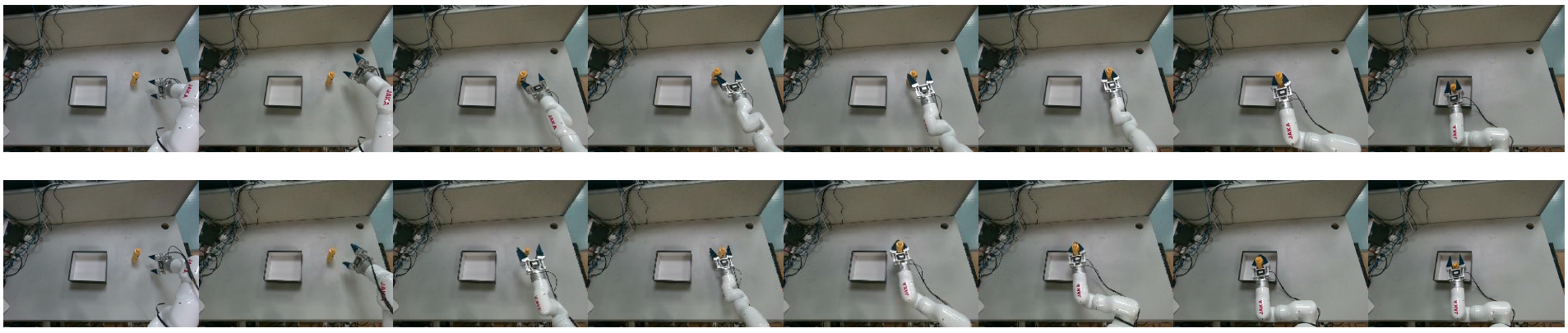}
        \caption{
        \textbf{Task-1} (in-distribution): matched conditions with the collected demonstrations. 
        Both OpenVLA + SFT and OpenVLA + FAN-SFT successfully place the target object into the box.
        }
        \label{fig_sup:real_task1}
    \end{subfigure}

    \vspace{0.6em}

    \begin{subfigure}{0.9\textwidth}
        \centering
        \includegraphics[width=\textwidth]{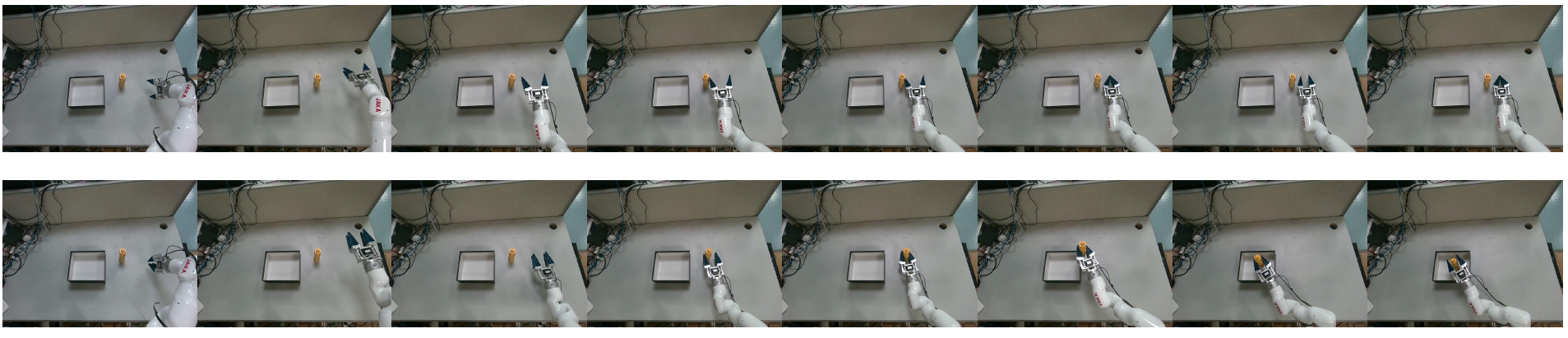}
        \caption{
        \textbf{Task-2}: perturbed initial object pose. 
        OpenVLA + SFT fails to pick up the target object, while OpenVLA + FAN-SFT succeeds under the same perturbation.
        }
        \label{fig_sup:real_task2}
    \end{subfigure}

    \vspace{0.6em}

    \begin{subfigure}{0.9\textwidth}
        \centering
        \includegraphics[width=\textwidth]{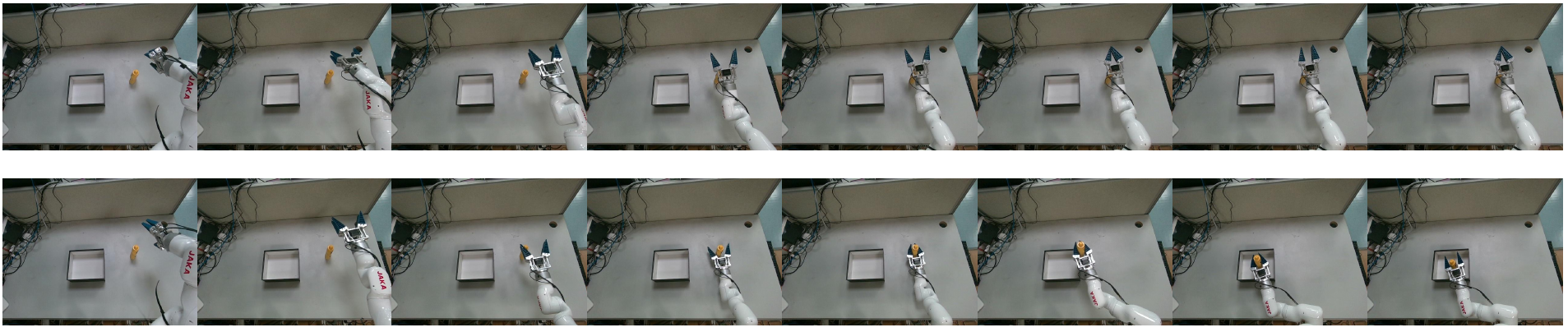}
        \caption{
        \textbf{Task-3}: perturbed initial manipulator pose. 
        OpenVLA + SFT fails to pick up the target object, whereas OpenVLA + FAN-SFT succeeds under the same perturbation.
        }
        \label{fig_sup:real_task3}
    \end{subfigure}

    \vspace{0.6em}

    \begin{subfigure}{0.9\textwidth}
        \centering
        \includegraphics[width=\textwidth]{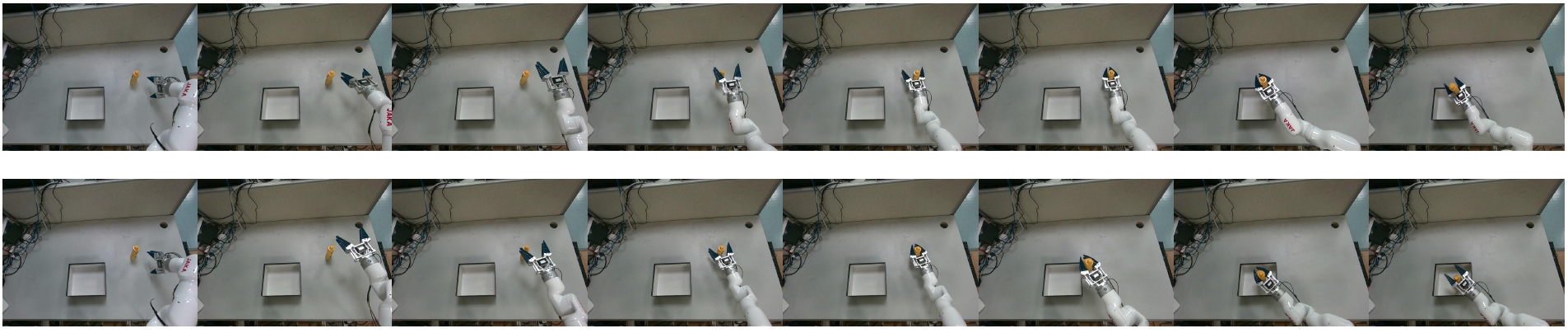}
        \caption{
        \textbf{Task-4}: perturbed initial box position. 
        OpenVLA + SFT fails to place the target object into the box, while OpenVLA + FAN-SFT succeeds under the same perturbation.
        }
        \label{fig_sup:real_task4}
    \end{subfigure}

    \caption{
    Real-world rollouts on four object-in-box manipulation tasks.
    In each subfigure, the top row shows time-ordered key frames of trajectories generated by the baseline OpenVLA + SFT policy, 
    and the bottom row shows trajectories generated by our proposed OpenVLA + FAN-SFT policy.
    Task-1 corresponds to the in-distribution setting with matched demonstration conditions, where both methods succeed.
    Tasks~2--4 introduce different real-world perturbations (object pose, manipulator pose, and box position, respectively), 
    under which the baseline policy fails while our method remains successful.
    }
    \label{fig_sup:real_tasks_all}
\end{figure}

\end{document}